\documentclass[onefignum,onetabnum]{siamonline220329}



\usepackage{lipsum}
\usepackage{amsfonts}
\usepackage{graphicx}
\usepackage{epstopdf}
\usepackage{algorithmic}
\usepackage{mathtools}
\usepackage{mathrsfs}
\usepackage{subfigure}
\usepackage{multirow}
\usepackage{float}
\ifpdf
  \DeclareGraphicsExtensions{.eps,.pdf,.png,.jpg}
\else
  \DeclareGraphicsExtensions{.eps}
\fi

\usepackage{enumitem}
\setlist[enumerate]{leftmargin=.5in}
\setlist[itemize]{leftmargin=.5in}


\newsiamremark{remark}{Remark}
\newsiamremark{hypothesis}{Hypothesis}
\crefname{hypothesis}{Hypothesis}{Hypotheses}
\Crefname{ALC@unique}{Line}{Lines}
\newsiamthm{claim}{Claim}

\headers{Generalized Opponent Transformation Total Variation}{Zhantao Ma, Michael K. Ng}

\title{Multispectral Image Restoration 
by Generalized Opponent Transformation Total Variation\thanks{Submitted to the editors March 19, 2024.
\funding{This work was funded by HKRGC GRF 12300519, 17201020 and 17300021, HKRGC CRF C1013-21GF
and C7004-21GF, and Joint NSFC and RGC N-HKU769/21.}}}

\author{Zhantao Ma\thanks{Department of Mathematics, The University of Hong Kong, Pokfulam, Hong Kong (mazhantao@connect.hku.hk).}
\and Michael K. Ng\thanks{Department of Mathematics, Hong Kong Baptist University, Kowloon Tong, Hong Kong (michael-ng@hkbu.edu.hk).}
}

\usepackage{amsopn}


\ifpdf
\hypersetup{
}
\fi




\begin{document}

\maketitle

\begin{abstract}
Multispectral images (MSI) contain light information in different wavelengths of objects, which convey spectral-spatial information and help improve the performance of various image processing tasks. Numerous techniques have been created to extend the application of total variation regularization in restoring multispectral images, for example, based on channel coupling and adaptive total variation regularization. The primary contribution of this paper is to propose and develop a new multispectral total variation regularization in a generalized opponent transformation domain instead of the original multispectral image domain. Here opponent transformations for multispectral images are generalized from a well-known opponent transformation for color images. We will explore the properties of generalized opponent transformation total variation (GOTTV) regularization and the corresponding optimization formula for multispectral image restoration. To evaluate the effectiveness of the new GOTTV method, we provide numerical examples that showcase its superior performance compared to existing multispectral image total variation methods, using criteria such as MPSNR and MSSIM.
\end{abstract}

\begin{keywords}
image restoration, total variation, opponent transformation, multispectral image
\end{keywords}

\begin{MSCcodes}
65F22, 68U10, 35A15, 65K10, 52A41
\end{MSCcodes}

\section{Introduction}
\label{sec:Introduction}
Images serve as crucial information carriers, exhibiting various forms such as grayscale, color, and multispectral images. Among them, multispectral images (MSI) capture individual images at specific wavelengths, typically spanning more than two and often dozens of wave numbers across the electromagnetic spectrum \cite{said2016multispectral}\cite{amigo2020hyperspectral}.
MSI contain a wealth of spectral-spatial information, encompassing ranges from the visible spectrum to infrared and ultraviolet ranges. This characteristic renders multispectral imaging particularly advantageous for applications in the food industry \cite{jaillais2012identification}. Furthermore, the moderate number of spectra in multispectral images, as compared to hyperspectral images, makes them well-suited for online processing tasks \cite{elmasry2019recent}, including online classification \cite{sendin2018classification}, etc.  

Despite their benefits, multispectral images often suffer from noise and blurring caused by internal factors, such as thermal effects, and external factors, such as insufficient lighting \cite{peng2014decomposable} and aerodynamic turbulence \cite{moffat1969theoretical}. These interferences significantly degrade image quality and have a detrimental impact on subsequent applications, including target detection \cite{reddy2020multispectral} and classification \cite{akar2012classification}.

The total variation (TV) grayscale image restoration model, initially proposed by Rudin, Osher, and Fatemi \cite{rudin1992nonlinear}, has gained considerable attention from researchers due to its ability to preserve image boundaries while effectively removing noise.
For a grayscale image $V\in\mathbb{R}^{m\times n}$ degraded by blurring with kernel $K\in\mathbb{R}^{m\times n}$ and i.i.d. additive Gaussian white noise, the total variation image restoration model is shown as follows: 
\begin{equation}
    U^{\ast}=\underset{U}{\text{argmin}}\left\{ TV(U)+\frac{\lambda}{2}\left\| K \star U-V\right\|^{2}\right\},
		TV(U):=\sum_{i=1}^{m}\sum_{j=1}^{n}\sqrt{(D_{x} U(i,j))^{2}+(D_{y} U(i,j))^{2}},
\end{equation}
where $\| \cdot \|$ is the Euclidean norm, $\star$ is the circular convolution, and 
$$
D_{x} U(i,j)= U(i,j+1)-U(i,j), \quad 
D_{y} U(i,j)= U(i+1,j)-U(i,j), 
$$
refer to the difference between two neighborhood pixels
in $x$-spatial and $y$-spatial directions under periodic boundary conditions.
The restoration process can be achieved by solving the minimization problem formulated by the above equation. 

For multispectral images ${\bf U} \in \mathbb{R}^{m\times n\times d}$ with $d$ channels, 
Chan et al. extended TV regularization to vectorial total variation (VTV) \cite{bresson2008fast} by coupling channel locally \cite{aastrom2016double}:
\begin{equation}
VTV( {\bf U}) :=\sum_{i=1}^{m}\sum_{j=1}^{n}
    \sqrt{\sum_{k=1}^{d} (D_{x} {\bf U}(i,j,k))^{2}+(D_{y} {\bf U}(i,j,k))^{2}}.
\end{equation}
However, the recovery result of this coupling method suffers from insufficient smoothness in homogeneous areas, i.e., shimmering \cite{aastrom2016double}. Later, Yuan et al. proposed the spectral-spatial adaptive TV (SSAHTV) by adding a pixel-by-pixel adaptive weight into VTV, which aims at keeping it smooth in a homogeneous area and sharp in the edge \cite{yuan2012hyperspectral}:
\begin{equation}
\label{SSAHTV}
SSAHTV(\mathbf{U}) :=\sum_{i=1}^{m}\sum_{j=1}^{n} W(i,j)
    \sqrt{\sum_{k=1}^{d} (D_{x}\mathbf{U}(i,j,k))^{2}+(D_{y}\mathbf{U}(i,j,k))^{2}},
\end{equation}
where 
$$
W(i,j)=\frac{G(i,j)}{\overline{G}(i,j)},\quad 
G(i,j)=\frac{1}{1+\mu\sqrt{\sum_{k=1}^{d} (D_{x}\mathbf{V}(i,j,k))^{2}+(D_{y}\mathbf{V}(i,j,k))^{2}}},
$$
and 
$\overline{G}(i,j)$ is mean value of $G(i,j)$. 

The anisotropic spectral-spatial TV (ASSTV) discovers the connection between spectra in the viewpoint of spectral smoothness. It performs the forward difference along both the spatial and spectrum direction \cite{chang2015anisotropic}. The requirement of spectral smoothness helps retain smoothness in a homogeneous area:
\begin{equation}
    ASSTV({\bf U}) := \sum_{i=1}^{m}\sum_{j=1}^{n}
		\sum_{k=1}^{d}|D_{x} {\bf U}(i,j,k)|+|D_{y}
		{\bf U}(i,j,k)|+|D_{f} {\bf U}(i,j,k)|,
\end{equation}
where $D_{f} {\bf U}(i,j,k)= {\bf U}(i,j,k+1)
- {\bf U}(i,j,k)$
refers to the difference between two neighborhood pixels
in the frequency direction. However,
the following experimental part shows some restoration results that cannot preserve texture well.
     
For color images, a special form of MSI, the saturation-value total variation 
restoration model proposed by Jia et al.
\cite{jia2019color} performs better than VTV in terms of color fidelity and reducing shimmering artifacts. SVTV utilizes the saturation and value information to perform restoration, which has the ability in well detecting the edge \cite{denis2007spatial}. From a different perspective, the saturation and value functions convey information about the color pixels in the opponent space. 
It is worth noting that numerous color image processing methods mimic the opponent pattern observed in the human visual system (HVS) to achieve improved results \cite{muhammad2011opponent}. 
In the opponent color space, the opponent 
transformation \cite{lammens1994computational}
is applied to obtain new channels, i.e.,
the difference between pixel values in red and green channels (red-green
channel), the sum 
of the difference between pixel values in the red and blue channel 
and the difference between pixel values in the green and blue channel 
(yellow-blue channel), and the average of pixel values in red, green
and blue channels. Indeed, the SVTV can be viewed as performing VTV restoration in the 
opponent color space shown in \cite{jia2019color}. By operating in this color space, SVTV leverages the properties of the opponent color representation to enhance color image restoration, resulting in improved overall performance compared to traditional TV-based methods.

In the context of multispectral images, which comprise multiple closely related channels, our objective is to propose opponent multispectral spaces that share similarities with the opponent color space. The fundamental idea revolves around combining the differences between pixel values in various multispectral channels (analogous to the red-green and yellow-blue channels in color images) and the average pixel values across all multispectral channels.

Our paper illustrates how to construct a generalized opponent transformation matrix with dimensions of $d$-by-$d$, capable of transforming multispectral images with $d$ channels into a generalized opponent domain. We further investigate the properties of such a generalized opponent transformation. Using this transformation, we establish an optimization model for multispectral image restoration. To evaluate the proposed multispectral image restoration model, we present numerical examples demonstrating its superior performance compared to existing multispectral image total variation methods. We assess the model's effectiveness based on criteria such as mean peak signal-to-noise ratio (MPSNR) and mean structural similarity index (MSSIM), showcasing its ability to yield improved restoration results.

The outline of this paper is given as follows. In \cref{sec:Related total variation regularization}, we review saturation value total variation regularization. 
In \cref{sec:Generalized Opponent Transformations}, we define the generalized opponent transformation 
matrices for multispectral images. Also, we propose the corresponding generalized opponent transformation MSI restoration model.  
In \cref{sec:Numerical Experiments}, we report numerical examples to demonstrate
the proposed model. Finally, some concluding 
remarks are given in \cref{sec:Conclusion}.

\section{Opponent Transformation for 
Total Variation Regularization}
\label{sec:Related total variation regularization}
\subsection{Opponent transformation in SVTV}

Let's review the SVTV model for color images and its connection with the opponent color space and opponent transformation. Let us begin with a symbolic clarification: bold uppercase letters, $\mathbf{U}$, denote tensors, while uppercase letters represent matrices: $U$. For $\mathbf{U}(i, j)$, the notation signifies a vector composed of the values of the tensor at the $(i, j)$ position within its third dimension, while $\mathbf{U}(i, j,k)$ denotes the value of the tensor at the $(i, j,k)$ position. Occasionally, $U_1, ..., U_n$ is employed to denote the matrices corresponding to the first through nth channels of the tensor $\mathbf{U}$. The SVTV \cite{jia2019color} is defined as the total variation based on 
the saturation and value components in a color image
${\bf U} \in \mathbb{R}^{m\times n\times 3}$: 
{\small
\begin{equation}
\label{SVTVintro2}
\begin{aligned}
SVTV( {\bf U}) :=&  
\sum_{i=1}^{m}\sum_{j=1}^{n}
\sqrt{ SAT( D_x {\bf U}(i,j))^{2} + SAT ( D_y {\bf U}(i,j))^{2} } \\
&+
\alpha 
\sqrt{ VAL( D_x {\bf U}(i,j))^{2} + VAL ( D_y {\bf U}(i,j))^{2} }, 
\end{aligned}
\end{equation}
}
where $\alpha$ is a positive number to balance 
the saturation component and the value component 
in the regularization term, and
\begin{equation}
\label{satandval}
\begin{aligned}
&SAT( D_x {\bf U}(i,j)) :=   
    \sqrt{ 
	{\bf \Phi}_x(i,j,1)^2 +
{\bf \Phi}_x(i,j,2)^2},\quad VAL( D_x {\bf U}(i,j)) := | {\bf \Phi}_x(i,j,3)|,\\
&SAT( D_y {\bf U}(i,j)) :=   
    \sqrt{ 
	{\bf \Phi}_y(i,j,1)^2 +
{\bf \Phi}_y(i,j,2)^2}
,\quad
VAL( D_y {\bf U}(i,j)) := | {\bf \Phi}_y(i,j,3)|,
\end{aligned}
\end{equation}
\begin{equation}
\label{SVTVintro3}
\left [
\begin{array}{c}
{\bf \Phi}_z(i,j,1) \\
{\bf \Phi}_z(i,j,2) \\
{\bf \Phi}_z(i,j,3) \\
\end{array}
\right ] = Q_{1} \left [
\begin{array}{c}
D_z  {\bf U}(i,j,1) \\
D_z  {\bf U}(i,j,2) \\
D_z  {\bf U}(i,j,3) \\
\end{array}
\right ] \quad (z = x \ {\rm or} \ y),
\end{equation}
and 
\begin{equation}  \label{opppatt1}
Q_{1} =
\begin{bmatrix}
    \frac{1}{\sqrt{2}}&-\frac{1}{\sqrt{2}}&0\\
    \frac{1}{\sqrt{6}}&\frac{1}{\sqrt{6}}&-\frac{2}{\sqrt{6}}\\
    \frac{1}{\sqrt{3}}&\frac{1}{\sqrt{3}}&\frac{1}{\sqrt{3}}
    \end{bmatrix}.
			\end{equation}
The z-direction forward difference of saturation components (${\bf \Phi}_{z}(\cdot,\cdot,1)$ and 
${\bf \Phi}_{z}(\cdot,\cdot,2)$) 
and the value component (${\bf \Phi}_{z}(\cdot,\cdot,3)$) 
in the opponent color space are obtained by applying
$Q_{1}$ to $D_{z} {\bf U}$ with $z = x \ {\rm or} \ y$. We note that the first row of $Q_{1}$ refers to taking the difference 
between red and green channels, and the second row
of $Q_{1}$ refers to taking the sum of the 
differences between red and blue channels and between 
green and blue channels. We call the first two rows of $Q_{1}$ having the opponent structure since the first two rows of $Q_{1}$ decorrelate the opponent relationship between channels. The last row of $Q_{1}$ refers to taking the average of red, green, and blue 
channels, and we call the last row the average structure.
It is also clear that $Q_1$ is an orthogonal matrix, i.e., $Q_1^T Q_1 = Q_1 Q_1^T 
= I$.

In \cite{jia2019color}, it has been shown that the saturation component of 
$SVTV({\bf U})$ can also be expressed as follows:
\begin{equation}
\label{SVTVintro2a}
\begin{aligned}
&\sum_{i=1}^{m}\sum_{j=1}^{n}
\sqrt{ SAT( D_x {\bf U}(i,j))^{2} + SAT ( D_y {\bf U}(i,j))^{2} } \\
=&\sum_{i=1}^{m}\sum_{j=1}^{n} \frac{1}{3}
\sqrt{  \| C D_x {\bf U}(i,j)) \|^2 +  \| C D_y {\bf U}(i,j)) \|^2 },
\end{aligned}
\end{equation}
where 
\begin{equation}
\label{Cmatrix}
C=\begin{bmatrix}
  2 & -1 & -1 \\
-1 & 2 & -1 \\
-1 & -1 & 2 \\
    \end{bmatrix}.
    \end{equation}
Here $Q_1$ are eigenvectors of $C$ corresponding to the eigenvalues of $C$:
3, 3 0, respectively, i.e.,
\begin{equation} \label{diagC}
C Q_1^T = Q_1^T \Lambda, \quad {\rm where} \ 
\Lambda = \left [ 
\begin{array}{ccc}
3 & 0 & 0 \\
0 & 3 & 0 \\
0 & 0 & 0 \\
\end{array}
\right].
\end{equation}

These results establish the relationship between the saturation component of $SVTV$ and the matrix $C$ and further connect it to the eigenvectors and eigenvalues of $C$ represented by $Q_1$ and $\Lambda$, respectively. 

Also, the aforementioned equation \cref{SVTVintro2a} offers an explanation from the perspective of the human visual color system as to why SVTV using opponent transformation can effectively eliminate shimmering in noisy images. Specifically, given $\boldsymbol{\mu}$=(1/$\sqrt{3}$,1/$\sqrt{3}$,1/$\sqrt{3}$)$^{T}$, which serves as the grayscale axis for a color image, being a unit vector with equal RGB values. We know that in SVTV, SAT and VAL utilize the opponent information and average information from the opponent transformation, respectively. By leveraging the auxiliary intermediate step derived using the matrix $C$ mentioned in equation \cref{SVTVintro2a,Cmatrix}, we can easily obtain the following results expressed by $\boldsymbol{\mu}$:
\begin{equation}
    \label{SVTVexpla1}
    \begin{aligned}
            SAT(D_{z}&\mathbf{U}(i,j))=\frac{1}{3}\| CD_{z}\mathbf{U}(i,j)\|=\|D_{z}\mathbf{U}(i,j)\cdot\boldsymbol{\mu}\boldsymbol{\mu}-D_{z}\mathbf{U}(i,j)\|,\\
            &VAL(\mathbf{U}(i,j))=\|D_{z}\mathbf{U}(i,j)\cdot\boldsymbol{\mu}\|\ (z=x\text{ or }y).
    \end{aligned}
\end{equation}

Therefore, the geometric interpretations of SAT and VAL are the distance to $\boldsymbol{\mu}$ and the projection length onto $\boldsymbol{\mu}$, respectively. Consequently, SAT quantifies how far a color deviates from neutral gray, representing the richness of the color. In other words, it represents a similar meaning to the amount of colorfulness in the human visual system (saturation). Additionally, it is evident that VAL denotes a similar meaning to the brightness of the color in the human visual system (value). In conclusion, the opponent transformation can express both the richness and brightness information of a color. This meaning elucidates why SVTV suppresses the shimmering phenomenon. Obviously, shimmering patches caused by noise lead to significant differences in color richness and brightness compared to the surrounding areas. This results in higher SVTV values. Therefore, minimizing the SVTV effectively removes shimmering patches from the image. 

In another angle, the color image is transformed into the opponent space in SVTV. This space helps decode the opponent's relationship between colors. This concept finds similarity with the opponent pattern observed in the human visual system (HVS), where a similar opponent process is employed as a second-stage procedure to aid in color perception as pointed out in \cite{lammens1994computational} and \cite{gose1996pattern}. 
Rao et al. \cite{muhammad2011opponent} imitated the opponent process in the human visual system and utilized the opponent transformation of digital color images, as shown in \cref{SVTVintro3}, to obtain better detection results. These results demonstrate the usefulness of opponent transformations and provide insight to explain why the SVTV model performs well for color image restoration by using the opponent transformation.

\subsection{Other opponent transformations}

Another interesting observation is the existence of alternative opponent transformations that can be utilized.
In \cite{ono2014decorrelated}, Ono and Yamada proposed another opponent transformation matrix for decorrelating color images. The opponent transformation matrix is
given by,
\begin{equation}
\label{opppatt2}
Q_{2}=\begin{bmatrix}
    \frac{1}{\sqrt{2}}&0&-\frac{1}{\sqrt{2}}\\
    \frac{1}{\sqrt{6}}&-\frac{2}{\sqrt{6}}&\frac{1}{\sqrt{6}}\\
    \frac{1}{\sqrt{3}}&\frac{1}{\sqrt{3}}&\frac{1}{\sqrt{3}}
    \end{bmatrix}.
    \end{equation}
    
			It is clear that the opponent pattern of $Q_2$ is different from that of $Q_1$. We note that the first row of $Q_2$ refers to taking the difference between red and blue
channels, and the second row of $Q_2$ refers to the sum of the differences between red and green
channels and between blue and green channels. In \cite{plataniotis2000color}, it is mentioned that this opponent pattern can
provide useful features for color image processing. 
		 Also, we observe that $Q_2$ can be obtained by 
swapping the second column and the third 
column of $Q_1$. The last row of $Q_{2}$ 
(the average of red, green, and blue 
channels)
is the same as that of 
$Q_1$. Note that $Q_2$ is orthogonal and $Q_2^T$ can diagonalize $C$ 
in \cref{Cmatrix}, i.e., $ C Q_2^T = Q_2^T \Lambda$
that is similar to \cref{diagC}.

Similarly, we can construct the following matrix:
\begin{equation}
\label{opppatt3}
Q_{3}=\begin{bmatrix}
    0&-\frac{1}{\sqrt{2}}&\frac{1}{\sqrt{2}}\\
    -\frac{2}{\sqrt{6}}&\frac{1}{\sqrt{6}}&\frac{1}{\sqrt{6}}\\
    \frac{1}{\sqrt{3}}&\frac{1}{\sqrt{3}}&\frac{1}{\sqrt{3}}
    \end{bmatrix},
    \end{equation}
by swapping the first column of $Q_1$ and 
the third column of $Q_1$. 
Here the first row of $Q_3$ refers to taking the difference between blue and green
channels, and the second row of $Q_3$ refers to taking the sum of the differences 
between green and red
channels and between blue and red channels. 
It is clear that the opponent structure of $Q_3$ is 
different from that by $Q_1$ and $Q_2$.
Note that $Q_3^T$ can diagonalize $C$ 
in \cref{Cmatrix}, i.e., $ C Q_3^T = Q_3^T \Lambda$
that is similar to \cref{diagC}.

We remark that when we swap the first column of $Q_1$ and 
the second column of $Q_1$, we obtain the same set of eigenvectors as $Q_1$ 
except for the minus sign in the first row. Similarly, by exchanging the first and third columns of $Q_2$ or the second and third columns of $Q_3$, we obtain the same set of eigenvectors as $Q_2$ or $Q_3$, respectively, with the only difference being the sign change in the first row. The difference of sign in the first row remains the same two columns for comparison. Consequently, we conclude that three opponent transformation matrices, namely $Q_1$, $Q_2$, and $Q_3$, are applicable in the saturation-value total variation regularization for color image restoration.

\section{Generalized Opponent Transformations}
\label{sec:Generalized Opponent Transformations}
The main aim of this section is to generalize the opponent transformation
to MSI images with $d$ channels. 
Similar to $Q_1$, $Q_2$ and $Q_3$, we can construct 
a $d$-by-$d$ opponent transformation matrix $Q$ such that
$Q$ is orthogonal and the last row of $Q$ refers to taking the average 
of all channel values:
\begin{equation} \label{last}
\left [ \frac{1}{\sqrt{d}}, \frac{1}{\sqrt{d}}, 
\cdots, \frac{1}{\sqrt{d}} \right ],
\end{equation}
while the other rows of $Q$ refer to taking the sum of the differences 
between two channels.
More precisely, we impose the following three conditions on 
the first $(d-1)$ rows of $Q$.
\begin{itemize}
\item[(G1)] The row sum of each row is equal to 0;
\item[(G2)] There is only one entry of negative value in each row;
\item[(G3)] There are exactly $i$ entries with the same positive value in $i$-th row for 
$1 \le i \le d-1$.
\end{itemize} 
It is easy to check that the first two rows of 
$Q_1$, $Q_2$ and $Q_3$ satisfy (G1), (G2) and (G3) for $d=3$.
Note that with these conditions, the first $(d-1)$ rows of $Q$ provide us with the opponent structure 
among $d$ channels. In (G1), we set the summation of channel values to be zero 
so that some channel values are set to be positive and some channel values are set to be
negative. In (G2) and (G3), we follow the idea of $Q_1$, $Q_2$, and $Q_3$, and consider
the differences of channel values. The following theorem states the characterization of 
orthogonal matrices satisfying (G1), (G2), and (G3).

\begin{theorem}
\label{T3.1}
The set $\mathcal{Q}_{d}$
of $d$-by-$d$ orthogonal matrices satisfying (G1), (G2), (G3), and the last row given
by (\ref{last}) is equal to 
$$
\{ Q = BP: P \ \text{are permutation matrices obtained by 
    permuting columns of the $I$} \},
    $$
where $B$ is a $d$-by-$d$ orthogonal matrix whose last row is the same as (\ref{last}) and its entries for former $d-1$ rows are given by
$$
[B]_{i,j} = 
\left \{
\begin{array}{ll}
\frac{1}{ \sqrt{ i(i+1) } }, & \quad j \le i, \\
- \frac{i}{ \sqrt{  i(i+1) } }, & \quad j = i+1, \\
0, & \quad j > i + 1, \\
\end{array}
\right.
$$
i.e., 
\begin{equation}
\label{sfmnf}
B = \begin{bmatrix}
\frac{1}{\sqrt{2}} & -\frac{1}{\sqrt{2}} & 0 & \dots  &  0 \\
\frac{1}{\sqrt{6}} & \frac{1}{\sqrt{6}} & -\frac{2}{\sqrt{6}} & \dots    & 0 \\
\vdots & \vdots & \vdots & \ddots &  \vdots \\
\frac{1}{\sqrt{d(d-1)}} & \frac{1}{\sqrt{d(d-1)}} & \frac{1}{\sqrt{d(d-1)}} & \dots    & -\frac{d-1}{\sqrt{d(d-1)}}\\
\frac{1}{\sqrt{d}} & \frac{1}{\sqrt{d}} & \frac{1}{\sqrt{d}} & \dots   &  \frac{1}{\sqrt{d}}
\end{bmatrix}.
\end{equation}
We treat two matrices that are identical except for the first row with a different sign as the same element. The total number of elements in $\mathcal{Q}_{d}$ equals $d!/2$. 
\end{theorem}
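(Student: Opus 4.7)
The plan is to prove both inclusions in the claimed set equality and then carry out the count.

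For the easy direction, observe that $B$ defined in (\ref{sfmnf}) satisfies (G1)--(G3) by inspection: its $i$-th row has $i$ equal positive entries $1/\sqrt{i(i+1)}$ in columns $1,\ldots,i$, a single negative entry $-i/\sqrt{i(i+1)}$ in column $i+1$, and zeros thereafter, with row sum zero. A direct computation gives $BB^T = I$. Since (G1)--(G3), orthogonality, and the form of the last row (whose entries are all equal) are invariant under right multiplication by any permutation matrix $P$, every product $BP$ lies in $\mathcal{Q}_d$.

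For the reverse inclusion, let $Q \in \mathcal{Q}_d$, and for each $1 \le i \le d-1$ write $S_i$ for the set of columns carrying the common positive value $a_i$ of row $i$, $n_i$ for the column of its unique negative entry (of magnitude $b_i$), and $T_i := S_i \cup \{n_i\}$. Orthogonality of row $i$ with the last row together with (G1) forces $b_i = i a_i$, and unit row norm then fixes $a_i = 1/\sqrt{i(i+1)}$; thus the nonzero magnitudes in $Q$ already match those in $B$. Writing out the vanishing inner product of rows $i$ and $j$ for $i < j$ and dividing by $a_i a_j$ yields the identity $|S_i \cap S_j| = j[n_j \in S_i] + i[n_i \in S_j] - ij[n_i = n_j]$, where $[\cdot]$ is the Iverson bracket. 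A short case analysis on the three indicators, combined with the bound $|S_i \cap S_j| \le i < j$, leaves only two admissible configurations: either (B) $T_i \cap T_j = \emptyset$, or (D) $T_i \subset S_j$.

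Since $|T_{d-1}| = d$, the support $T_{d-1}$ equals $\{1,\ldots,d\}$, so Case (B) is excluded for every pair $(i, d-1)$, giving $T_i \subset S_{d-1}$ for all $i < d-1$. A descending induction on $j$ then shows $T_k \subset S_j$ for every $k < j$: at the inductive step within the ambient set $S_j$ of size $j$, the disjoint option (B) is impossible by cardinality, since $|T_k| + |T_{j-1}| = (k+1) + j > j$. Specializing $j = i+1$ and comparing cardinalities ($|T_i| = i+1 = |S_{i+1}|$) yields $T_i = S_{i+1}$ for $1 \le i \le d-2$. The supports thus form the nested chain $T_1 \subset T_2 \subset \cdots \subset T_{d-1}$ in which $T_{i+1} \setminus T_i = \{n_{i+1}\}$ is a single column at each step. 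Setting $\sigma(1)$ to be the unique element of $S_1$ and $\sigma(k+1) := n_k$ for $k = 1, \ldots, d-1$ defines a permutation $\sigma \in \mathfrak{S}_d$, and a direct comparison of supports and magnitudes against $B$ establishes $Q = B P_\sigma$ for the corresponding permutation matrix.

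For the count, the map $\sigma \mapsto B P_\sigma$ is a bijection from $\mathfrak{S}_d$ onto $\{BP : P \text{ a permutation matrix}\}$, producing $d!$ distinct matrices. Two such matrices are identified in the theorem, i.e., differ only in the sign of row $1$, precisely when their defining permutations are related by $\sigma \mapsto \sigma \circ (1\,2)$, which exchanges $\sigma(1)$ and $\sigma(2)$; this swap flips the sign of row $1$ (whose positive and negative positions are interchanged) while leaving every row $i \ge 2$ untouched, because those rows depend on $\{\sigma(1), \sigma(2)\}$ only as an unordered set and $\sigma(k)$ for $k \ge 3$ is unchanged. Pairing each $\sigma$ with $\sigma \circ (1\,2)$ partitions $\mathfrak{S}_d$ into $d!/2$ classes, giving $|\mathcal{Q}_d| = d!/2$. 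The technical heart of the argument is the case analysis that yields the dichotomy (B)/(D) and the descending induction propagating containment from $T_{d-1}$ down to each $T_i$; everything else reduces to routine verification and bookkeeping.
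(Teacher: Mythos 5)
Your proof is correct, and it reaches the same two milestones as the paper's argument by the same first step: both proofs use the unit row norm together with (G1)--(G3) to pin down the magnitudes $1/\sqrt{i(i+1)}$ and $-i/\sqrt{i(i+1)}$ before turning to positions. Where you diverge is in the combinatorial core. The paper fixes the column $j$ holding the negative entry of row $i$ and argues entrywise that $[Q]_{k,j}=0$ for $k<i$ and $[Q]_{k,j}>0$ for $k>i$ (the latter by a somewhat informal ``repeat the same argument'' descent), then concludes that the columns of $Q$ are a permutation of the columns of $B$. You instead work with row supports: the exact identity $|S_i\cap S_j| = j[n_j\in S_i] + i[n_i\in S_j] - ij[n_i=n_j]$ extracted from pairwise orthogonality, the resulting dichotomy (disjoint supports versus $T_i\subset S_j$), and the descending induction producing the nested chain $T_1\subset\cdots\subset T_{d-1}$ from which the permutation is read off directly. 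Your route buys a cleaner and more self-contained justification of the two steps the paper leaves sketchy --- the exclusion of zeros below the negative entry, and the claim that the $d!/2$ identification is exactly the pairing $\sigma\mapsto\sigma\circ(1\,2)$, which you verify in both directions --- at the cost of slightly more notation. The paper's column-centric version is shorter and makes the conclusion ``$Q$ is a column permutation of $B$'' visually immediate. Both are valid; yours would serve as a more rigorous drop-in replacement for the second half of the paper's proof.
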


Note that when $d=3$, $Q_1$ in \cref{opppatt1}
is the same as $B$ stated in the theorem. Also $Q_2$ in 
\cref{opppatt2} is obtained by permuting the second 
and the third columns of $Q_1$,
and $Q_3$ in 
(\ref{opppatt3}) is obtained by permuting the first and the second columns
of $Q_1$. It is clear that $Q_1$, $Q_2$ and $Q_3$ are the three elements in 
$\mathcal{Q}_{3}$.

\begin{proof}
It is clear that $Q=BP$ satisfies (G1), (G2), (G3), and the last row given by (\ref{last}). 

Now we would like to show that any element $Q$ in $\mathcal{Q}_{d}$
should be in the form of $BP$.
We note that the Euclidean norm of the $i$-th row is equal to 1.
By using (G1), (G2) and (G3),
the value of the negative entry of the $i$-th row should be equal to
$- \frac{i}{ \sqrt{  i(i+1) } }$, 
and the value of the positive entry of the $i$-th row should be equal to 
$\frac{1}{ \sqrt{ i(i+1) } }$.
It is clear that the value of the other entries of the $i$-th row is zero. Therefore,
the values and quantities of positive and negative elements in each row of $Q$ 
are consistent with $B$.

Next, we identify the value above or below the negative entry ($[Q]_{i,j}$) of
the $i$-th row. 
Let us consider $k$-th row of $Q$ where $1 \le k < i \le d-1$. 
We plan to show that $[Q]_{k,j} = 0$. 
Suppose $[Q]_{k,j} \ne 0$. 
Case (i) $[Q]_{k,j} < 0$, the inner product of the $k$-th and the $i$-th rows of $Q$ 
is positive because of the summation of $[Q]_{k,j} [Q]_{i,j} > 0$ and nonnegative values. 
Case (ii) $[Q]_{k,j} > 0$, the inner product of the $k$-th and the $i$-th rows of $Q$ 
is negative because $[Q]_{k,j} [Q]_{i,j} = -\frac{i}{\sqrt{i(i+1)}\sqrt{k(k+1)}}$ and the summation of nonnegative values $\leq \frac{k-1}{\sqrt{i(i+1)}\sqrt{k(k+1)}}$. 
Both cases contradict the fact that 
the $k$-th row and the $i$-th row are orthogonal (the inner product must be zero). 
Let us consider $k$-th row of $Q$ where $1 \le i < k \le d-1$. 
By the above argument, $[Q]_{k,j}$ cannot be negative. 
Here we would like to show that $[Q]_{k,j}$ must be positive, which is equivalent to excluding the 0-value case.
Firstly, when $k=d-1$, it is clear that $[Q]_{d-1,j}$ is positive. 
When $k=d-2$, there is only one entry to be zero. The column 
position of the zero entry in the $d-2$ row should be the same as the negative entry in rows $d-1$. 
Therefore, $[Q]_{d-2,j}$ cannot be zero. 
By repeating the same argument, we can show that 
$[Q]_{k,j}$ cannot be zero.
It follows that $[Q]_{k,j}$ must be positive.

Moreover, $B$ also exhibits the same property in terms of the entries in the rows above and below the negative entry. Combining it with the first conclusion, which tells us that the number and magnitude of positive and negative entries in each row of $B$ and $Q$ are the same. Therefore, it can be concluded that the set of column vectors in $B$ and $Q$ are identical, allowing us to obtain Q by permutating the columns of $B$. We treat two matrices that are identical except for the first row with a different sign as the same element. Then there are $d!/2$ permutation matrices obtained by 
permuting columns of the identity matrix. Hence the result follows.
\end{proof}

Next we show that each element in $\mathcal{Q}_{d}$ 
are eigenvectors of a special matrix given in the following theorem.
Remind that $Q_1$, $Q_2$ and $Q_3$ in  
\cref{opppatt1}, 
\cref{opppatt2} and 
\cref{opppatt3} respectively are eigenvectors of $C$ in \cref{Cmatrix}.

\begin{theorem}
\label{T3.2}
Let $C_{d}$ be a $d$-by-$d$ symmetric matrix given by 
\begin{equation}
    \begin{aligned}
    C_{d}= \begin{bmatrix}
d-1& -1 & \cdots& -1\\
-1 & d-1 & \cdots & -1\\
\vdots & & \ddots & \vdots \\
-1 & \cdots & -1 &d-1
\end{bmatrix},  
\end{aligned}
\end{equation} 
(i.e., the main diagonal entries of $C_{d}$ 
are $(d-1)$ and its off-diagonal entries are all -1.)
Then the eigenvalues of $C_{d}$ are $d$ (with $d-1$ algebraic multiplicities) and 0,
and each element in $\mathcal{Q}_{d}$ is the associated eigenvectors, i.e.,
\begin{equation} \label{diagCd}
C_d Q^T = Q^T \Lambda_d,
\end{equation}
where $\Lambda_d$ is a diagonal matrix with the main diagonal entries given by 
$(\underbrace{d,d,\cdots,d}_{d-1 \ {\rm values}},0)$.
\end{theorem}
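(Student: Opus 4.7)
The plan is to recognize that $C_d$ has the very simple algebraic form $C_d = d I_d - J_d$, where $J_d$ is the $d\times d$ all-ones matrix. Because $J_d = d\,\mathbf{e}\mathbf{e}^{T}$ with $\mathbf{e} = \frac{1}{\sqrt{d}}(1,1,\ldots,1)^{T}$, the matrix $J_d$ has rank one with nonzero eigenvalue $d$ corresponding to the eigenvector $\mathbf{e}$, and eigenvalue $0$ on the orthogonal complement $\mathbf{e}^{\perp}$. Shifting by $dI_d$ then gives that $C_d$ has eigenvalue $d - d = 0$ on span$(\mathbf{e})$ and eigenvalue $d - 0 = d$ on $\mathbf{e}^{\perp}$, so the eigenvalues are $d$ (with multiplicity $d-1$) and $0$, matching the claimed $\Lambda_d$.

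Next I would identify the columns of $Q^T$ with the rows of $Q$ and check that they are exactly the corresponding eigenvectors. The last row of any $Q\in\mathcal{Q}_d$ is $\mathbf{e}^{T}$ by the definition of $\mathcal{Q}_d$ in \cref{last}, so the last column of $Q^T$ is $\mathbf{e}$, which by the spectral analysis above satisfies $C_d \mathbf{e} = 0$. For the remaining $d-1$ columns, I would use orthogonality of $Q$: each of the first $d-1$ rows of $Q$ is orthogonal to its last row $\mathbf{e}^T$, and therefore lies in $\mathbf{e}^{\perp}$. By the spectral analysis, every vector in $\mathbf{e}^{\perp}$ is an eigenvector of $C_d$ with eigenvalue $d$. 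Collecting these into the columns of $Q^T$ produces the identity $C_d Q^T = Q^T \Lambda_d$ as in \cref{diagCd}.

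I note that condition (G1) in the definition of $\mathcal{Q}_d$ already asserts that the first $d-1$ rows of $Q$ have row sum zero, which is the same as being orthogonal to $\mathbf{e}$; this gives an alternative and slightly more direct verification that those rows live in $\mathbf{e}^{\perp}$, without even invoking the orthogonality of $Q$. Either route suffices.

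There is no real obstacle in this argument; the entire theorem is essentially the spectral decomposition of $dI_d - J_d$ together with the observation that the defining conditions on $\mathcal{Q}_d$ force precisely the right orthogonality pattern. The only thing one has to be a little careful about is counting multiplicities, but since $C_d$ is symmetric and the eigenspace $\mathbf{e}^{\perp}$ is genuinely $(d-1)$-dimensional, there is no ambiguity: all $d-1$ non-last rows of $Q$, being mutually orthogonal unit vectors in $\mathbf{e}^{\perp}$, furnish a full eigenbasis for the eigenvalue $d$.
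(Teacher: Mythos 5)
Your proposal is correct and is essentially the same argument as the paper's: the paper writes the eigenvalue equations $(C_d - dI)\mathbf{x}=0$ and $C_d\mathbf{x}=0$ directly and observes that the zero-row-sum condition (G1) on the first $d-1$ rows and the constant last row make those rows the respective solutions, which is exactly your observation that $C_d = dI_d - J_d$ acts as $d$ on $\mathbf{e}^{\perp}$ and as $0$ on $\mathrm{span}(\mathbf{e})$. Your version merely makes the rank-one decomposition and the multiplicity count more explicit.
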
  

\begin{proof}
It is straightforward to obtain the equation of eigenvalue $d$ and $0$, respectively:
\begin{equation}
\label{eigenequf}
\begin{aligned}
       \begin{bmatrix}
-1&\dots&-1\\
-1&\ddots&-1\\
-1&\dots&-1
\end{bmatrix} \mathbf{x}=0,\ \begin{bmatrix}
d-1&\dots&-1\\
-1&\ddots&-1\\
-1&\dots&d-1
\end{bmatrix} \mathbf{x}=0.
\end{aligned}
\end{equation}
The opponent structure and the average structure of $Q=BP$ make the former d-1 row vectors and its last row vector respectively the solutions of the first and second equation in (\ref{eigenequf}). It implies the equality: $C_d Q^T = Q^T \Lambda_d$ holds for any $Q\in\mathcal{Q}_{d}$.
\end{proof}

The matrix $C_{d}$ defined above shares several commonalities with $C$. $C_{d}$ and $C$ not only share the property of having all opponent transformations as their eigenbasis, but also exhibit consistent diagonal values and non-diagonal values and adhere to the properties of symmetry where the sum of each row equals zero. 

Moreover, \cref{connectCdAC} tells that $C_{d}$ can be decomposed as the sum of matrices whose elements are zero except one 3-order principal submatrix is $C$.
\begin{theorem}
    \label{connectCdAC}
\begin{equation}
\label{connectCdACeq}
C_{d}=\frac{1}{d-2}\sum_{H\in \mathcal{H}_{d}} H,
\end{equation}
\end{theorem}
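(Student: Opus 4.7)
The plan is to unwind the definition of $\mathcal{H}_d$ and evaluate the right-hand side entry by entry. Each element $H \in \mathcal{H}_d$ is indexed by a $3$-element subset $\{i_1, i_2, i_3\} \subset \{1, 2, \dots, d\}$: the principal submatrix of $H$ on rows/columns $\{i_1, i_2, i_3\}$ equals $C$, and all other entries of $H$ vanish. There are exactly $\binom{d}{3}$ such matrices. So I would first make this indexing explicit, writing
\[
\sum_{H \in \mathcal{H}_d} H = \sum_{\{i_1,i_2,i_3\} \subset \{1,\dots,d\}} H^{(i_1,i_2,i_3)}.
\]

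Next I would compute the $(p,q)$-entry of the sum by a counting argument, splitting into the diagonal and off-diagonal cases. For a diagonal index $p = q$: the only $H^{(i_1,i_2,i_3)}$ contributing to the $(p,p)$-entry are those whose index set contains $p$, and each contributes $2$ (since the diagonal entries of $C$ are $2$). The number of such triples is $\binom{d-1}{2} = \frac{(d-1)(d-2)}{2}$, giving a total contribution of $(d-1)(d-2)$. For an off-diagonal pair $p \ne q$: the only contributing triples are those containing both $p$ and $q$, each contributing $-1$; there are $d-2$ such triples, giving a total of $-(d-2)$.

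Putting these together, the entry-wise sum is $[(d-1)(d-2)] I + [-(d-2)](J - I) = (d-2) \bigl[(d-1)I - (J - I)\bigr]$, where $J$ is the all-ones matrix. Dividing by $d-2$ recovers the matrix with $d-1$ on the diagonal and $-1$ off the diagonal, which is precisely $C_d$. Hence $\frac{1}{d-2}\sum_{H \in \mathcal{H}_d} H = C_d$.

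I do not anticipate a genuine obstacle: the proof is a direct bookkeeping argument, and the only subtlety is to state the indexing of $\mathcal{H}_d$ cleanly (by $3$-element subsets rather than ordered triples, to avoid double-counting) and to keep the two counts $\binom{d-1}{2}$ and $d-2$ straight. The factor $\frac{1}{d-2}$ appears naturally as the count of triples containing a fixed pair of indices, which is a good sanity check that the normalization is correct.
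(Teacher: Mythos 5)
Your proposal is correct and follows essentially the same route as the paper's proof: an entry-wise counting argument showing that each diagonal position receives $2\binom{d-1}{2}=(d-1)(d-2)$ and each off-diagonal position receives $-(d-2)$, after which dividing by $d-2$ yields $C_d$. Your write-up is in fact slightly cleaner, since the uniform count also covers the $d=3$ base case that the paper handles separately.
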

where $\mathcal{H}_{d}$ collects matrices which are zero except one 3-order principal submatrix is $C$:
\begin{equation}
\mathcal{H}_{d}=\{H\in \mathbb{R}^{d\times d}|\begin{bmatrix}
H({l_{1},l_{1}}) & H({l_{1},l_{2}}) & H({l_{1},l_{3}}) \\
H({l_{2},l_{1}}) & H({l_{2},l_{2}}) & H({l_{2},l_{3}}) \\
H({l_{3},l_{1}}) & H({l_{3},l_{2}}) & H({l_{3},l_{3}}) 
\end{bmatrix}=C,l_{1}<l_{2}<l_{3}\}.
\end{equation}

\begin{proof}
    The proof contains three steps. First, when $d=3$, only $C$ in $\mathcal{H}_3$, so the equality holds. For $d>3$, the k-th diagonal value: $\sum_{H\in \mathcal{H}_{d}} H({k,k})$ equals to $(d-1)(d-2)$. Since there are $\begin{pmatrix}
     2     \\
     d-1    
    \end{pmatrix}$ number of $H$ takes value 2 at k-th diagonal position, i.e. $H({k,k})=2$. As for the non-diagonal value at the i-th row, j-th column: $\sum_{H\in \mathcal{H}_{d}} H({i,j})$ equals to -(d-2). Since there are $\begin{pmatrix}
     1     \\
     d-2    
    \end{pmatrix}$ number of $H$ takes value -1 at the i-th row, j-th column, i.e. $H({i,j})=-1$.
\end{proof}


In the next subsection, we will use $B$ and $C_d$ to construct  
generalized opponent transformation total variation regularization
for multispectral images. Also, we will use \cref{T3.2,connectCdAC} to demonstrate why our GOTTV restoration model performs well.

\subsection{Generalized Opponent 
Transformation Total Variation}

As mentioned in \cref{sec:Introduction}, the TV regularization term is effective in suppressing noise while preserving the boundary information of an image. Here we propose applying the TV regularization term to the multispectral space by using generalized opponent transformations $BP \in {\cal Q}_d$, and the resulting regularization term is called 
a Generalized Opponent Transformation Total Variation (GOTTV) regularization. This approach aims at utilizing the benefit of the TV to preserve the image boundaries while effectively exploiting the opponent information in multispectral images. 
For simplicity, we employ the generalized opponent transformation $B$
in the following discussion.
The GOTTV regularization is defined as follows:
\begin{equation}
\label{GOTTVdef}
    \begin{aligned}
&GOTTV(\mathbf{U})\\
&\coloneqq\sum_{i=1}^{m}\sum_{j=1}^{n}\sqrt{\left\| [\mathbf{b}_{1}D_{x}\mathbf{U}(i,j),\dots,\mathbf{b}_{d-1}D_{x}\mathbf{U}(i,j)]\right\|^{2}+\left\| [\mathbf{b}_{1}D_{y}\mathbf{U}(i,j),\dots,\mathbf{b}_{d-1}D_{y}\mathbf{U}(i,j)]\right\|^{2}}\\
        & \quad +\alpha\sqrt{\left\| \mathbf{b}_{d}D_{x}\mathbf{U}(i,j)\right\|^{2}+\left\| \mathbf{b}_{d}D_{y}\mathbf{U}(i,j)\right\|^{2}},
    \end{aligned}
\end{equation}
where $\mathbf{b}_k$ represents the $k$-th row of matrix $B$ ($1\leq k \leq d$), $\alpha$ is a positive number to balance the opponent component and the average component.
The first component of GOTTV transforms the gradients $D_x {\bf U}$ and $D_y {\bf U}$ using opponent structure of matrix $B$ and takes the point-wise sum of the $l_2$ norm of each transformed gradient. These norms summed over all channels ($k=1$ to $d-1$) to capture the opponent information. The second component transforms the gradients $D_x {\bf U}$ and $D_y {\bf U}$ using the average structure of matrix $B$ and computes the point-wise sum of the $l_2$ norm of the transformed gradient corresponding to the average channel ($k=d$). This part captures the average information across all channels.

By incorporating the TV regularization into the multispectral space defined by the generalized opponent transformation, the GOTTV regularization allows for effective exploitation of the opponent information present in the MSI for image restoration tasks. This regularization term can enhance the preservation of image boundaries while taking advantage of the interconnectedness of spectral information in the MSI.

It should be noted that from the definition of SVTV in \cref{SVTVintro2}, \cref{satandval}, \cref{SVTVintro3}, SVTV also consists of two components. 
The first component is the point-wise sum of the $l_{2}$ norm of the gradient transformed by the opponent structure of $Q_{1}$. The second component is the point-wise sum of the $l_{2}$ norm of the gradient transformed by the average structure of $Q_{1}$. When $d=3$, the $Q_{1}=B$ and the GOTTV can be reverted into the SVTV form.

In addition, the opponent term of the GOTTV can be expressed by $C_{d}$ similar to \cref{SVTVintro2a} as follows shown, 
\begin{equation}
\label{opponent trans eq1}
    \begin{aligned}
        &\sum_{i=1}^{m}\sum_{j=1}^{n}\sqrt{\left\| [\mathbf{b}_{1}D_{x}\mathbf{U}(i,j),\dots,\mathbf{b}_{d-1}D_{x}\mathbf{U}(i,j)]\right\|^{2}+\left\| [\mathbf{b}_{1}D_{y}\mathbf{U}(i,j),\dots,\mathbf{b}_{d-1}D_{y}\mathbf{U}(i,j)]\right\|^{2}}\\
        &=\sum_{i=1}^{m}\sum_{j=1}^{n}\frac{1}{d}\sqrt{\left\| \Lambda_{d}BD_{x}\mathbf{U}(i,j)\right\|^{2}+\left\| \Lambda_{d}BD_{y}\mathbf{U}(i,j)\right\|^{2}}\\
        &=\sum_{i=1}^{m}\sum_{j=1}^{n}\frac{1}{d}
\sqrt{  \| C_{d} D_x {\bf U}(i,j)) \|^2 +  \| C_{d} D_y {\bf U}(i,j)) \|^2 }.\\
&=\sum_{i=1}^{m}\sum_{j=1}^{n}\frac{1}{d(d-2)}
\sqrt{  \| \sum_{H\in \mathcal{H}_{d}} H D_x {\bf U}(i,j)) \|^2 +  \| \sum_{H\in \mathcal{H}_{d}} H D_y {\bf U}(i,j)) \|^2 }
    \end{aligned}
\end{equation}
{\small
\begin{equation}
\label{opponent trans eq2}
    \begin{aligned}
        &\sum_{i=1}^{m}\sum_{j=1}^{n}\sqrt{\left\| \mathbf{b}_{d}D_{x}\mathbf{U}(i,j)\right\|^{2}+\left\| \mathbf{b}_{d}D_{y}\mathbf{U}(i,j)\right\|^{2}}\\
        &=a_{1}\sum_{i=1}^{m}\sum_{j=1}^{n}\left((\sum_{1\leq l_{1}<l_{2}<l_{3}\leq d}VAL(D_{x}\begin{bmatrix}
            {\bf U}(i,j,l_{1})\\
            {\bf U}(i,j,l_{2})\\
            {\bf U}(i,j,l_{3})
        \end{bmatrix}))^2+(\sum_{1\leq l_{1}<l_{2}<l_{3}\leq d}VAL(D_{y}\begin{bmatrix}
            {\bf U}(i,j,l_{1})\\
            {\bf U}(i,j,l_{2})\\
            {\bf U}(i,j,l_{3})
        \end{bmatrix}))^2\right)^{\frac{1}{2}}\\
        &\text{where } a_{1}=\frac{2\sqrt{3}}{\sqrt{d}(d-1)(d-2)} \text{ is a constant.}
    \end{aligned}
\end{equation}
}

Remark that the above results hold for any element in $\mathcal{Q}_{d}$. The second last and the last equality holds by using the relationship \cref{diagCd}, \cref{connectCdACeq} deduced in \cref{T3.2} and \cref{connectCdAC}.  

The above equalities demonstrate that the generalized opponent transformation can convey the sum of saturation and value information of the color sub-images formed by any three channels of the multispectral image. Specifically, as each $H$ is a matrix except its principle submatrix at positions $l_{1}, l_{2}, l_{3}$, is equal to $C$, and the rest is zero, each $H D_{z} {\bf U}(i,j)(z=x\text{ or }y)$ essentially involves extracting the $l_{1}, l_{2}, l_{3}$ channels from the multispectral image to form a color sub-image. Subsequently, it calculates the saturation information of the x and y-directional gradients of this color sub-image at positions $(i,j)$. Hence, summing over $\mathcal{H}_{d}$ is equivalent to extracting all three channels that can compose a color sub-image from the multispectral image and adding up their saturation information in the x and y-directional gradients as shown in the last equality in \cref{opponent trans eq1}. 

Therefore, the opponent component of the GOTTV conveys the sum of saturation information of the x and y-directional gradients of the color sub-images formed by any three channels of the multispectral image. Additionally, equation \cref{opponent trans eq2} shows that the average component in GOTTV conveys the sum of value information of the x and y-directional gradients of the color sub-images formed by any three channels of the multispectral image. Combining the ability of saturation and value to mitigate the shimmering phenomenon of color images analyzed in the preceding sections, GOTTV effectively possesses the capability to reduce the shimmering phenomenon in each color sub-image of a multispectral image formed by its arbitrary three channels. The subsequent experimental section validates that GOTTV exhibits superior denoising results in easing shimmering. The above illustration not only elucidates from the perspective of color sub-image that our generalized opponent transformation processes excellent denoising capabilities but also emphasizes that it is precisely due to the inheritance of the key properties (G1-G3) from the 3-d opponent transformation by our generalized opponent transformation, that \cref{T3.2,connectCdAC} are validated. This underscores that our generalized opponent transformation holds not only practical significance but also theoretical value.

\subsection{Multispectral Image Restoration Model}
The image restoration problem in the context of observed multispectral image (MSI) can be formulated using the Maximum a posteriori framework (MAP) \cite{chambolle2010introduction}. The goal is to recover the cleaned MSI image $\mathbf{U}$ from the observed degraded image $\mathbf{V}\in\mathbb{R}^{m\times n\times d}$. The degradation is caused by blurring with the kernel $\mathbf{K}$ and the addition of Gaussian noise $\mathbf{N}$. The MAP framework formulates image restoration as a minimization problem that combines a fidelity term and a regularization term. In this case, we choose GOTTV as the regularization term. The GOTTV image restoration model can be formulated as follows:
\begin{equation}
    \label{orimiinformf}
    \mathbf{U}^{\ast}=\underset{\mathbf{U}\in \mathbb{R}^{m\times n\times d}}{\arg\min}\{F(\mathbf{U})=GOTTV(\mathbf{U}) + \frac{\lambda}{2}\left\| \mathbf{K}\star\mathbf{U}-\mathbf{V}\right\|^{2}\},
\end{equation}
where $\lambda$ is a positive parameter to balance the regularization and fidelity terms. 

The first term in the equation corresponds to the GOTTV regularization, which effectively utilizes the opponent information across spectra to preserve edges and boundaries in the image. The second term represents the fidelity term, which measures the discrepancy between the recovered and degraded images, accounting for the noise and blurring effects. By minimizing this objective function, the restoration algorithm aims to find the optimal solution that balances the GOTTV regularization term for edge preservation and the fidelity term for faithful recovery from the degraded image. This joint optimization process facilitates the removal of noise and blurring while retaining important structural details in the restored image.

Under a assumption \cite{wu2010augmented} that the null space of forward difference operator: $D=(D_{x},D_{y})$ and blurring operator $\mathbf{K}$ have 0 as the only common element, i.e. $\text{Null}(D)\cap\text{Null}(\mathbf{K})=\{0\}$, then we have the below result:
\begin{theorem}
The solution of the minimization formula \cref{orimiinformf} exists. If $\text{Null}(\mathbf{K})=\{0\}$, then the solution is unique.    
\end{theorem}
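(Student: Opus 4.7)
The plan is to apply the direct method of the calculus of variations on the finite-dimensional space $\mathbb{R}^{m\times n\times d}$. First I would observe that $F$ is continuous and convex: the fidelity term $\tfrac{\lambda}{2}\|\mathbf{K}\star\mathbf{U}-\mathbf{V}\|^{2}$ is quadratic and convex, while $GOTTV(\mathbf{U})$ is a nonnegative combination of compositions of $\ell^{2}$-norms with the linear operators $D_{x}$, $D_{y}$, and the rows of $B$, hence convex. It therefore suffices to prove coercivity of $F$ to obtain a minimizer by Weierstrass; for the uniqueness claim I would separately prove strict convexity under $\mathrm{Null}(\mathbf{K})=\{0\}$.

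For coercivity, I would first derive a one-sided equivalence $GOTTV(\mathbf{U})\ge c_{1}\|D\mathbf{U}\|$ for some $c_{1}>0$, where $D=(D_{x},D_{y})$. This uses orthogonality of $B$, which gives $\sum_{k=1}^{d}|\mathbf{b}_{k}D_{z}\mathbf{U}(i,j)|^{2}=\|D_{z}\mathbf{U}(i,j)\|^{2}$ for $z=x,y$, together with elementary inequalities $\sqrt{A^{2}+B^{2}}\ge\tfrac{1}{\sqrt{2}}(A+B)$ and $\sum_{i,j}\sqrt{s_{ij}}\ge\sqrt{\sum_{i,j}s_{ij}}$; combined with the positivity of $\alpha$ this produces the bound with $c_{1}=c_{1}(\alpha)>0$. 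I would then decompose $\mathbf{U}=\mathbf{U}_{1}+\mathbf{U}_{2}$ orthogonally with $\mathbf{U}_{1}\in\mathrm{Null}(D)$ and $\mathbf{U}_{2}\in\mathrm{Null}(D)^{\perp}$. Since everything is finite-dimensional, $D$ restricted to $\mathrm{Null}(D)^{\perp}$ is a linear isomorphism onto its range, giving $\|D\mathbf{U}\|=\|D\mathbf{U}_{2}\|\ge c_{2}\|\mathbf{U}_{2}\|$ for some $c_{2}>0$. Given any sequence $\{\mathbf{U}^{(n)}\}$ with $F(\mathbf{U}^{(n)})$ bounded, the $GOTTV$ bound and these two estimates force $\{\mathbf{U}_{2}^{(n)}\}$ bounded; then $\|\mathbf{K}\star\mathbf{U}^{(n)}-\mathbf{V}\|$ bounded together with continuity of the convolution operator forces $\|\mathbf{K}\star\mathbf{U}_{1}^{(n)}\|$ bounded; and the hypothesis $\mathrm{Null}(D)\cap\mathrm{Null}(\mathbf{K})=\{0\}$ means $\mathbf{K}\star(\cdot)$ is injective on the finite-dimensional subspace $\mathrm{Null}(D)$, hence bounded below there, so $\{\mathbf{U}_{1}^{(n)}\}$ is also bounded. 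Thus $\{\mathbf{U}^{(n)}\}$ is bounded, which is the contrapositive of coercivity.

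For uniqueness under the stronger hypothesis $\mathrm{Null}(\mathbf{K})=\{0\}$, I would note that the quadratic form $\mathbf{U}\mapsto\|\mathbf{K}\star\mathbf{U}\|^{2}$ has Hessian (in the appropriate matricization) equal to a positive constant multiple of $K^{\top}K$, which is positive definite precisely when the convolution operator $\mathbf{K}\star(\cdot)$ is injective. Hence the fidelity term becomes strictly convex, and adding any convex function (namely $GOTTV$) preserves strict convexity of $F$. A strictly convex function on a convex domain has at most one minimizer, which combined with existence gives uniqueness.

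The main obstacle I anticipate is establishing the lower bound $GOTTV(\mathbf{U})\ge c_{1}\|D\mathbf{U}\|$ with an explicit, uniform constant; everything else is a straightforward orthogonal-decomposition argument and a standard strict convexity observation. An alternative route, which sidesteps producing $c_{1}$ explicitly, is to observe that in finite dimensions $GOTTV(\mathbf{U})=0$ iff $D\mathbf{U}=0$ (since $B$ is invertible) and then to use a compactness argument on the unit sphere of $\mathrm{Null}(D)^{\perp}$ to produce $c_{1}$ abstractly.
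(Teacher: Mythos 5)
Your proposal is correct and follows essentially the same route as the paper: convexity of $F$ (via the convexity of $GOTTV$ as norms composed with linear maps, equivalently the paper's affine change of variables $\mathbf{\Phi}(i,j)=B\mathbf{U}(i,j)$) plus coercivity gives existence by Weierstrass, and strict convexity of the fidelity term under $\mathrm{Null}(\mathbf{K})=\{0\}$ gives uniqueness. The only substantive difference is completeness: the paper simply asserts that $F$ is coercive under the assumption $\mathrm{Null}(D)\cap\mathrm{Null}(\mathbf{K})=\{0\}$, whereas you actually establish it through the lower bound $GOTTV(\mathbf{U})\ge c_{1}\|D\mathbf{U}\|$ (valid with $c_{1}=\min(1,\alpha)$ by orthogonality of $B$) and the orthogonal decomposition of $\mathbf{U}$ along $\mathrm{Null}(D)$ and $\mathrm{Null}(D)^{\perp}$ --- which is precisely the standard argument from the cited reference that the paper is implicitly invoking.
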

\begin{proof}
We introduce a new variable $\mathbf{\Phi}\in\mathbb{R}^{m\times n\times d}$, satisfies:
\begin{equation}
\label{operator}
\begin{bmatrix}
    \Phi_{1}\\
    \vdots\\
    \Phi_{d}
\end{bmatrix}=\mathbf{B}\begin{bmatrix}
    U_{1}\\
    \vdots\\
    U_{d}
\end{bmatrix}, \text{where }\mathbf{B}=\begin{bmatrix}
    B(1,1)I &\dots &B(1,d)I\\
    \vdots &\ddots&\vdots\\
    B(d,1)I &\dots &B(d,d)I
\end{bmatrix}
\end{equation}
i.e. in any position, $\mathbf{\Phi}(i,j)=B \mathbf{U}(i,j)$ holds. Then below equality holds:
{ 
\begin{equation}
\label{GOTTVVTVdedu}
    \begin{aligned}
&GOTTV(\mathbf{U})\\
&=\sum_{i=1}^{m}\sum_{j=1}^{n}\sqrt{\left\| [D_{x}\mathbf{b}_{1}\mathbf{U}(i,j),\dots,D_{x}\mathbf{b}_{d-1}\mathbf{U}(i,j)]\right\|^{2}+\left\| [D_{y}\mathbf{b}_{1}\mathbf{U}(i,j),\dots,D_{y}\mathbf{b}_{d-1}\mathbf{U}(i,j)]\right\|^{2}}\\
        &+\alpha\sqrt{\left\| D_{x}\mathbf{b}_{d}\mathbf{U}(i,j)\right\|^{2}+\left\| D_{y}\mathbf{b}_{d}\mathbf{U}(i,j)\right\|^{2}}\\
        &=\sum_{i=1}^{m}\sum_{j=1}^{n}\sqrt{\left\| [D_{x}\mathbf{\Phi}(i,j,1),\dots,D_{x}\mathbf{\Phi}(i,j,d-1)]\right\|^{2}+\left\| [D_{y}\mathbf{\Phi}(i,j,1),\dots,D_{y}\mathbf{\Phi}(i,j,d-1)]\right\|^{2}}\\
        &+\alpha\sqrt{\left\| D_{x}\mathbf{\Phi}(i,j,d)\right\|^{2}+\left\| D_{y}\mathbf{\Phi}(i,j,d)\right\|^{2}}.
    \end{aligned}
\end{equation}
}
    From deduction \cref{GOTTVVTVdedu}, we know GOTTV is the composition of an affine function ($\mathbf{\Phi}(i,j)=B\mathbf{U}(i,j)$) and a convex function (VTV), so GOTTV is convex. Therefore, under the assumption, that F is convex, coercive, and proper. By the generalized Weierstrass existence theorem \cite{zeidler2012applied}, the existence of the solution can be guaranteed. Moreover, when $\text{Null}(\mathbf{K})=\{0\}$, then F is strictly convex, so the solution is unique. 
\end{proof}

\subsection{Numerical Algorithm}

 The alternating direction method of multipliers (ADMM) is a widely used iterative algorithm for solving optimization problems with convex objectives. In order to apply the classic ADMM algorithm to obtain the minimizer of $F$, we introduce the variable $\mathbf{\Phi}$, same as \cref{operator}. Also, denote $\tilde{\mathbf{K}}=\mathbf{B}\mathbf{K}\mathbf{B}^{T}$ and $\tilde{\mathbf{V}}(i,j)=B\mathbf{V}(i,j)$, for $1\leq i\leq m, 1\leq j\leq n$. Then the minimization formula in \cref{orimiinformf} can be written as:
\begin{equation}
\label{Lageq1}
\begin{aligned}
    \min_{\mathbf{\Phi}} &\sum_{i=1}^{m}\sum_{j=1}^{n}\sqrt{\left\| [D_{x}\mathbf{\Phi}(i,j,1),\dots,D_{x}\mathbf{\Phi}(i,j,d-1)]\right\|^{2}+\left\| [D_{y}\mathbf{\Phi}(i,j,1),\dots,D_{y}\mathbf{\Phi}(i,j,d-1)]\right\|^{2}}\\
        &+\alpha\sqrt{\left\| D_{x}\mathbf{\Phi}(i,j,d)\right\|^{2}+\left\| D_{y}\mathbf{\Phi}(i,j,d)\right\|^{2}}+\frac{\lambda}{2}\left\| \tilde{\mathbf{K}}\star\mathbf{\Phi}-\tilde{\mathbf{V}}\right\|^{2}.\\
\end{aligned}
\end{equation}
Then we introduce auxiliary variables $\mathbf{W}_{x (y)}\in\mathbb{R}^{m\times n\times d}$ to act as the x(y)-direction forward difference of $\mathbf{\Phi}$ and rewritten the minimization formula \cref{Lageq1} as:
\begin{equation}
\label{Lageq2}
\begin{aligned}
    \min_{\mathbf{\Phi}} &\sum_{i=1}^{m}\sum_{j=1}^{n}\sqrt{\left\| [\mathbf{W}_{x}(i,j,1),\dots,\mathbf{W}_{x}(i,j,d-1)]\right\|^{2}+\left\| [\mathbf{W}_{y}(i,j,1),\dots,\mathbf{W}_{y}(i,j,d-1)]\right\|^{2}}\\
        &+\alpha\sqrt{\left\| \mathbf{W}_{x}(i,j,d)\right\|^{2}+\left\| \mathbf{W}_{y}(i,j,d)\right\|^{2}}+\frac{\lambda}{2}\left\| \tilde{\mathbf{K}}\star\mathbf{\Phi}-\tilde{\mathbf{V}}\right\|^{2}\\
        \text{s.t. }& \mathbf{W}_{x}=D_{x}\mathbf{\Phi}, \mathbf{W}_{y}=D_{y}\mathbf{\Phi}.
\end{aligned}
\end{equation}
At last, we write down the augmented Lagrange formula of \cref{Lageq2}, which is: 
\begin{equation}
    \label{ALMofT}
    \begin{aligned}
    &\mathscr{L}(\mathbf{\Phi},\mathbf{W}_{x},\mathbf{W}_{y};\mathbf{\Upsilon}_{x},\mathbf{\Upsilon}_{y})\\
    &=\sum_{i=1}^{m}\sum_{j=1}^{n}\sqrt{\left\| [\mathbf{W}_{x}(i,j,1),\dots,\mathbf{W}_{x}(i,j,d-1)]\right\|^{2}+\left\| [\mathbf{W}_{y}(i,j,1),\dots,\mathbf{W}_{y}(i,j,d-1)]\right\|^{2}}\\
    &+\alpha \sqrt{\left\|\mathbf{W}_{x}(i,j,d)\right\|^{2}+\left\|\mathbf{W}_{y}(i,j,d)\right\|^{2}}+\frac{\lambda}{2}\left\| \tilde{\mathbf{K}}\star \mathbf{\Phi}-\tilde{\mathbf{V}}\right\|^{2}\\
    &+<\mathbf{\Upsilon}_{x},\mathbf{W}_{x}-D_{x}\mathbf{\Phi}>+<\mathbf{\Upsilon}_{y},\mathbf{W}_{y}-D_{y}\mathbf{\Phi}>+\frac{r}{2}\left\| \mathbf{W}_{x}-D_{x}\mathbf{\Phi}\right\|^{2}+\frac{r}{2}\left\| \mathbf{W}_{y}-D_{y}\mathbf{\Phi}\right\|^{2},
    \end{aligned}
\end{equation}
where $\mathbf{\Upsilon}_{x(y)}\in\mathbb{R}^{m\times n\times d}$ are Lagrangian multiplier.
Then the ADMM algorithm is implemented by the following \cref{alg:ADMM}.
\begin{algorithm}[h]
\caption{ADMM}
\label{alg:ADMM}
\begin{algorithmic}[1]
\REQUIRE maxitr, relerr, $\alpha$, $r$,$r_{max}$, $\rho$ ,$\lambda$, $\mathbf{W}_{x(y)}^{(0)}\leftarrow 0,\mathbf{\Upsilon}_{x(y)}^{(0)}\leftarrow 0$\;

    \FOR {$k\leftarrow 1$ to $maxitr$}          
    \STATE $\mathbf{\Phi}^{(k)}\leftarrow\underset{\mathbf{\Phi}}{\arg\min}\mathscr{L}(\mathbf{\Phi},\mathbf{W}^{(k-1)}_{x},\mathbf{W}^{(k-1)}_{y};\mathbf{\Upsilon}_{x}^{(k-1)},\mathbf{\Upsilon}_{y}^{(k-1)})$;\\
    \STATE $(\mathbf{W}^{(k)}_{x},\mathbf{W}^{(k)}_{y})\leftarrow\underset{(\mathbf{W}_{x},\mathbf{W}_{y})}{\arg\min}\mathscr{L}(\mathbf{\Phi}^{(k)},\mathbf{W}_{x},\mathbf{W}_{y};\mathbf{\Upsilon}_{x}^{(k-1)},\mathbf{\Upsilon}_{y}^{(k-1)})$;\\
    \STATE $(\mathbf{\Upsilon}_{x}^{(k)},\mathbf{\Upsilon}_{y}^{(k)})\leftarrow(\mathbf{\Upsilon}_{x}^{(k-1)},\mathbf{\Upsilon}_{y}^{(k-1)})+r(\mathbf{W}_{x}^{(k)}-D_{x}\mathbf{\Phi}^{(k)},\mathbf{W}_{y}^{(k)}-D_{y}\mathbf{\Phi}^{(k)})$;
       \STATE$\mathbf{U}^{(k)}(i,j)\leftarrow B^{T}\mathbf{\Phi}^{(k)}(i,j),\forall 1\leq i\leq m, 1\leq j\leq n$;
    \IF{$r<r_{max}$}
\STATE $r=r*\rho$;
    \ENDIF
        
    \IF {$\frac{\left\| \mathbf{\Phi^{(k)}}-\mathbf{\Phi^{(k-1)}}\right\|}{\left\| \mathbf{\Phi^{(k)}}\right\|}<$relerr}

    \STATE break;
    \ENDIF

    \ENDFOR

\RETURN $\mathbf{U}^{(k)}$
\end{algorithmic}
\end{algorithm}

\vspace{3mm}

To provide a theoretical foundation for the convergence of the ADMM algorithm, we state the following convergence theorem:
\begin{theorem}
    Assume that $\mathbf{U}^{\ast}$ is the solution of (\ref{orimiinformf}), then the sequence $\mathbf{U}^{k}$ generated by algorithm \ref{alg:ADMM} satisfies:
    \begin{equation}
        \label{convergence property 1}
        \lim_{k\to \infty} F(\mathbf{U}^{(k)})=F(\mathbf{U}^{\ast}).
    \end{equation}
    If we further have $\text{Null}(\textbf{K})=0$, then 
    \begin{equation}
        \label{convergence property 2}
        \lim_{k\to \infty}\mathbf{U}^{(k)}=\mathbf{U}^{\ast}.
    \end{equation}
\end{theorem}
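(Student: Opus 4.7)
The plan is to recast Algorithm~\ref{alg:ADMM} as a standard two-block ADMM applied to the constrained reformulation in \cref{Lageq2}, namely
\[
\min_{\Phi, W_x, W_y}\; g(W_x, W_y) + h(\Phi) \quad \text{s.t.} \quad W_x = D_x \Phi, \;\; W_y = D_y \Phi,
\]
where $g$ collects the two nonsmooth GOTTV terms in the $W$-variables and $h(\Phi) = \tfrac{\lambda}{2}\|\tilde{\mathbf{K}}\star\Phi - \tilde{\mathbf{V}}\|^2$. First I would verify the standing hypotheses of classical ADMM convergence theory: $g$ is proper, convex, and lower semicontinuous as a finite sum of Euclidean norms of linear expressions, while $h$ is convex and continuously differentiable as a positive semidefinite quadratic composed with a linear operator. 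The $\Phi$-subproblem admits a unique closed-form minimizer because its Hessian $\lambda\, \tilde{\mathbf{K}}^{\ast}\tilde{\mathbf{K}} + r(D_x^{\ast}D_x + D_y^{\ast}D_y)$ is strictly positive definite whenever $r>0$ under the standing null-space assumption; the $(W_x,W_y)$-subproblem decouples pointwise into group soft-thresholding problems with explicit solutions.

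Since the penalty parameter $r$ is monotonically non-decreasing and capped by $r_{\max}$, it stabilizes after at most $\lceil \log_\rho(r_{\max}/r_0)\rceil$ iterations, and from that point onward the scheme is a fixed-penalty two-block ADMM. The classical convergence theorem (see, e.g., the Douglas--Rachford/ADMM analysis used in \cite{wu2010augmented}) then yields that the primal residuals $\|W_x^{(k)} - D_x\Phi^{(k)}\|$ and $\|W_y^{(k)} - D_y\Phi^{(k)}\|$ tend to zero and the objective value of \cref{Lageq1} converges to its minimum. Using \cref{GOTTVVTVdedu} together with the pointwise identity $\Phi^{(k)}(i,j) = B\mathbf{U}^{(k)}(i,j)$ and the orthogonality of $\mathbf{B}$ (so that $\|\tilde{\mathbf{K}}\star\Phi - \tilde{\mathbf{V}}\| = \|\mathbf{K}\star\mathbf{U} - \mathbf{V}\|$), this translates into $F(\mathbf{U}^{(k)}) \to F(\mathbf{U}^{\ast})$, establishing \cref{convergence property 1}.

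For \cref{convergence property 2}, the additional hypothesis $\mathrm{Null}(\mathbf{K}) = \{0\}$ makes the fidelity term strictly convex in $\mathbf{U}$; hence $F$ is strictly convex and $\mathbf{U}^{\ast}$ is its unique minimizer. The coercivity of $F$ (already used for existence) ensures that $\{\mathbf{U}^{(k)}\}$ is contained in a bounded sublevel set of $F$ and therefore has cluster points. By continuity of $F$ and \cref{convergence property 1}, any cluster point $\tilde{\mathbf{U}}$ satisfies $F(\tilde{\mathbf{U}}) = F(\mathbf{U}^{\ast})$; strict convexity then forces $\tilde{\mathbf{U}} = \mathbf{U}^{\ast}$. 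Since every cluster point of a bounded sequence agrees with $\mathbf{U}^{\ast}$, the whole sequence converges to $\mathbf{U}^{\ast}$.

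The main obstacle I anticipate is handling the varying penalty parameter $r$: the textbook ADMM convergence theorem is stated for fixed $r$, so the argument must invoke the observation that $r$ stabilizes in finitely many iterations before the classical result applies, or else quote an adaptive-penalty variant. A secondary subtlety is that convergence of $F(\mathbf{U}^{(k)})$ combined with coercivity only yields boundedness of the iterates, not convergence of the iterates themselves—this gap is precisely what the strict-convexity hypothesis $\mathrm{Null}(\mathbf{K})=\{0\}$ is introduced to close, via the cluster-point argument above.
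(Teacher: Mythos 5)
Your proposal is correct and follows essentially the same route as the paper: both reduce the theorem to the known ADMM/augmented-Lagrangian convergence result for the reformulated problem in the transformed variable $\mathbf{\Phi}$ (the paper simply cites Theorem 4.4 of \cite{wu2010augmented}) and then transfer the conclusion back to $\mathbf{U}$ via the identity $F(\mathbf{U}^{(k)})=G(\mathbf{\Phi}^{(k)})$ coming from the orthogonal change of variables. You supply details the paper leaves implicit---verification of the ADMM hypotheses, the stabilization of the increasing penalty parameter $r$ after finitely many steps, and a self-contained coercivity/strict-convexity cluster-point argument for iterate convergence where the paper instead takes $\mathbf{\Phi}^{(k)}\to\mathbf{\Phi}^{\ast}$ directly from the cited theorem---but the underlying strategy is the same.
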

\begin{proof}
    Utilizes the conclusion of theorem 4.4 from the paper \cite{wu2010augmented}, we can easily get the following results:
    \begin{equation}
        \label{proof convergence property 1}
        \lim_{k\to \infty} G(\mathbf{\Phi}^{(k)})=G(\mathbf{\Phi}^{\ast}),
    \end{equation}
    Where G is the function corresponding to the minimization problem of $\mathbf{\Phi}$ in \cref{Lageq1}. Also, if we further have $\text{Null}(\textbf{K})=0$, then 
    \begin{equation}
        \label{proof convergence property 2}
        \lim_{k\to \infty}\mathbf{\Phi}^{(k)}=\mathbf{\Phi}^{\ast}.
    \end{equation}
    By leveraging the properties outlined in Equation (\ref{GOTTVVTVdedu}), we know $F(\mathbf{U}^{(k)})=G(\mathbf{\Phi}^{(k)})$, $F(\mathbf{U}^{\ast})=G(\mathbf{\Phi}^{\ast})$. Therefore, the conclusion is derived.
\end{proof}

In conclusion, the ADMM effectively decomposes the original minimization problem into two tractable subproblems associated with the parameters $\mathbf{\Phi}$ and $\mathbf{W}_{x(y)}$. During the iterative process, each subproblem is solved with the other parameters held constant, and the algorithm converges when the relative error of the iterative solution falls below a predefined threshold or when the maximum iteration limit is reached. These two subproblems can be solved efficiently using the fast Fourier transform and soft-thresholding operators, as shown in the paper \cite{wu2010augmented}. The primary factor of the computational complexity per iteration of the algorithm is the Fast Fourier Transforms (FFTs), which is $\mathcal{O}(dmn\log(mn))$ see \cite{ng1999fast}. Since we initially transformed the image into the space corresponding to opponent transformation: $B$, the final result should transform back into the original space, i.e., using the transpose of $B$ to multiply the iteration result. 

\section{Numerical Experiments}
\label{sec:Numerical Experiments}

Our simulated experiments take five representative objects from the Columbia multispectral image database \cite{yasuma2010generalized} to test. The database contains object information for many materials in the wavelength range 400nm to 700nm with equidistant 10nm steps (31 bands in total). To facilitate visual and comparative analysis of the image restoration performance among different methods, we present the results of each method's image restoration as pseudocolored images (R-26, G-20, B-4), as demonstrated in \cref{MSI for comparison}. We chose R-26, G-20, and B-4 as the pseudo-color image for display for two main reasons. Firstly, the human eye is more sensitive to color information than grayscale information, allowing for an intuitive assessment of the image recovery results. Secondly, the pseudo-color image's red, green, and blue tones are similar to the objects' actual red, green, and blue tones in the dataset \cite{yasuma2010generalized}. This choice was made to avoid introducing unnatural colors that could interfere with the viewer's judgment of the recovery results. Additionally, to understand the differences in information across different channels in the dataset, we calculated the channel-wise Frobenius norm for the images of cloth and face, as shown in \cref{Channel-wise Frobenius norm information for cloth and face}. As can be seen, the norms between different channels exhibit a linearly increasing trend and vary significantly. Indeed, this indicates that the signal intensity and structural information vary across different channels. To further validate the conclusion above and examine the differences in texture information among different channels, we showed part of the cloth and selected eight channels from its 31 channels, as shown in \cref{Partially enlarged view of cloth in different channels}. It can be observed that the first channel appears blurry, the middle channels have relatively similar information, and the later channels exhibit large areas of white, indicating more abundant texture information.

We adopt two image quality assessment indexes to help to judge the quality of denoising results. They are, respectively, the mean peak signal-to-noise ratio (MPSNR) and mean structural similarity (MSSIM), which are the average of the PSNR and SSIM \cite{wang2004image} results of each channel of the denoising results. The PSNR represents the mean square error of the difference between the clean image and the denoising result, so a higher PSNR means the denoising result is closer to the clean image and a higher denoising quality \cite{hore2010image}. The value of SSIM is between 0 to 1, and the closer to 1, the more similar the two images are under the human visual system \cite{hore2010image}. The methods we compare with are mentioned in the introduction. They are respectively band-by-band TV \cite{rudin1992nonlinear}\cite{tao2009alternating}, VTV \cite{wu2010augmented}\cite{tao2009alternating}, ASSTV \cite{chang2015anisotropic}, SSAHTV \cite{yuan2012hyperspectral}. For each model, we tested a wide range of regularization parameters and selected the result with the highest MPSNR value for presentation. Regarding the stopping criteria, we stop iteration when the relative error of successive iterations is less than $10^{-5}$. The proposed algorithm is implemented in MATLAB. 

\begin{figure}[H]
	\centering
 \subfigure[Cloth]{
	\begin{minipage}{0.15\linewidth}
		\centering
		\includegraphics[width=\linewidth]{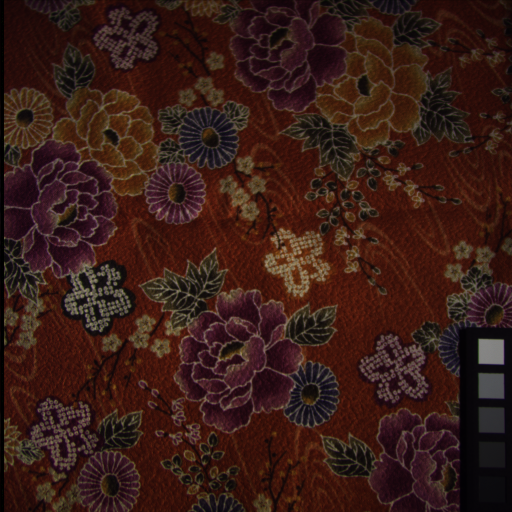}
	\end{minipage}}
 \subfigure[Face]{
 \begin{minipage}{0.15\linewidth}
		\centering
		\includegraphics[width=\linewidth]{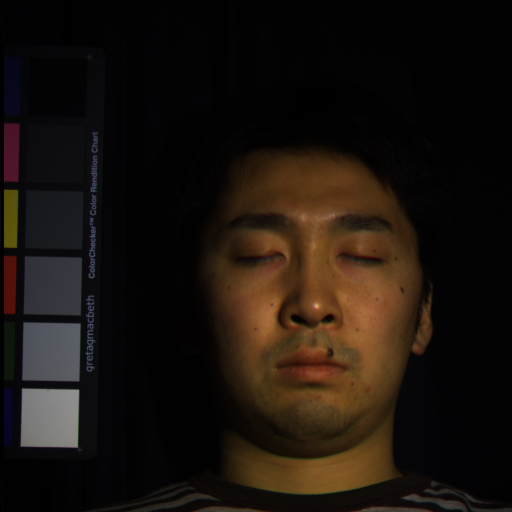}
	\end{minipage}}
 \subfigure[Jelly]{
 \begin{minipage}{0.15\linewidth}
		\centering
		\includegraphics[width=\linewidth]{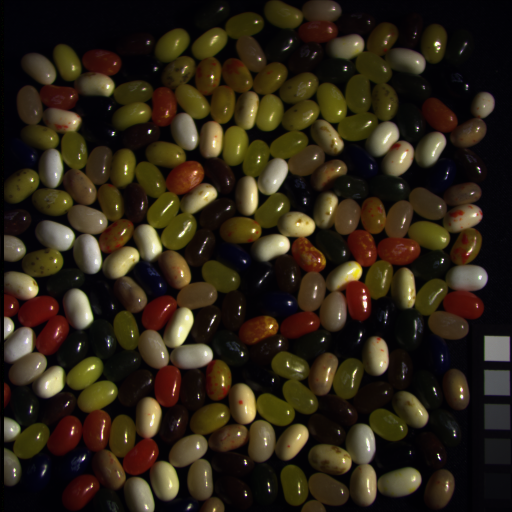}
	\end{minipage}}
 \subfigure[Picture]{
 \begin{minipage}{0.15\linewidth}
		\centering
		\includegraphics[width=\linewidth]{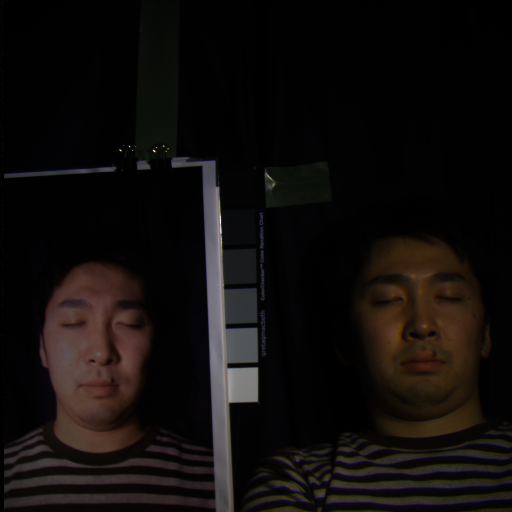}
	\end{minipage}}   
 \subfigure[Thread]{
 \begin{minipage}{0.15\linewidth}
		\centering
		\includegraphics[width=\linewidth]{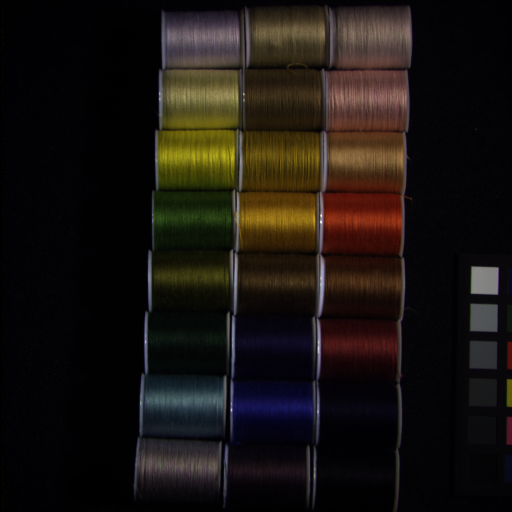}
	\end{minipage}}
 \caption{Pseudo-color images (R-26, G-20, B-4) of objects for comparison.}
  \label{MSI for comparison}
\end{figure}

\begin{figure}[H]
	\centering
 \subfigure[Cloth]{
	\begin{minipage}{0.35\linewidth}
		\centering
		\includegraphics[width=\linewidth]{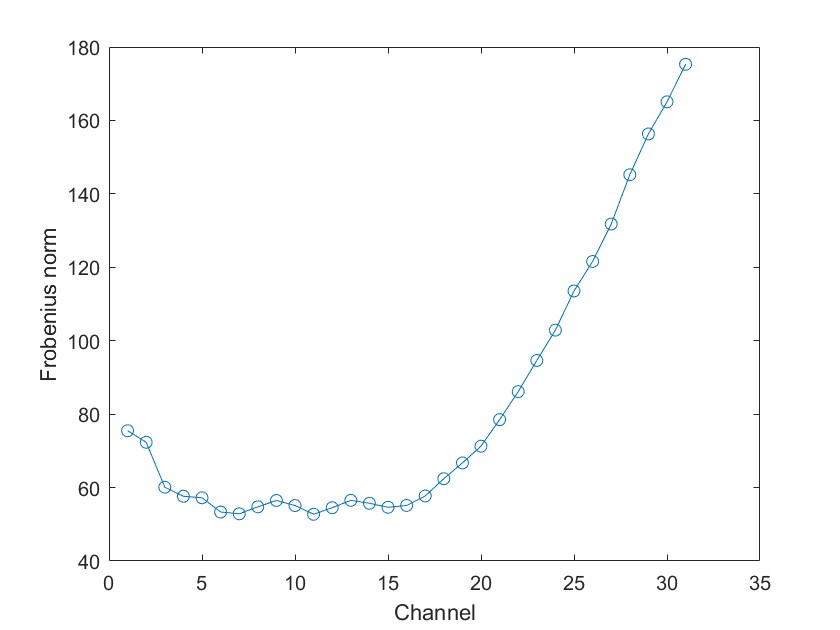}

	\end{minipage}}
 \subfigure[Face]{
	\begin{minipage}{0.35\linewidth}
		\centering
		\includegraphics[width=\linewidth]{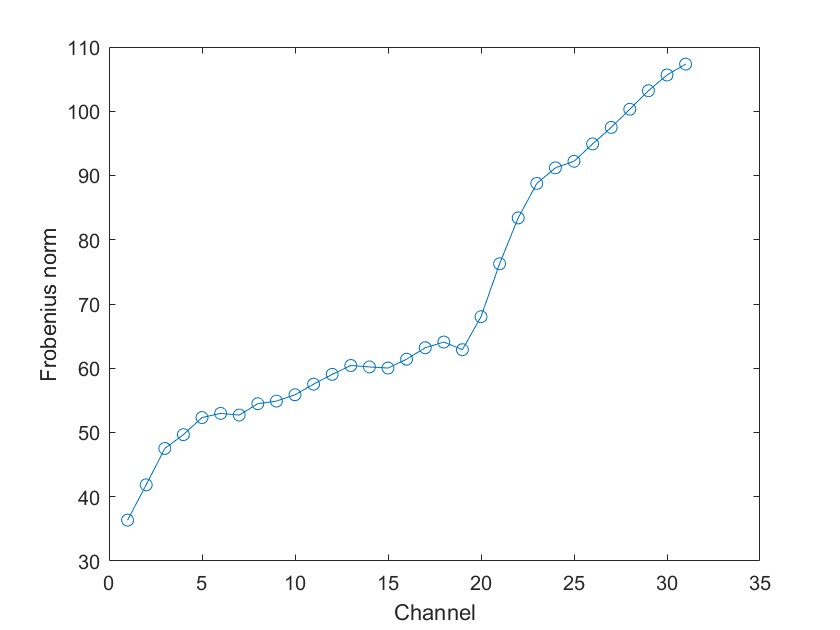}

	\end{minipage}}

 \caption{Channel-wise Frobenius norm information for cloth and face.}
  \label{Channel-wise Frobenius norm information for cloth and face}
\end{figure}

\begin{figure}[H]
	\centering
 \subfigure[Channel 1]{
	\begin{minipage}{0.2\linewidth}
		\centering
		\includegraphics[width=\linewidth]{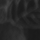}

	\end{minipage}}
 \subfigure[Channel 5]{
	\begin{minipage}{0.2\linewidth}
		\centering
		\includegraphics[width=\linewidth]{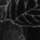}

	\end{minipage}}
 \subfigure[Channel 9]{
	\begin{minipage}{0.2\linewidth}
		\centering
		\includegraphics[width=\linewidth]{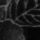}

	\end{minipage}}
  \subfigure[Channel 13]{
	\begin{minipage}{0.2\linewidth}
		\centering
		\includegraphics[width=\linewidth]{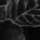}

	\end{minipage}}
  \subfigure[Channel 17]{
	\begin{minipage}{0.2\linewidth}
		\centering
		\includegraphics[width=\linewidth]{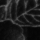}

	\end{minipage}}
  \subfigure[Channel 21]{
	\begin{minipage}{0.2\linewidth}
		\centering
		\includegraphics[width=\linewidth]{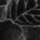}

	\end{minipage}}
  \subfigure[Channel 25]{
	\begin{minipage}{0.2\linewidth}
		\centering
		\includegraphics[width=\linewidth]{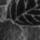}

	\end{minipage}}
  \subfigure[Channel 29]{
	\begin{minipage}{0.2\linewidth}
		\centering
		\includegraphics[width=\linewidth]{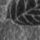}

	\end{minipage}}

 \caption{Partial view of cloth in different channels.}
  \label{Partially enlarged view of cloth in different channels}
\end{figure}

\subsection{Image denoising}

Our simulation experiment tested two different levels of Gaussian white noise. The Gaussian noise with standard deviations of 0.05 and 0.1 was added to each band in the ground-truth multispectral images. For each method, the denoising results with the highest MPSNR achieved after adjusting the regularization parameters are listed in \cref{Denoising Comparison results}. The bold number in the table means it is the best in its row, and the underlined number represents the second-best data. The result shows that no matter the MPSNR, the MSSIM, or time (second), GOTTV performs well. Regarding the setting of our parameters, we fixed initial r as 0.01, $r_{max}= 10^{6}, \rho=1.8$, the maximum iteration is $10^{4}$, and the other two parameters are shown in \cref{Denoising Parameter setting}. \cref{PSNR comparison for cloth,SSIM comparison for cloth,PSNR comparison for face,SSIM comparison for face} showcase the comparative results of denoising performance in terms of PSNR and SSIM metrics across different channels for various methods. From these figures, we can draw two conclusions. First, GOTTV outperforms the other methods in the majority of channels in terms of both PSNR and SSIM. Second, different methods exhibit variations in the recovery results across different channels. There are two main reasons for the differences in the recovery results of different channels. First, the differences in the Frobenius norms between channels indicate variations in signal intensity, image details, and structural information across different channels. These differences can lead to variations in the performance of recovery methods on different channels. Second, the regularization parameters influence the recovery results in each channel. Our chosen parameters are adjusted to maximize the average PSNR across all channels. This means that parameter selection is based on the overall performance rather than individual channel performance. Therefore, the final image recovery result prioritizes the overall performance rather than precise optimization for each channel. These reasons contribute to the differences in the recovery results of different channels.

We can understand the overall performance of different methods from the above results. Next, we will visually display the recovery of details and textures of different methods by zooming in the following pseudo-color \cref{First Part of Cloth,Second Part of Cloth,Second Part of face,First Part of face}. The image located within the large green boxes at the four corners is an enlarged view of the image within the smaller green boxes. From \cref{First Part of Cloth}, obviously, only our GOTTV method effectively eliminates noise, whereas denoising outcomes from alternative methods exhibit green and red unnaturally stains, which are not present in the Ground-truth image. Also, it can be observed that only GOTTV effectively preserves the veins of the leaves, while the other methods result in the loss of this texture in the denoised outcome. From \cref{Second Part of Cloth}, it is evident that only GOTTV is capable of accurately separating each white circle without introducing any color artifacts. The other methods exhibit unnatural red-green speckles around the white circles. From the results shown in \cref{First Part of face}, only GOTTV can fully recover the three geometric lines on the clothing. The denoising outcomes of the remaining methods fail to separate the two uppermost white stripes. Finally, let's focus on the shadow area between the forehead and the eye in \cref{Second Part of face}. It can be observed that GOTTV effectively restores the facial features as well as the shadow, while other methods exhibit red-green artifacts in the shadow and forehead region. These comparisons highlight the superior denoising performance of GOTTV in terms of both noise reduction and preservation of texture and details.

Our real data experiment tests three widely used datasets corrupted by real noise: Jasper Ridge, Pavia Centre, and Pavia University, which can be downloaded from the website: https://rslab.ut.ac.ir/data. The data indicates evident noise contamination in channels 220-224 of Jasper Ridge and channels 1-6 of Pavia Centre and Pavia University. Therefore, denoising primarily targeted these channels. Given the absence of ground truth in real images, we employed the Q-metric \cite{zhu2010automatic} to compare denoising results, which is a blind image content measurement index widely utilized by recent researchers. A higher Q-metric value normally indicates better image quality. The \cref{Blind denoising result} below presents the MQ-metric of different methods, which is the average Q-metric for each channel. It is evident that our method achieves the highest or the second-highest Q-metric values. 

To better visualize the denoising results, we selected three representative channels for each image to form pseudo-color images (channels 220-222 for Jasper Ridge and 3-5 for the others), as shown in \cref{Jasper Ridge,Pavia Center,Pavia University}. From \cref{Jasper Ridge}, it can be observed that although the MQ-metric value of GOTTV is slightly lower than that of ASSTV, the denoising results of ASSTV do not effectively preserve smooth curves. This leads to instances where curves become predominantly vertical. Meanwhile, the denoising outcomes of other methods do not successfully eliminate noise, as evidenced by the shimmering phenomenon in the results. Our GOTTV method excels in both detail preservation and denoising, showcasing outstanding performance. From \cref{Pavia Center}, it is evident that the denoising results of TV, VTV, and SSAHTV fail to preserve the topographical information of the cultivated land, resulting in the loss of texture. Moreover, as depicted in \cref{Pavia Center,Pavia University}, although ASSTV can retain the stripe information, the boundaries of the stripes appear coarse and jagged. In contrast, among these methods, only the GOTTV approach not only effectively preserves texture information but also produces smoothly recovered results.

\begin{table}[h]
\caption{Parameter setting.}
\label{Denoising Parameter setting}
\centering
\begin{tabular}{|l|l|l|l|l|l|l|}
\hline
Std                   & Parameter & Cloth   & Face  & Jelly & Picture & Thread \\ \hline
\multirow{2}{*}{0.05} & $\lambda$         & 6.128  & 4.992   & 5.616   & 5.385     & 5.2    \\ \cline{2-7} 
                      & $\alpha$         & 0.129 & 0.206 & 0.197 & 0.216   & 0.157  \\ \hline
\multirow{2}{*}{0.1}  & $\lambda$         & 2.744     & 2.346   & 2.678   & 2.508    & 2.435   \\ \cline{2-7} 
                      & $\alpha$         & 0.139   & 0.212 & 0.208   & 0.216   & 0.174   \\ \hline
\end{tabular}
\end{table}

\begin{table}[htbp]
\centering
\label{Denoising Comparison results}
\caption{Denoising Comparison results.}
\scalebox{0.9}{
\begin{tabular}{|l|l|l|l|l|l|l|l|l|} 
\hline
Figure                   & Std                   & Measure & Degraded image & TV      & VTV     & ASSTV           & SSAHTV          & GOTTV             \\ 
\hline
\multirow{6}{*}{Cloth}   & \multirow{3}{*}{0.05} & MPSNR   & 26.0204     & 32.2596 & 33.0256 & 33.2623         & \underline{33.3446} & \textbf{35.7352}  \\ 
\cline{3-9}
                         &                       & MSSIM   & 0.5413      & 0.8448  & 0.8643  & \underline{0.8795}  & 0.8734          & \textbf{0.9187}   \\ 
\cline{3-9}
                         &                       & Time    & -          & 156.08 & 85.72 & 658.7 &\underline{15.68} &\textbf{12.19} \\ 
\cline{2-9}
                         & \multirow{3}{*}{0.1}  & MPSNR   & 19.9974     & 29.2079 & 30.0321 & 29.4339         & \underline{30.1990} & \textbf{33.2734}  \\ 
\cline{3-9}
                         &                       & MSSIM   & 0.2688      & 0.7226  & 0.7622  & 0.7395          & \underline{0.7714}  & \textbf{0.8743}   \\ 
\cline{3-9}              &                       & Time    & -          & 204.07  & 142.97 &654.39 & \underline{30.68} & \textbf{12.64}\\
                         
\hline
\multirow{6}{*}{Face} & \multirow{3}{*}{0.05} & MPSNR   & 26.0189     & 40.2321 & 41.444 & {41.4194} & \underline{41.8420}         & \textbf{43.1765}  \\ 
\cline{3-9}
                         &                       & MSSIM   & 0.2963      & 0.9558  & 0.9687  & 0.9637          & \underline{0.9706}  & \textbf{0.9784}   \\ 
\cline{3-9}
                         &                       & Time    & -          & 347.6 & 96 & 592.99 & \underline{43.11} & \textbf{13.14} \\
\cline{2-9}
                         & \multirow{3}{*}{0.1}  & MPSNR   & 19.9963     & 36.9756 & 38.3222 & {37.9545} & \underline{38.5904}         & \textbf{40.4463}  \\ 
\cline{3-9}
                         &                       & MSSIM   & 0.0985      & 0.9219  & 0.9467  & {0.9393}  & \underline{0.9487}          & \textbf{0.9636}   \\ 
\cline{3-9}             &                       & Time    & -            & 407.77 & 114.74 & 545.52 & \underline{54.22} & \textbf{13.07}\\
                         
\hline
\multirow{6}{*}{Jelly}   & \multirow{3}{*}{0.05} & MPSNR   & 26.0250     & 34.8628 & 35.9847 & {36.2345} & \underline{36.2948}         & \textbf{37.8589}  \\ 
\cline{3-9}
                         &                       & MSSIM   & 0.4801      & 0.9296  & 0.9424  & 0.9410          & \underline{0.9463}  & \textbf{0.9532}   \\ 
 \cline{3-9}
                        &                       &Time     & -           &198.53 &71.18&676.39&\underline{26.68}&\textbf{12.21}\\

\cline{2-9}

                         & \multirow{3}{*}{0.1}  & MPSNR   & 19.9995     & 31.0173 & 32.3106 & 31.5751         & \underline{32.6626} & \textbf{34.5847}  \\ 
\cline{3-9}
                         &                       & MSSIM   & 0.2456      & 0.8622  & 0.8906  & 0.8836          & \underline{0.8981}  & \textbf{0.912}   \\ 
\cline{3-9}
                        &                        &Time & - &222.87&127.91&646.85&\underline{34.57}&\textbf{13.2}\\

\hline

\multirow{6}{*}{Picture}    & \multirow{3}{*}{0.05} & MPSNR   & 26.0182     & 39.0883 & 40.3065 & \underline{40.6019}         & {40.5127} & \textbf{42.0049}  \\ 
\cline{3-9}
                         &                       & MSSIM   & 0.3095      & 0.9405  & 0.9583  & 0.9605          & \underline{0.9616}  & \textbf{0.9659}   \\ 
\cline{3-9}             &                       &Time & -&369.51 &78.69 & 655.35 &\underline{35.96} & \textbf{19.9}\\

\cline{2-9}
                         & \multirow{3}{*}{0.1}  & MPSNR   & 19.9977    & 35.4922 & 36.8778 & \underline{37.1563}         & {37.0249} & \textbf{39.2127}  \\ 
\cline{3-9}

                         &                       & MSSIM   & 0.1101      & 0.8793  & 0.9151  & \underline{0.9296}          & {0.9186}  & \textbf{0.9403}   \\ 
\cline{3-9}
                        &                        & Time & - & 418.93 & 110.07 & 630.46 & \underline{43.5}& \textbf{19.75}\\

\hline
\multirow{6}{*}{Thread}  & \multirow{3}{*}{0.05} & MPSNR   & 26.0202     & 37.0093 & 38.0824 & 38.3433         & \underline{38.4523} & \textbf{40.7385}  \\ 
\cline{3-9}
                         &                       & MSSIM   & 0.3428      & 0.9228  & 0.9396  & {0.9452}  & \underline{0.9459}          & \textbf{0.9601}   \\ 
\cline{3-9}
                         &                      & Time & - & 300.81 & 61.76 & 487.15 & \underline{33.39}& \textbf{19.98}\\
\cline{2-9}
                         & \multirow{3}{*}{0.1}  & MPSNR   & 20.0040     & 33.7676 & 35.0275 & 34.1843         & \underline{35.3306} & \textbf{37.958}  \\ 
\cline{3-9}

                         &                       & MSSIM   & 0.1341      & 0.868  & 0.8995  & 0.9056          & \underline{0.9073}  & \textbf{0.9336}   \\
\cline{3-9}
                        &                       & Time & - &344.34& 97.36& 479.05& \underline{42.3} & \textbf{19.51}\\
                         
\hline
\multirow{6}{*}{Dataset}  & \multirow{3}{*}{0.05} & MPSNR & 26.0209 & 38.7435     & 39.7296 & 39.4922 & \underline{40.1250}    & \textbf{41.5407}  \\ 
\cline{3-9}
                         &                       & MSSIM   &  0.3432     & 0.9416  & 0.9518  & 0.9522  & \underline{0.9556}          & \textbf{0.9653}   \\ 
\cline{3-9}
                         &                      & Time & - & 350.47 & 90.08 & 593.64 & \underline{39.47}& \textbf{15.28}\\
\cline{2-9}
                         & \multirow{3}{*}{0.1}  & MPSNR   &   20.0001   & 35.4611 & 36.5796 & 35.4822         & \underline{36.9093} & \textbf{38.6550}  \\ 
\cline{3-9}

                         &                       & MSSIM   &   0.1355    & 0.9002  & 0.9191  & 0.9136          & \underline{0.9235}  & \textbf{0.9416}   \\
\cline{3-9}
                        &                       & Time & - &414.99& 119.21& 575.91& \underline{50.58} & \textbf{15.68}\\
                         
\hline
\end{tabular}
}


\end{table}

{
\begin{figure}[H]
	\centering
 \subfigure[Std=0.05]{
	\begin{minipage}{0.48\linewidth}
		\centering
		\includegraphics[width=\linewidth]{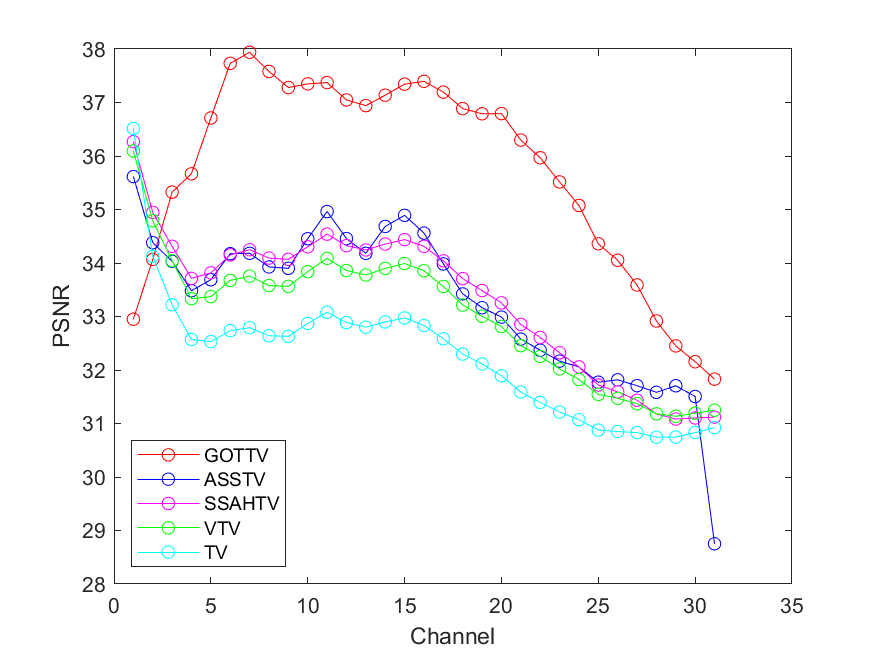}
	\end{minipage}}
 \subfigure[Std=0.1]{
	\begin{minipage}{0.48\linewidth}
		\centering
		\includegraphics[width=\linewidth]{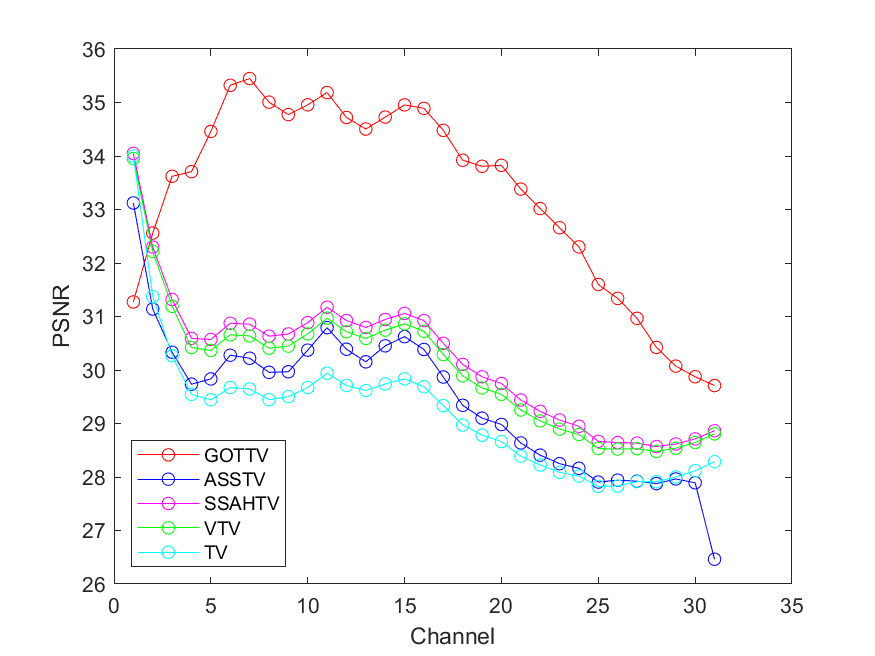}
	\end{minipage}}

   \caption{Spectrum-by-spectrum PSNR comparison chart of different methods for cloth.}
  \label{PSNR comparison for cloth}
\end{figure}
}

{
\begin{figure}[H]
	\centering
 \subfigure[Std=0.05]{
	\begin{minipage}{0.48\linewidth}
		\centering
		\includegraphics[width=\linewidth]{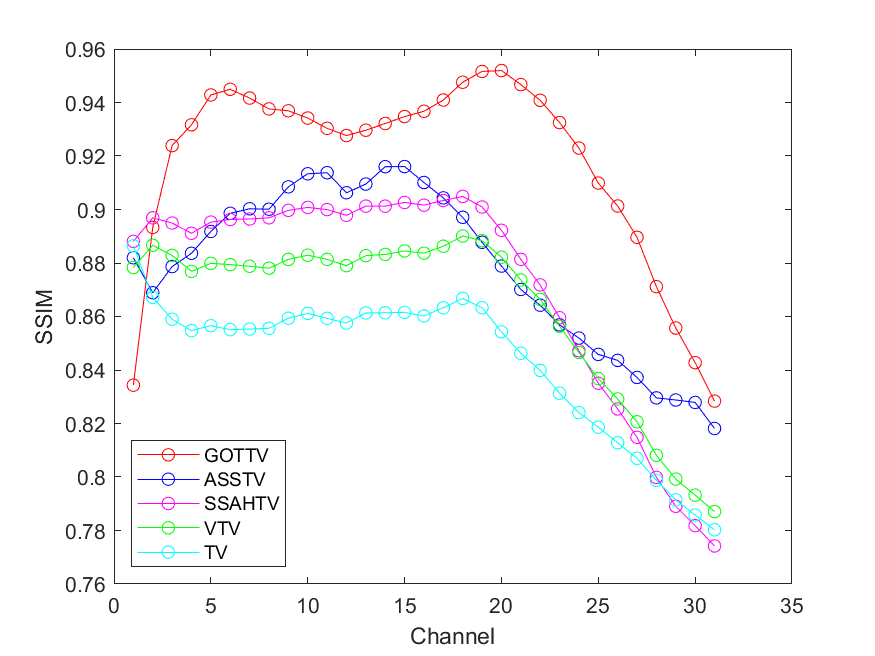}
	\end{minipage}}
 \subfigure[Std=0.1]{
	\begin{minipage}{0.48\linewidth}
		\centering
		\includegraphics[width=\linewidth]{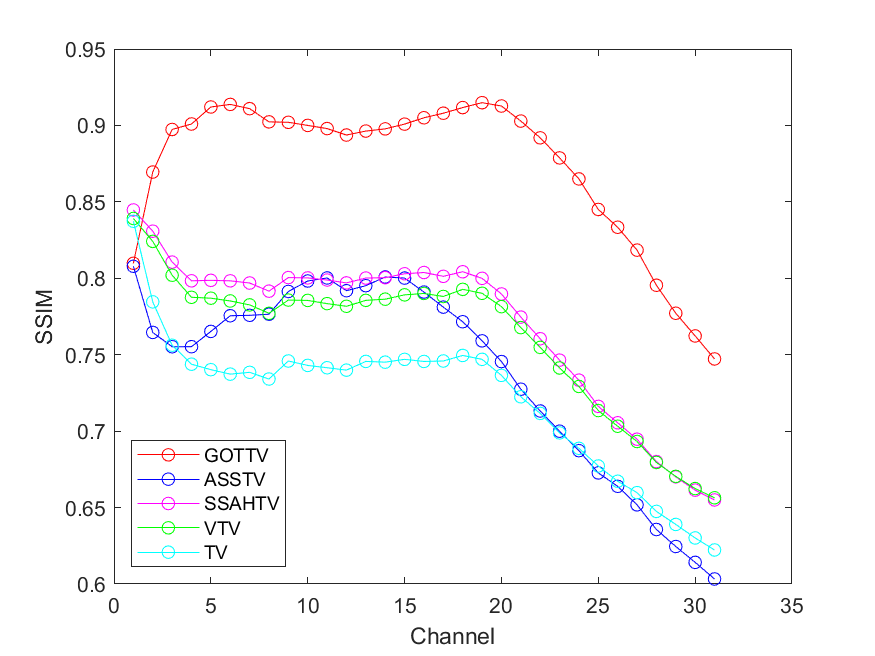}
	\end{minipage}}

   \caption{Spectrum-by-spectrum SSIM comparison chart of different methods for cloth.}
  \label{SSIM comparison for cloth}
\end{figure}
}

{
\begin{figure}[H]
	\centering
 \subfigure[Std=0.05]{
	\begin{minipage}{0.48\linewidth}
		\centering
		\includegraphics[width=\linewidth]{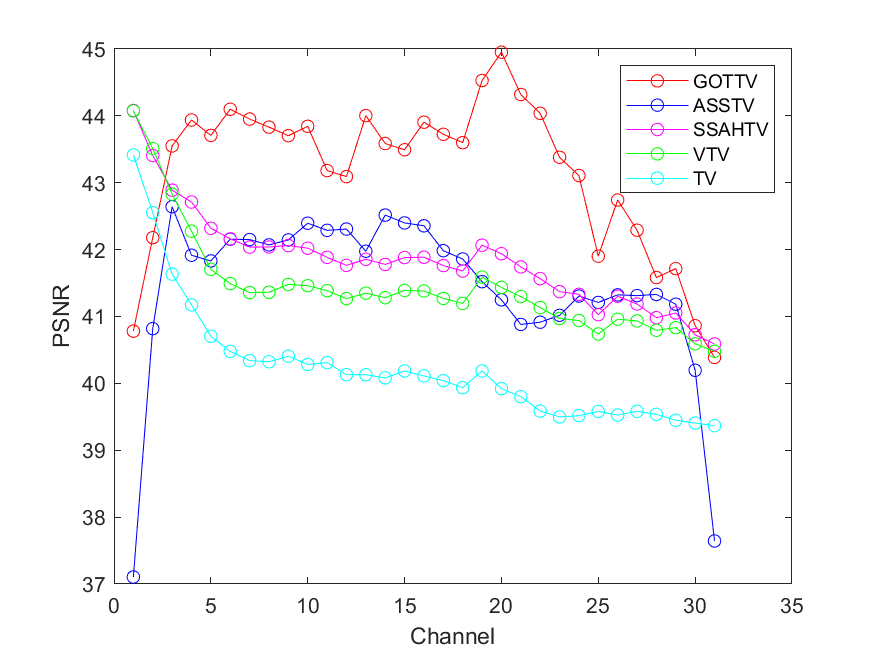}
	\end{minipage}}
 \subfigure[Std=0.1]{
	\begin{minipage}{0.48\linewidth}
		\centering
		\includegraphics[width=\linewidth]{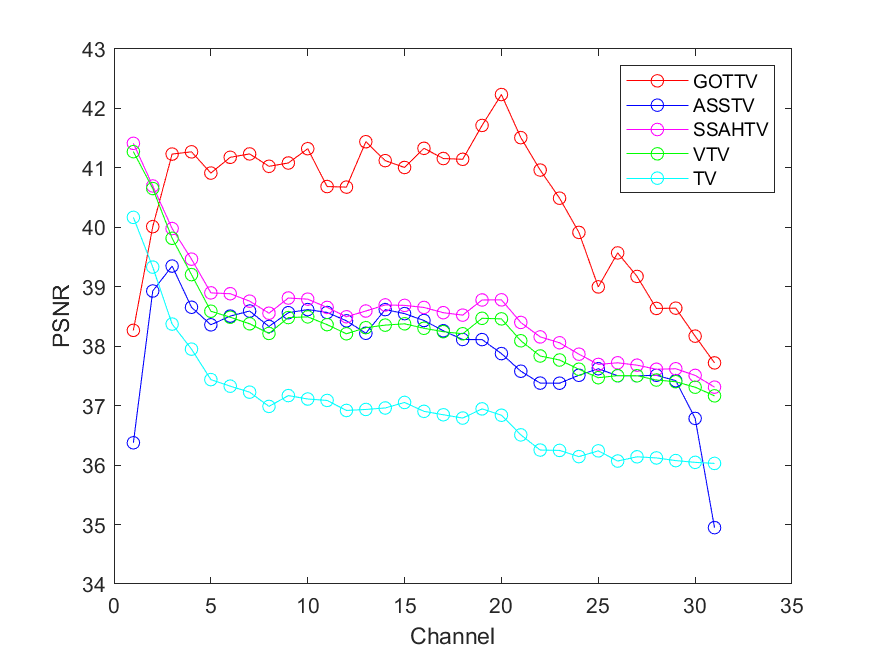}
	\end{minipage}}

   \caption{Spectrum-by-spectrum PSNR comparison chart of different methods for face.}
  \label{PSNR comparison for face}
\end{figure}
}

{
\begin{figure}[H]
	\centering
 \subfigure[Std=0.05]{
	\begin{minipage}{0.48\linewidth}
		\centering
		\includegraphics[width=\linewidth]{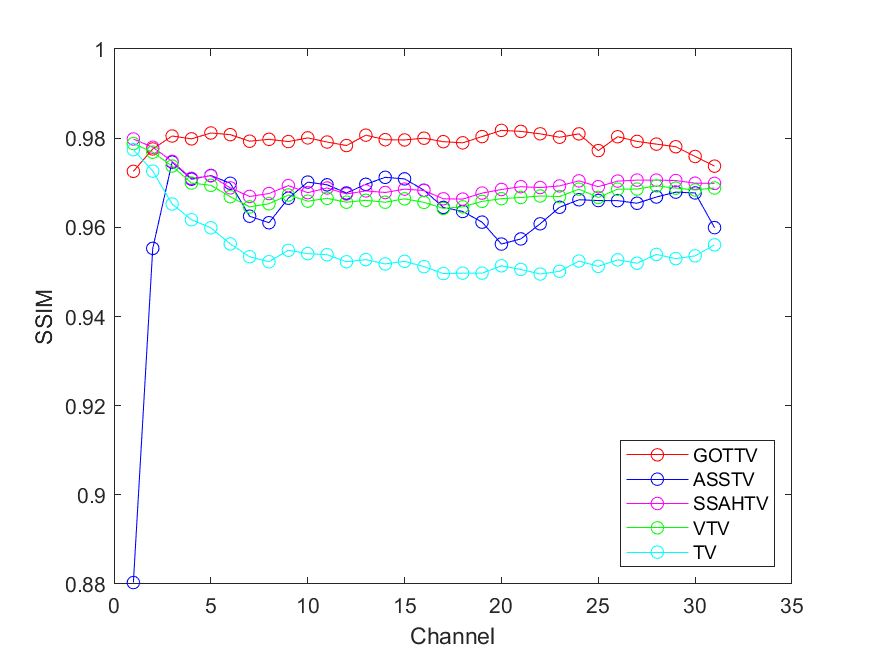}
	\end{minipage}}
 \subfigure[Std=0.1]{
	\begin{minipage}{0.48\linewidth}
		\centering
		\includegraphics[width=\linewidth]{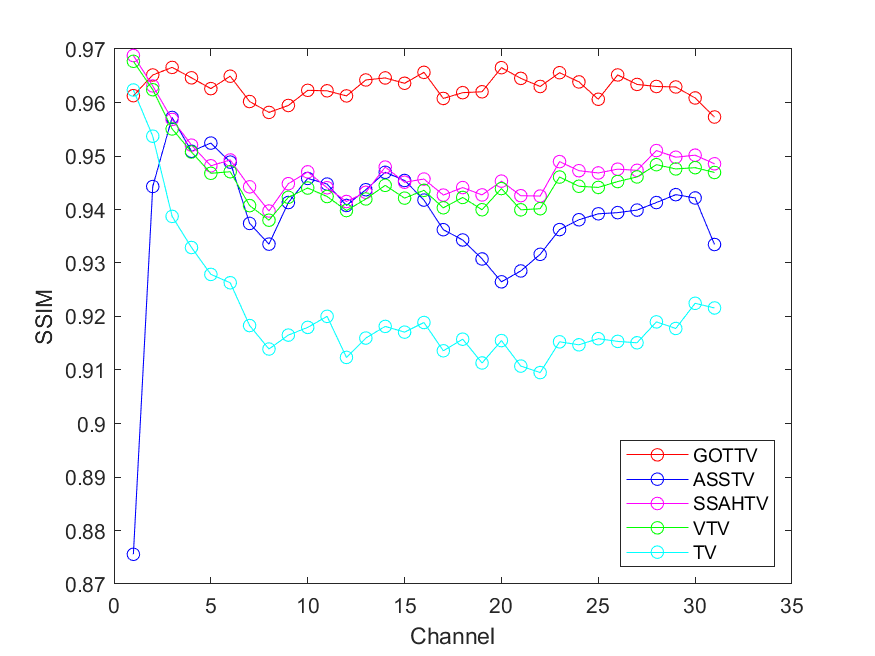}
	\end{minipage}}

   \caption{Spectrum-by-spectrum SSIM comparison chart of different methods for face.}
  \label{SSIM comparison for face}
\end{figure}
}
{
\begin{figure}[H]
		\centering
 \subfigure[Ground-truth]{
	\begin{minipage}{0.2\linewidth}
		\centering
		\includegraphics[width=\linewidth]{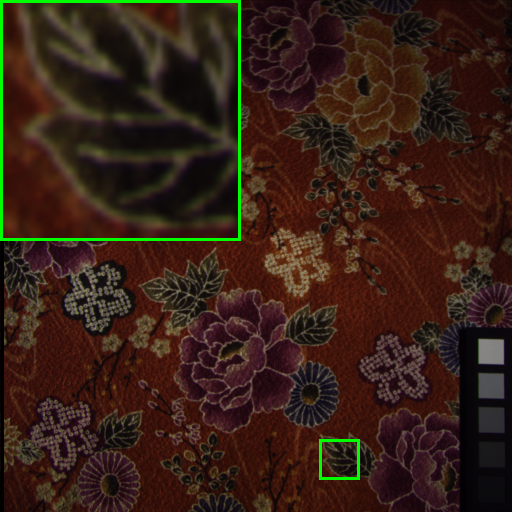}
	\end{minipage}}
 \subfigure[Degraded]{
	\begin{minipage}{0.2\linewidth}
		\centering
		\includegraphics[width=\linewidth]{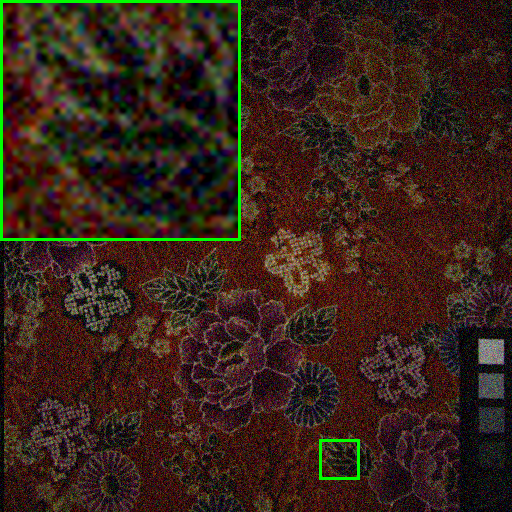}
	\end{minipage}}
 \subfigure[TV]{
	\begin{minipage}{0.2\linewidth}
		\centering
		\includegraphics[width=\linewidth]{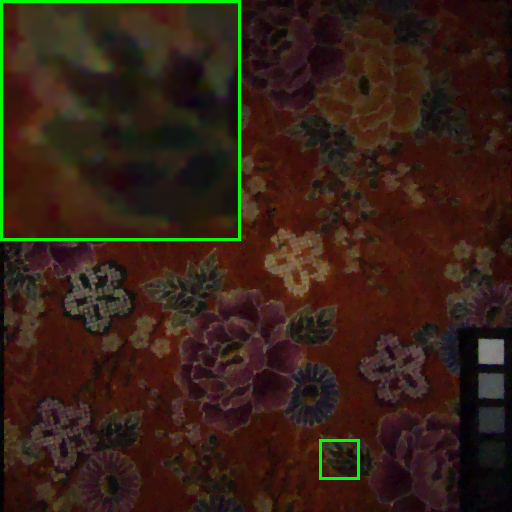}
	\end{minipage}}
  \subfigure[VTV]{
	\begin{minipage}{0.2\linewidth}
		\centering
		\includegraphics[width=\linewidth]{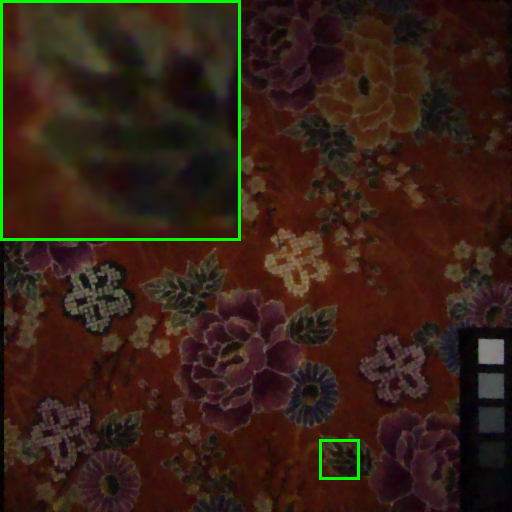}
	\end{minipage}}
   \subfigure[ASSTV]{
	\begin{minipage}{0.2\linewidth}
		\centering
		\includegraphics[width=\linewidth]{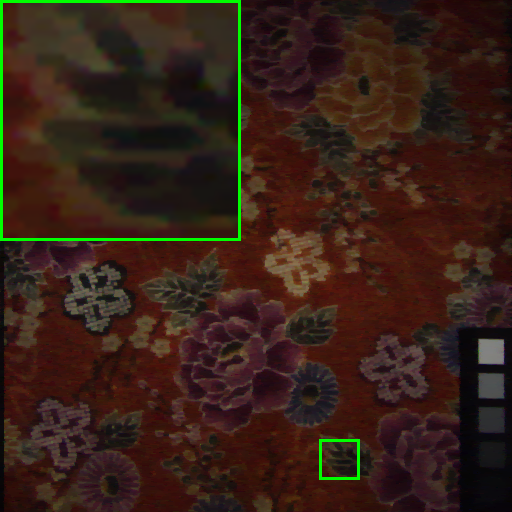}
	\end{minipage}}
    \subfigure[SSAHTV]{
	\begin{minipage}{0.2\linewidth}
		\centering
		\includegraphics[width=\linewidth]{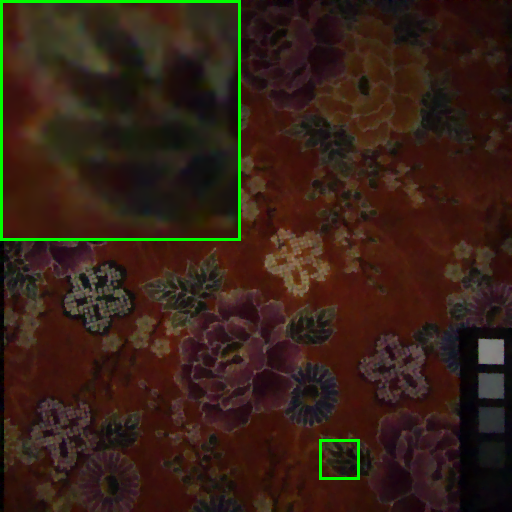}
	\end{minipage}}
    \subfigure[GOTTV]{
	\begin{minipage}{0.2\linewidth}
		\centering
		\includegraphics[width=\linewidth]{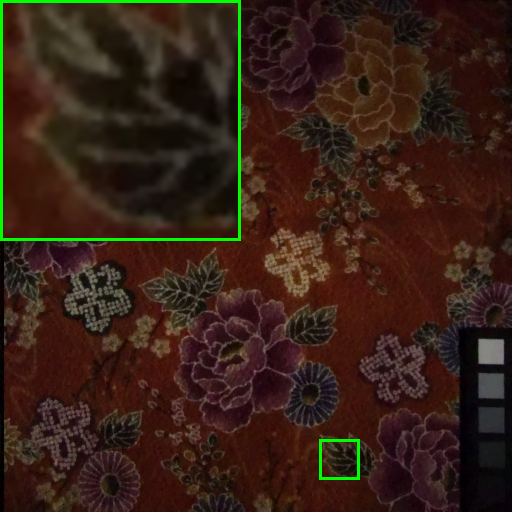}
	\end{minipage}}
 \caption{Comparison denoising results of various methods of the cloth image.}
  \label{First Part of Cloth}
\end{figure}
}

{
\begin{figure}[H]
		\centering
 \subfigure[Ground-truth]{
	\begin{minipage}{0.2\linewidth}
		\centering
		\includegraphics[width=\linewidth]{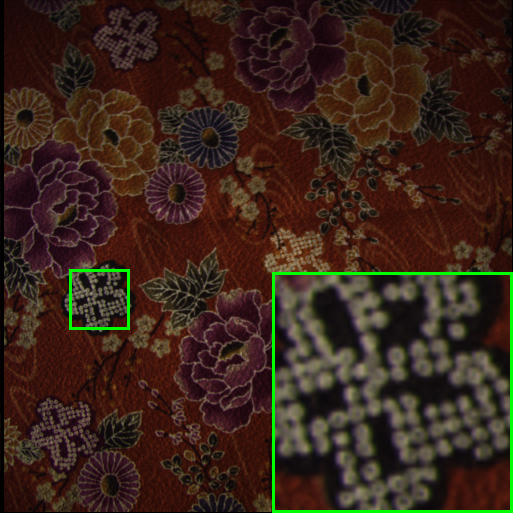}
	\end{minipage}}
 \subfigure[Degraded]{
	\begin{minipage}{0.2\linewidth}
		\centering
		\includegraphics[width=\linewidth]{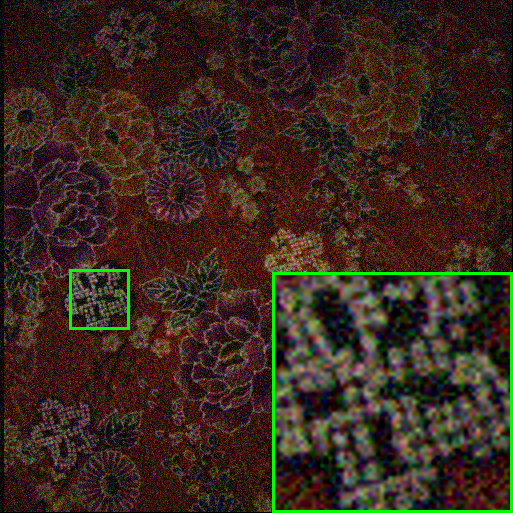}
	\end{minipage}}
 \subfigure[TV]{
	\begin{minipage}{0.2\linewidth}
		\centering
		\includegraphics[width=\linewidth]{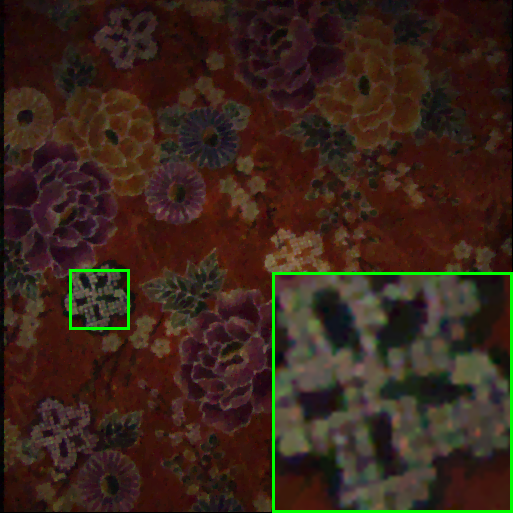}
	\end{minipage}}
  \subfigure[VTV]{
	\begin{minipage}{0.2\linewidth}
		\centering
		\includegraphics[width=\linewidth]{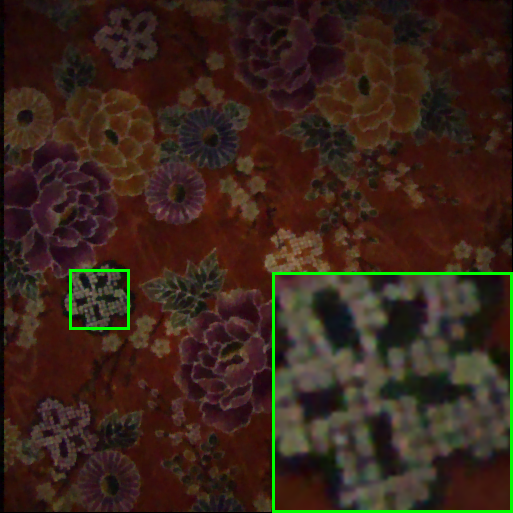}
	\end{minipage}}
   \subfigure[ASSTV]{
	\begin{minipage}{0.2\linewidth}
		\centering
		\includegraphics[width=\linewidth]{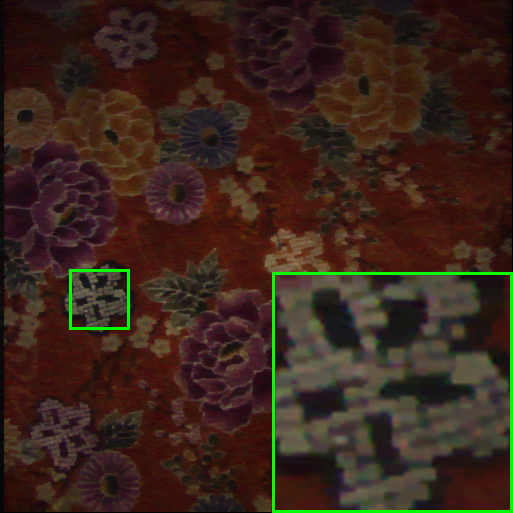}
	\end{minipage}}
    \subfigure[SSAHTV]{
	\begin{minipage}{0.2\linewidth}
		\centering
		\includegraphics[width=\linewidth]{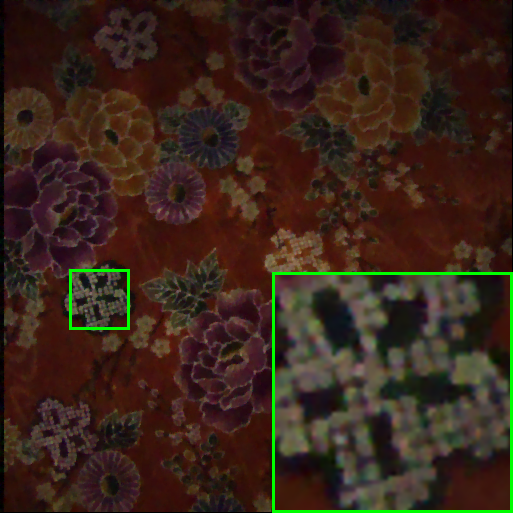}
	\end{minipage}}
    \subfigure[GOTTV]{
	\begin{minipage}{0.2\linewidth}
		\centering
		\includegraphics[width=\linewidth]{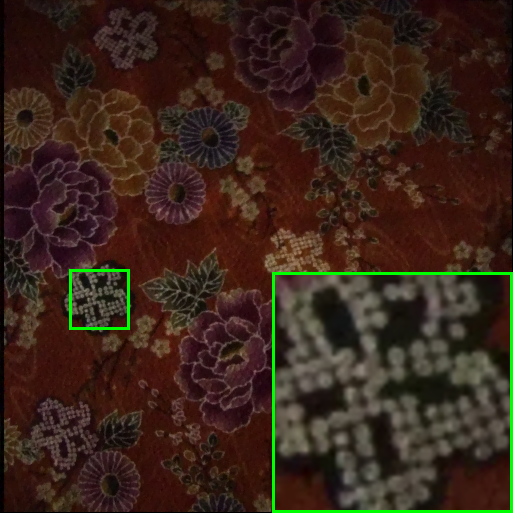}
	\end{minipage}}
 \caption{Comparison denoising results of various methods of the cloth image.}
  \label{Second Part of Cloth}
\end{figure}
}

{
\begin{figure}[H]
		\centering
 \subfigure[Ground-truth]{
	\begin{minipage}{0.2\linewidth}
		\centering
		\includegraphics[width=\linewidth]{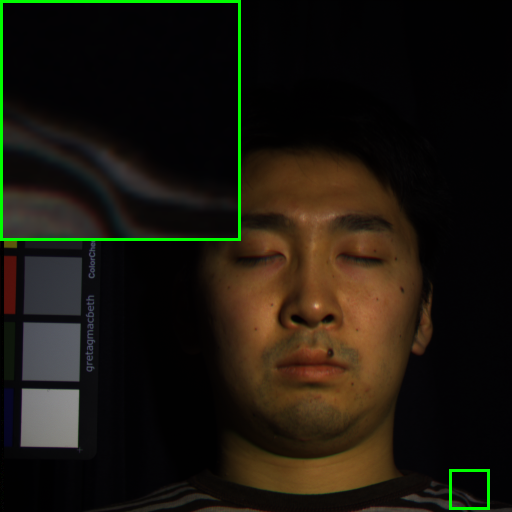}
	\end{minipage}}
 \subfigure[Degraded]{
	\begin{minipage}{0.2\linewidth}
		\centering
		\includegraphics[width=\linewidth]{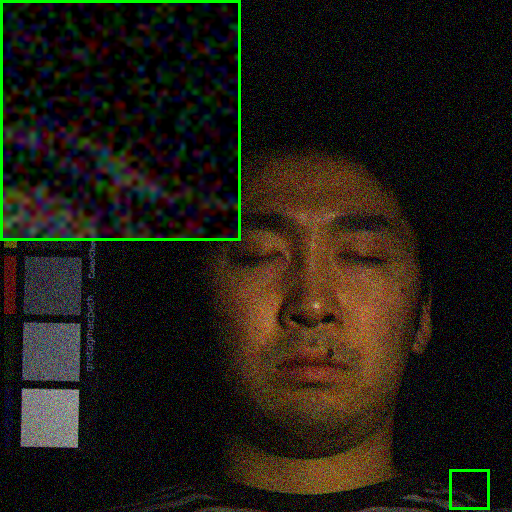}
	\end{minipage}}
 \subfigure[TV]{
	\begin{minipage}{0.2\linewidth}
		\centering
		\includegraphics[width=\linewidth]{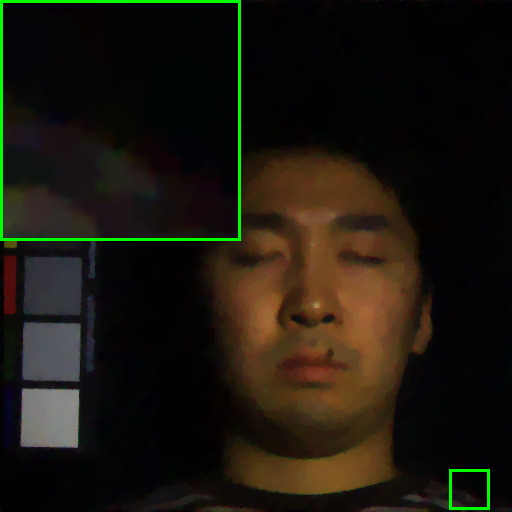}
	\end{minipage}}
  \subfigure[VTV]{
	\begin{minipage}{0.2\linewidth}
		\centering
		\includegraphics[width=\linewidth]{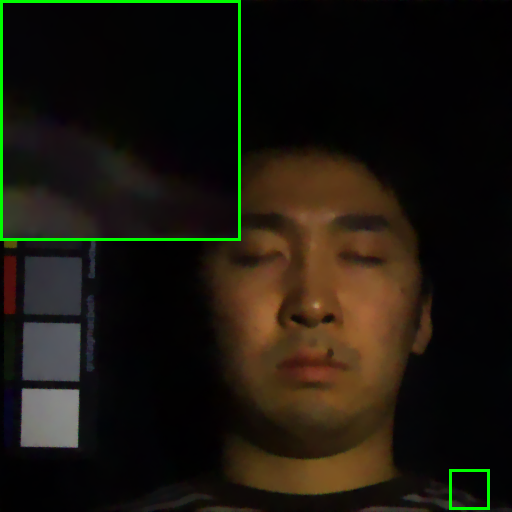}
	\end{minipage}}
   \subfigure[ASSTV]{
	\begin{minipage}{0.2\linewidth}
		\centering
		\includegraphics[width=\linewidth]{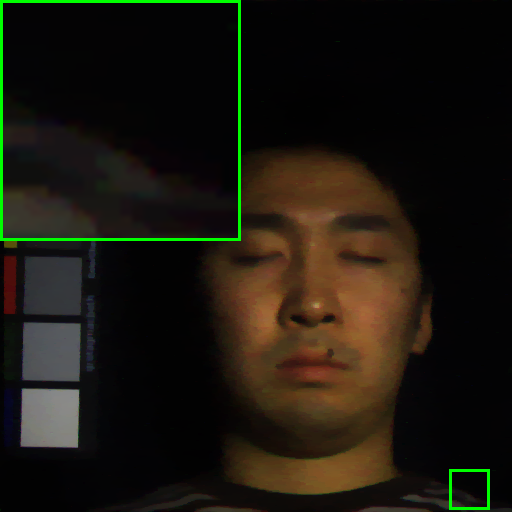}
	\end{minipage}}
    \subfigure[SSAHTV]{
	\begin{minipage}{0.2\linewidth}
		\centering
		\includegraphics[width=\linewidth]{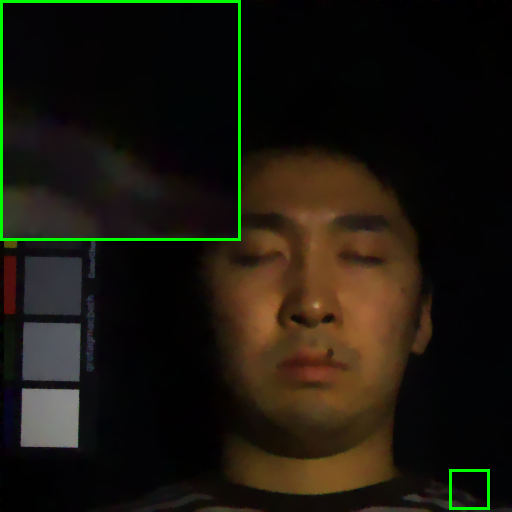}
	\end{minipage}}
    \subfigure[GOTTV]{
	\begin{minipage}{0.2\linewidth}
		\centering
		\includegraphics[width=\linewidth]{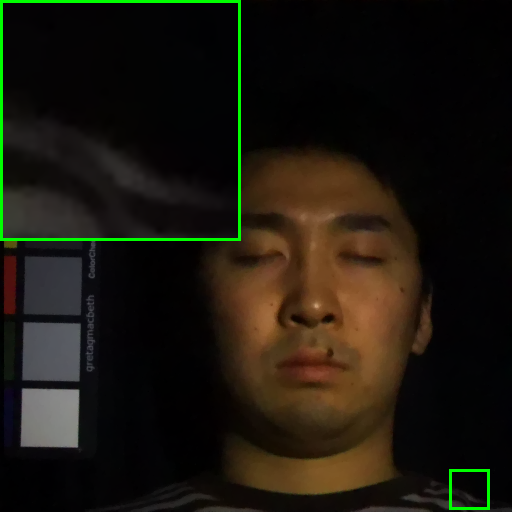}
	\end{minipage}}
 \caption{Comparison denoising results of various methods of the face image.}
  \label{First Part of face}
\end{figure}
}

{
\begin{figure}[H]
		\centering
 \subfigure[Ground-truth]{
	\begin{minipage}{0.2\linewidth}
		\centering
		\includegraphics[width=\linewidth]{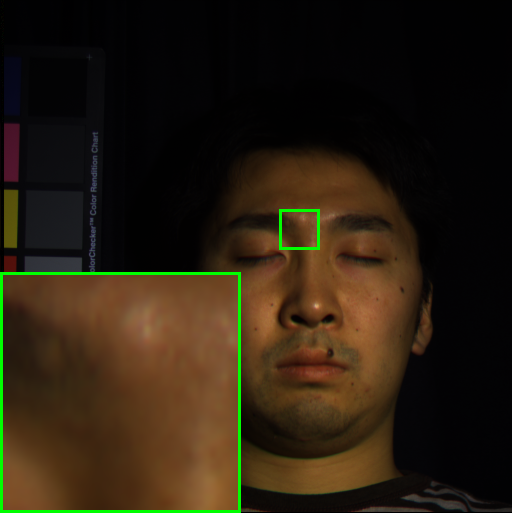}
	\end{minipage}}
 \subfigure[Degraded]{
	\begin{minipage}{0.2\linewidth}
		\centering
		\includegraphics[width=\linewidth]{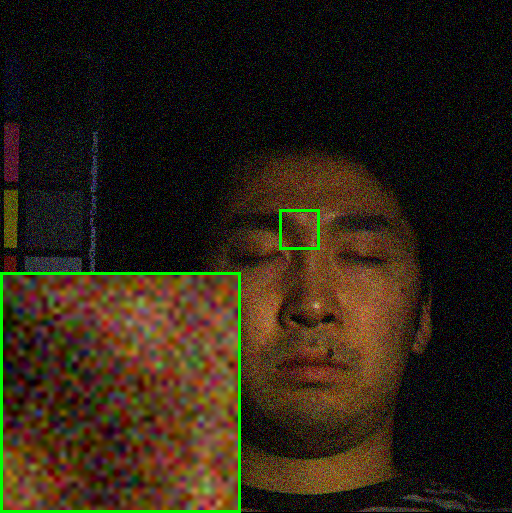}
	\end{minipage}}
 \subfigure[TV]{
	\begin{minipage}{0.2\linewidth}
		\centering
		\includegraphics[width=\linewidth]{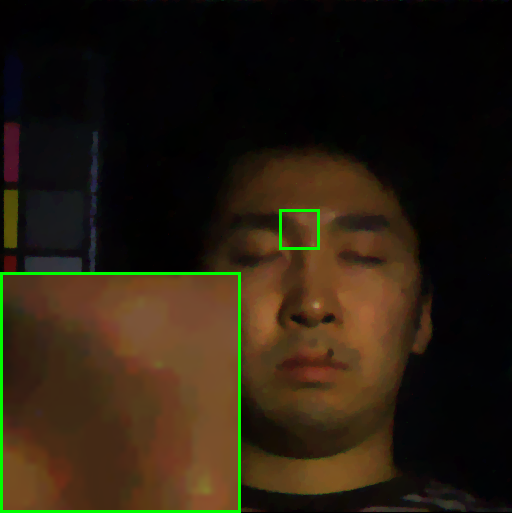}
	\end{minipage}}
  \subfigure[VTV]{
	\begin{minipage}{0.2\linewidth}
		\centering
		\includegraphics[width=\linewidth]{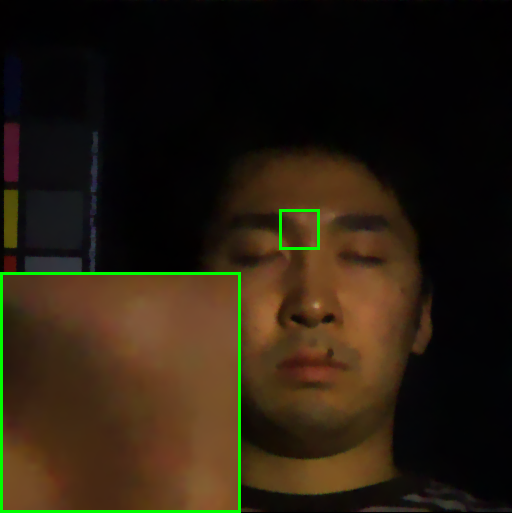}
	\end{minipage}}
   \subfigure[ASSTV]{
	\begin{minipage}{0.2\linewidth}
		\centering
		\includegraphics[width=\linewidth]{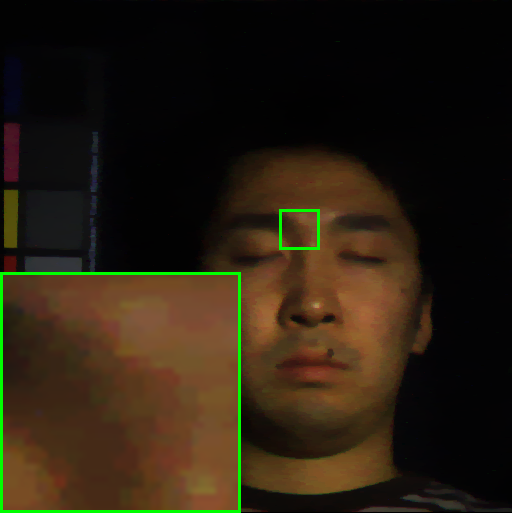}
	\end{minipage}}
    \subfigure[SSAHTV]{
	\begin{minipage}{0.2\linewidth}
		\centering
		\includegraphics[width=\linewidth]{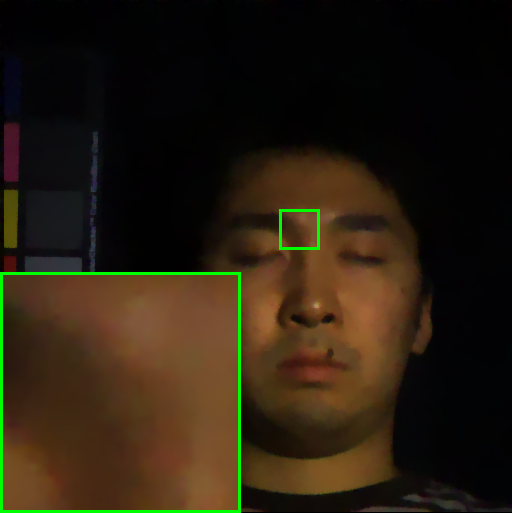}
	\end{minipage}}
    \subfigure[GOTTV]{
	\begin{minipage}{0.2\linewidth}
		\centering
		\includegraphics[width=\linewidth]{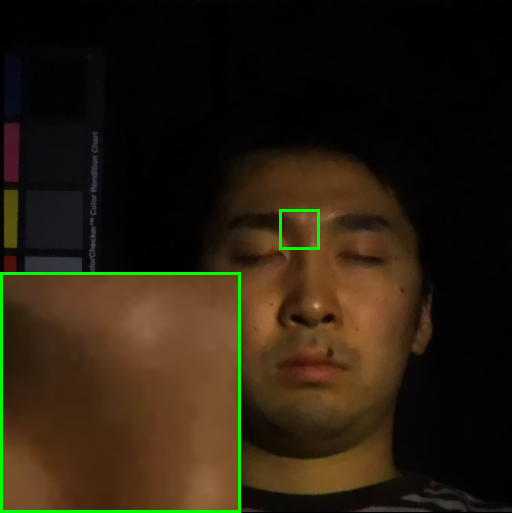}
	\end{minipage}}
 \caption{Comparison denoising results of various methods of the face image.}
  \label{Second Part of face}
\end{figure}
}


\begin{table}[H]
\caption{Blind denoising result.}
\label{Blind denoising result}
\centering
\scalebox{0.95}{
\begin{tabular}{|l|l|l|l|l|l|l|l|}
\hline
Figure            & \multicolumn{1}{c|}{Measure} & ORI    & CBCTV  & VTV          & ASSTV           & SSAHTV       & GOTTV           \\ \hline
Jasper Ridge     & MQ-metric                    & 0.0652 & 0.0964 & 0.1010       & \textbf{0.1069} & 0.1009       & {\underline{ 0.1045}}    \\ \hline
Pavia Centre     & MQ-metric                    & 0.0115 & 0.0189 & {\underline{0.0223}} & {\underline{0.0223}}    & {\underline{0.0223}} & \textbf{0.0236} \\ \hline
Pavia University & MQ-metric                    & 0.0192 & 0.0282 & {\underline{0.0322}} & 0.0316          & 0.0321       & \textbf{0.0340} \\ \hline
\end{tabular}
}
\end{table}
{
\begin{figure}[H]
	\centering
 \subfigure[Degraded]{
	\begin{minipage}{0.25\linewidth}
		\centering
		\includegraphics[width=\linewidth]{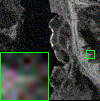}
	\end{minipage}}
 \subfigure[TV]{
	\begin{minipage}{0.25\linewidth}
		\centering
		\includegraphics[width=\linewidth]{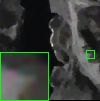}
	\end{minipage}}
  \subfigure[VTV]{
	\begin{minipage}{0.25\linewidth}
		\centering
		\includegraphics[width=\linewidth]{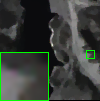}
	\end{minipage}}
    \subfigure[ASSTV]{
	\begin{minipage}{0.25\linewidth}
		\centering
		\includegraphics[width=\linewidth]{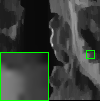}
	\end{minipage}}
     \subfigure[SSAHTV]{
	\begin{minipage}{0.25\linewidth}
		\centering
		\includegraphics[width=\linewidth]{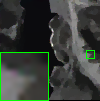}
	\end{minipage}}
     \subfigure[GOTTV]{
	\begin{minipage}{0.25\linewidth}
		\centering
		\includegraphics[width=\linewidth]{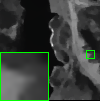}
	\end{minipage}}

   \caption{Jasper Ridge.}
  \label{Jasper Ridge}
\end{figure}
}
{
\begin{figure}[H]
	\centering
 \subfigure[Degraded]{
	\begin{minipage}{0.15\linewidth}
		\centering
		\includegraphics[width=\linewidth]{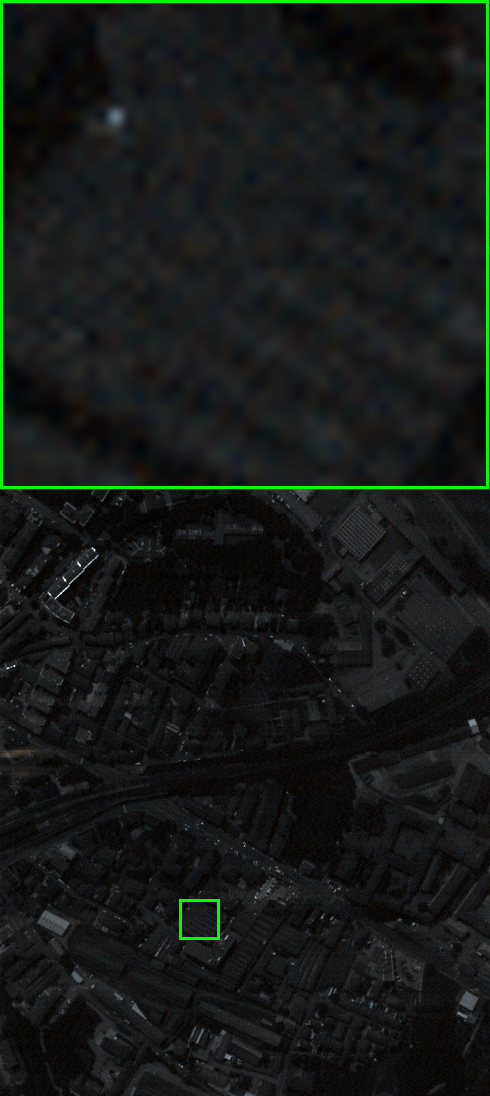}
	\end{minipage}}
      \subfigure[TV]{
	\begin{minipage}{0.15\linewidth}
		\centering
		\includegraphics[width=\linewidth]{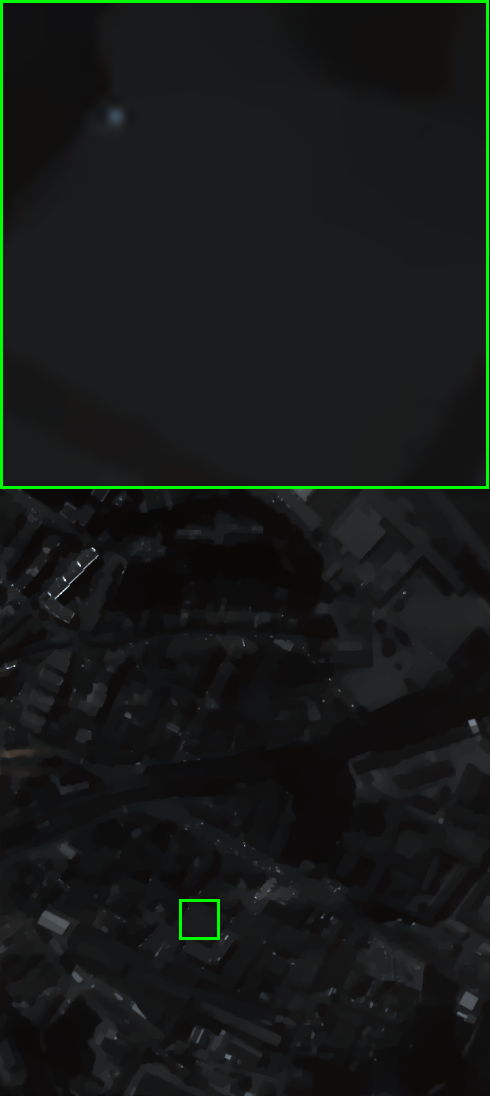}
	\end{minipage}}
  \subfigure[VTV]{
	\begin{minipage}{0.15\linewidth}
		\centering
		\includegraphics[width=\linewidth]{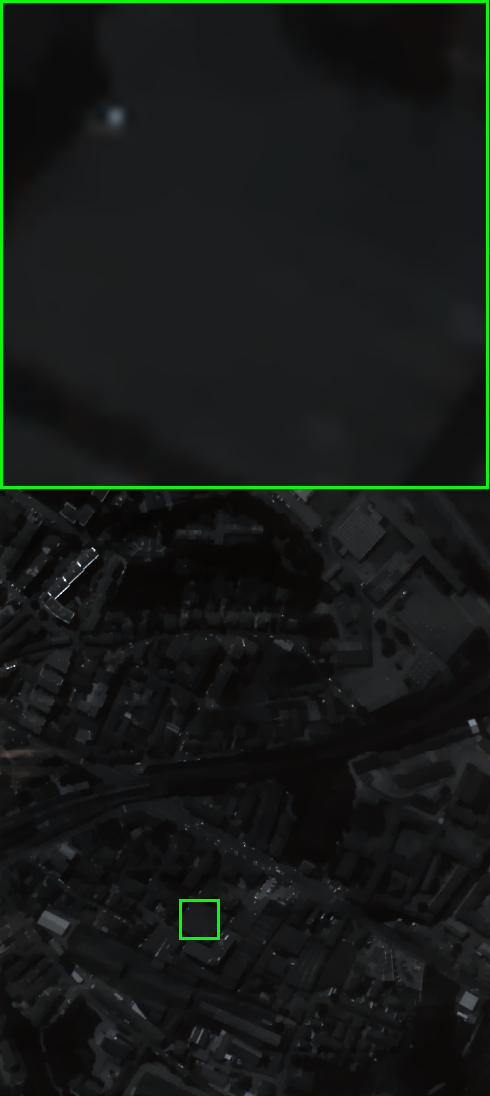}
	\end{minipage}}
    \subfigure[ASSTV]{
	\begin{minipage}{0.15\linewidth}
		\centering
		\includegraphics[width=\linewidth]{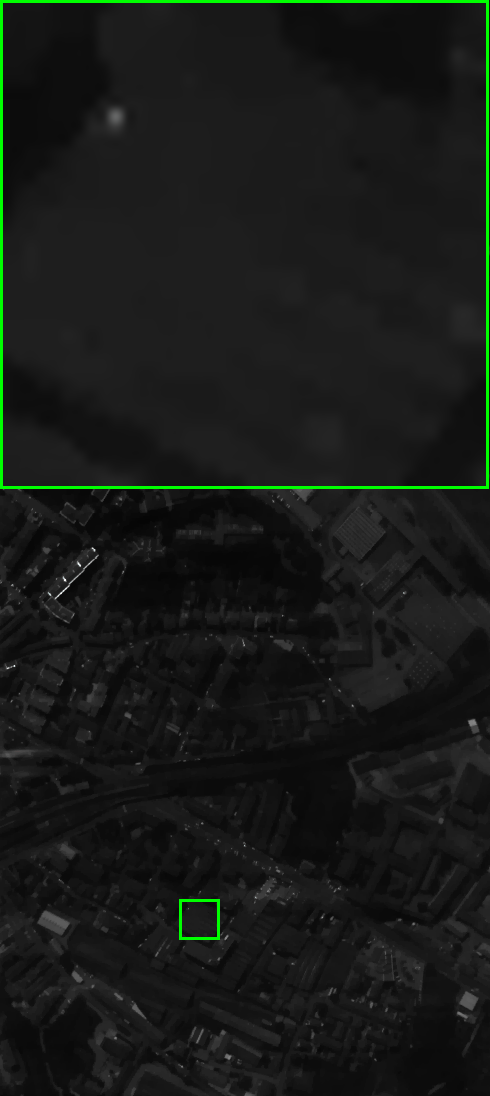}
	\end{minipage}}
     \subfigure[SSAHTV]{
	\begin{minipage}{0.15\linewidth}
		\centering
		\includegraphics[width=\linewidth]{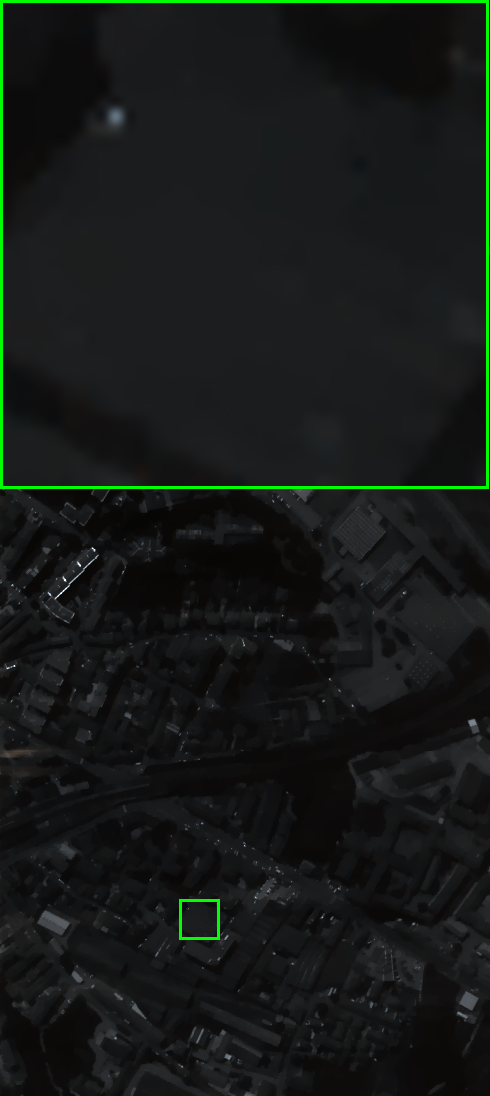}
	\end{minipage}}
     \subfigure[GOTTV]{
	\begin{minipage}{0.15\linewidth}
		\centering
		\includegraphics[width=\linewidth]{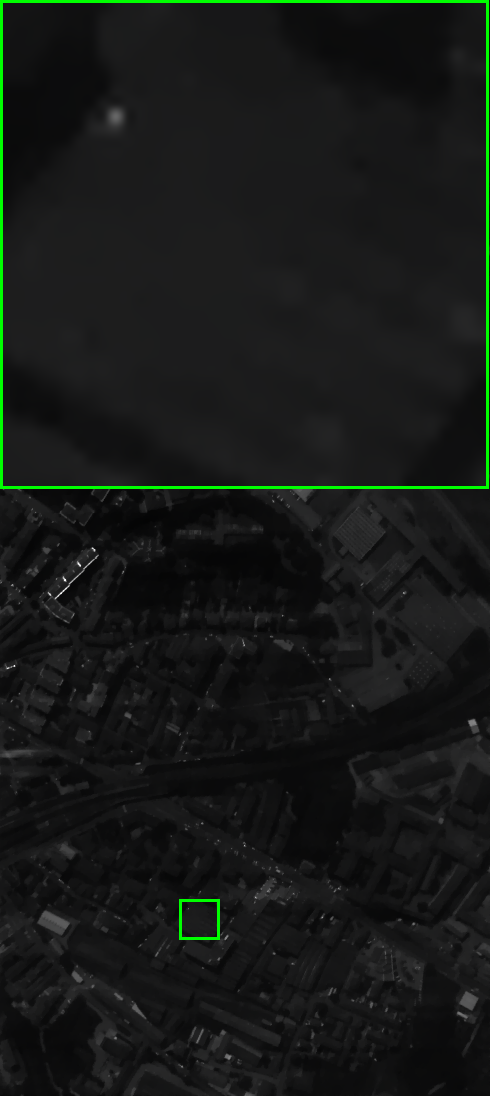}
	\end{minipage}}
   \caption{Pavia Center.}
  \label{Pavia Center}
\end{figure}
}
{
\begin{figure}[H]
	\centering
 \subfigure[Degraded]{
	\begin{minipage}{0.15\linewidth}
		\centering
		\includegraphics[width=\linewidth]{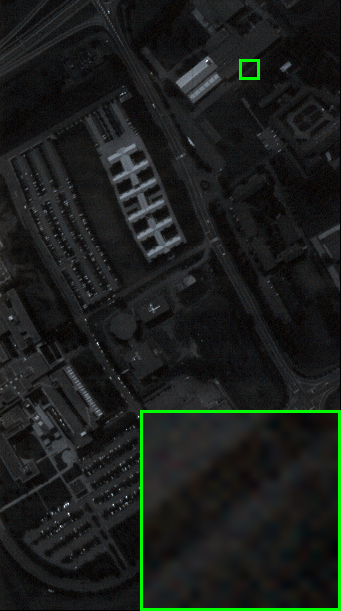}
	\end{minipage}}
      \subfigure[TV]{
	\begin{minipage}{0.15\linewidth}
		\centering
		\includegraphics[width=\linewidth]{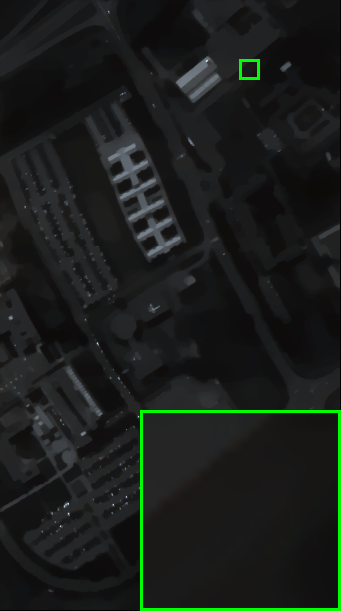}
	\end{minipage}}
  \subfigure[VTV]{
	\begin{minipage}{0.15\linewidth}
		\centering
		\includegraphics[width=\linewidth]{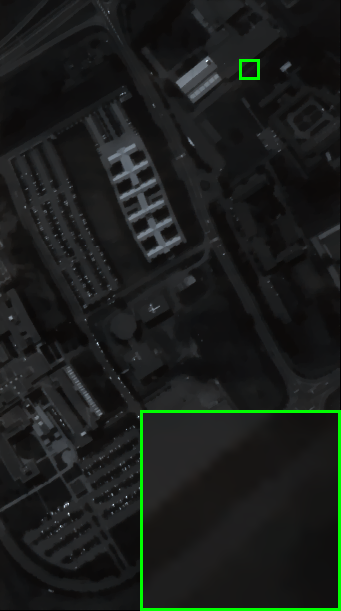}
	\end{minipage}}
    \subfigure[ASSTV]{
	\begin{minipage}{0.15\linewidth}
		\centering
		\includegraphics[width=\linewidth]{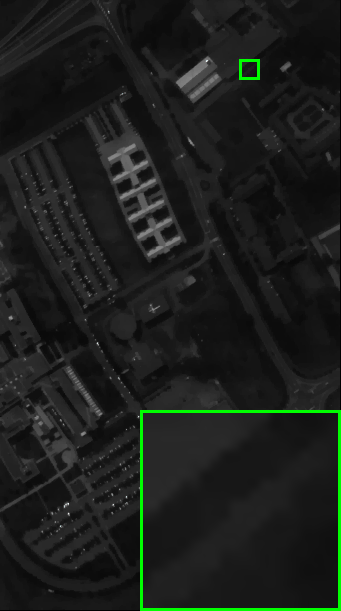}
	\end{minipage}}
     \subfigure[SSAHTV]{
	\begin{minipage}{0.15\linewidth}
		\centering
		\includegraphics[width=\linewidth]{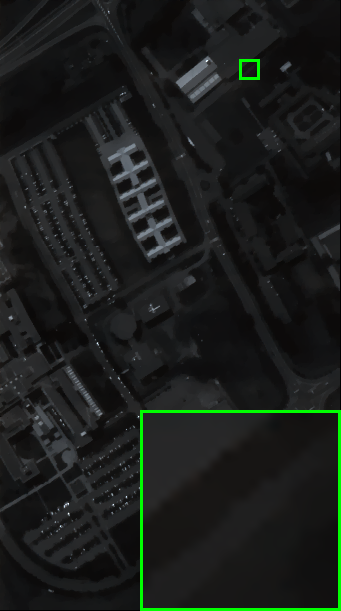}
	\end{minipage}}
     \subfigure[GOTTV]{
	\begin{minipage}{0.15\linewidth}
		\centering
		\includegraphics[width=\linewidth]{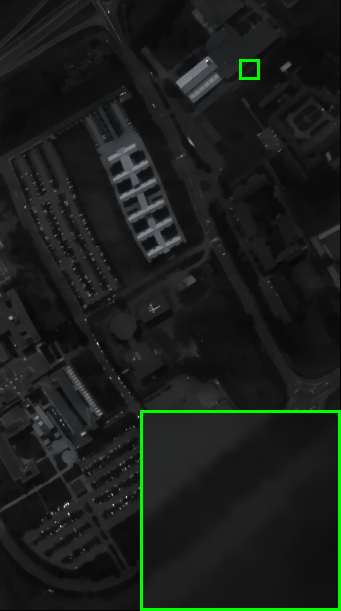}
	\end{minipage}}
   \caption{Pavia University.}
  \label{Pavia University}
\end{figure}
}

\subsection{Image deblurring}

In our non-blind deblurring experiment, we convolute the ground truth MSI with Gaussian kernel of the standard deviation 1/1.5 respectively and add Gaussian noise with standard deviations of 0.05 further. \Cref{Deblurring comparison results} displays the deblurring outcomes with the highest MPSNR after adjusting the regularization parameters. In \cref{Deblurring comparison results}, the best value in each row is highlighted in bold, while the second best value is underlined. The results indicate that regardless of the evaluation metrics used (MPSNR, MSSIM, time(second)), GOTTV consistently demonstrates excellent performance. In terms of parameter configuration, we set the value of initial r as 0.01, $r_{max}= 10^{6}, \rho=1.8$, and the maximum iteration as $10^{5}$. The remaining two parameters are detailed in \cref{Deblurring Parameter setting}. The deblurring performance in terms of PSNR and SSIM metrics across different channels for various methods are demonstrated in \cref{PSNR comparison for cloth deblurring,SSIM comparison for cloth deblurring,PSNR comparison for face deblurring,SSIM comparison for face deblurring}. Analyzing the results, it is evident that GOTTV surpasses other methods in both PSNR and SSIM for most channels. 

The above average and spectrum-by-spectrum PSNR and SSIM comparison results provide an understanding of the overall performance of different methods. To visually showcase the recovery of details and textures, we will zoom in on the following pseudo-color \cref{First Deblur of cloth,Second Deblur of cloth,First Deblur of face,Second Deblur of face}. Looking at the restoration results of the flower in \cref{First Deblur of cloth}, it is evident that only GOTTV effectively recovers the internal boundaries of the petals, while other methods fail to capture the internal texture information of the petals. Taking another look at \cref{Second Deblur of cloth}, only the restoration results of GOTTV preserve the gaps between the circles effectively. In contrast, the restoration results of the other methods not only fused the individual circles into a single entity but also contained unnatural red and green stains. \Cref{First Deblur of face} illustrates that only the results obtained by GOTTV effectively preserve the white stripes and successfully separate the three white stripes. In contrast, the restoration outcomes of other methods either entirely lose white stripes information or merge white stripes together. Finally, let us compare the restoration outcomes of various methods on the facial features, as depicted in \cref{Second Deblur of face}. TV and ASSTV methods fail to effectively eliminate noise, while VTV and SSAHTV, although successful in noise reduction, result in the loss of facial details, such as moles near the mouth. Only our GOTTV manages to preserve facial details while effectively removing noise. This indicates that GOTTV performs exceptionally well in restoring fine details and overall image quality. 

In the context of blind deblurring, we may not have explicit information about the specific characteristics of the blur kernels. Therefore, a common practice is to perform blur kernel estimation before engaging in non-blind image restoration \cite{li2019blur}. Numerous papers highlight that noise can significantly interfere with the accuracy of blur kernel estimation \cite{miao2020handling}. Hence, we adopt two methods to mitigate the impact of noise on blur kernel estimation. Firstly, we average along channels, consolidating the channels of a multispectral image into a grayscale representation. This channel fusion technique aims to diminish the intensity of noise as shown in \cref{15thAFuse}. Secondly, we adopt a blur kernel estimation method applying the outlier identifying and discarding (OID) method, which is robust against noise since it is devised to exploit clean elements while excluding those contaminated by outliers in the estimating kernel procedure \cite{chen2020oid}. By selecting techniques with inherent noise resilience, we aim to alleviate further the adverse effects of noise on the accuracy of blur kernel estimation. 

To ensure fairness in comparative experiments, we employed the two aforementioned methods for blur kernel estimation for each model. The \cref{Blind Deblurring comparison results} below presents the results of image recovery for each method using the estimated blur kernel. It is evident that our approach continues to maintain a leading advantage. In addition to the table, the following pseudo-color images in \cref{Blind-deblurring comparison results of the face image,Blind-deblurring comparison results of the cloth image} provide a more intuitive comparison of the blind deblurring results achieved by different methods. Observed \cref{Blind-deblurring comparison results of the cloth image}, it is evident that TV, VTV, SSAHTV, and ASSTV fail to effectively separate the white circular texture on the cloth, with the recovered results still exhibiting significant noise. Only our GOTTV demonstrates the capability to eliminate noise and effectively preserve the white circular texture. Observing \cref{Blind-deblurring comparison results of the face image}, it is apparent that TV, VTV, and SSAHTV do not effectively restore the upper two white patterns. Their recovery outcomes not only amalgamate the two white patterns into a single entity but also appear relatively blurry. Although ASSTV can separate the upper two white patterns, its recovery results appear overly sharp and square compared to the ground-truth image. Moreover, its noise removal is not entirely clean. Only our GOTTV demonstrates the ability to not only effectively restore the details of these white patterns but also to cleanly eliminate noise.

\begin{table}[H]
\caption{Deblurring comparison results.}
\label{Deblurring comparison results}
\centering
\scalebox{0.9}{
\begin{tabular}{|l|l|l|l|l|l|l|l|l|} 
\hline
Figure                   & Std                   & Measure & Degraded image & TV      & VTV     & ASSTV           & SSAHTV          & GOTTV             \\ 
\hline
\multirow{6}{*}{Cloth}   & \multirow{3}{*}{1} & MPSNR   & 25.2061     & 30.1702 & 30.9168 & \underline{31.3226}         & {30.9769} & \textbf{32.8568}  \\ 
\cline{3-9}
                         &                       & MSSIM   & 0.4412      & 0.7713  & 0.7968  & \underline{0.8240}  & 0.8000          & \textbf{0.8692}   \\ 
\cline{3-9}
                        &                       & Time & - & 428.24 & 254.98 & 3108.28 & \underline{28.18} & \textbf{12.86}\\

\cline{2-9}
                         & \multirow{3}{*}{1.5}  & MPSNR   & 24.5608     & 28.9749 & 29.5456 & \underline{29.9799}         & {29.5549} & \textbf{31.214}  \\ 
\cline{3-9}
                         &                       & MSSIM   & 0.3686      & 0.7099  & 0.7348  & \underline{0.7678}          & {0.7362}  & \textbf{0.8191}   \\ 
\cline{3-9}
                         &                      & Time & - & 514.99 & 403.03 & 3581.64 & \underline{42.69} & \textbf{12.86}\\
\hline
\multirow{6}{*}{Face} & \multirow{3}{*}{1} & MPSNR   & 25.9121     & 39.1002 & \underline{40.3380} & {40.3171} & 40.0962         & \textbf{41.3793}  \\ 
\cline{3-9}
                         &                       & MSSIM   & 0.2871      & 0.9507  & 0.9630  & 0.9627          & \underline{0.9631}  & \textbf{0.9734}   \\ 
\cline{3-9}

                        &                       &  Time  &  - & 818.36 & 608.99 & 1725.86 & \underline{50.34} & \textbf{13.28}\\
\cline{2-9}
                         & \multirow{3}{*}{1.5}  & MPSNR   & 25.8033     & 38.2502 & \underline{39.4376} & {39.5595} & 39.3234         & \textbf{40.3768}  \\ 
\cline{3-9}
                         &                       & MSSIM   & 0.2802      & 0.9436  & 0.9550  & \underline{0.9588}  & 0.9567          & \textbf{0.9684}   \\ 
\cline{3-9}             &                         & Time & - & 928.68 & 1030.37 & 2235.28 & \underline{74.19} & \textbf{13.28}\\
\hline
\multirow{6}{*}{Jelly}   & \multirow{3}{*}{1} & MPSNR   & 25.2537     & 31.3822 & 32.7269 & \underline{33.2629} & 32.8713         & \textbf{34.5739}  \\ 
\cline{3-9}
                         &                       & MSSIM   & 0.4539      & 0.9001  & 0.9127  & 0.9194        & \underline{0.9227}  & \textbf{0.938}   \\ 
\cline{3-9}
                        &                       & Time & - & 545.26 & 639.61 & 3178.01 & \underline{78.4} & \textbf{12.4} \\
\cline{2-9}
                         & \multirow{3}{*}{1.5}  & MPSNR   & 24.5277     & 29.5596 & 30.7526 & \underline{31.5427}         & {30.7444} & \textbf{32.6888}  \\ 
\cline{3-9}
                         &                       & MSSIM   & 0.4272      & 0.8695  & 0.8816  & \underline{0.8976}          & {0.8885}  & \textbf{0.9178}   \\ 
\cline{3-9}
                        &                           & Time & - & 724.04 & 860.1& 3914.04& \underline{85.58} & \textbf{12.05}\\

\hline

\multirow{6}{*}{Picture}    & \multirow{3}{*}{1} & MPSNR   & 25.89     & 37.8958 & 39.1887 & \underline{39.8298}         & {39.1153} & \textbf{40.4454}  \\ 
\cline{3-9}
                         &                       & MSSIM   & 0.2986      & 0.9321  & 0.9444  & \underline{0.9606}          & {0.9455}  & \textbf{0.9617}   \\ 
\cline{3-9}
                        &                        & Time & - & 745.34 & 640.97 & 2003.39 & \underline{47.19} & \textbf{12.83}\\
\cline{2-9}
                         & \multirow{3}{*}{1.5}  & MPSNR   & 25.7343    & 36.8457 & 38.0610 & \underline{38.8093}         & {37.9724} & \textbf{39.3872}  \\ 
\cline{3-9}
                         &                       & MSSIM   & 0.2877      & 0.9179  & 0.9260  & \underline{0.9515}         & {0.9275}  & \textbf{0.9538}   \\ 
\cline{3-9}             &                        & Time & - & 952.61 & 1075.39 & 2951.09 & \underline{78.01} & \textbf{12.87}\\
\hline
\multirow{6}{*}{Thread}  & \multirow{3}{*}{1} & MPSNR   & 25.7171     & 34.9327 &35.9682 & \underline{36.9602}         & {36.1031} & \textbf{37.9943}  \\ 
\cline{3-9}
                         &                       & MSSIM   & 0.3188      & 0.8961  & 0.9090  & \underline{0.9327}  & 0.9167          & \textbf{0.945}   \\ 
\cline{3-9}
                        &                       & Time & - & 551.83 & 449.77 & 1536.34 & \underline{37.54} & \textbf{12.67}\\
\cline{2-9}
                         & \multirow{3}{*}{1.5}  & MPSNR   & 25.3961     & 33.2929 & 34.2343 & \underline{35.6393}         & {34.3131} & \textbf{36.4153}  \\ 
\cline{3-9}
                         &                       & MSSIM   & 0.3014      & 0.8672  & 0.8798  & \underline{0.9116}         & {0.8857}  & \textbf{0.9279}   \\
\cline{3-9}
                        &                       & Time & - & 608.72 & 686.14 & 3270.96 & \underline{58.31} & \textbf{12.5}\\
\hline
\multirow{6}{*}{Dataset}  & \multirow{3}{*}{1} & MPSNR   &   25.7174   & 36.7721 &37.8611 & \underline{38.0087}         & {37.6923} & \textbf{39.1771}  \\ 
\cline{3-9}
                         &                       & MSSIM   &   0.3230    & 0.9263  & 0.9349  & \underline{0.9423}  & 0.9374          & \textbf{0.9513}   \\ 
\cline{3-9}
                        &                       & Time & - & 686.90 & 510.19 & 1889.57 & \underline{53.87} & \textbf{13.26}\\
\cline{2-9}
                         & \multirow{3}{*}{1.5}  & MPSNR   &   25.4386   & 35.5647 & 36.5761 & \underline{36.8451}         & {36.3719} & \textbf{37.8231}  \\ 
\cline{3-9}
                         &                       & MSSIM   &  0.3084    & 0.9130  & 0.9200  & \underline{0.9310}         & {0.9210}  & \textbf{0.9398}   \\
\cline{3-9}
                        &                       & Time & - & 608.72 & 686.14 & 3270.96 & \underline{58.31} & \textbf{12.5}\\
\hline
\end{tabular}
}

\end{table}

\begin{table}[H]
\caption{Deblurring parameter setting.}
\label{Deblurring Parameter setting}
\centering
\begin{tabular}{|l|l|l|l|l|l|l|}
\hline
Std                  & Parameter & Cloth    & Face   & Jelly   & Picture  & Thread   \\ \hline
\multirow{2}{*}{1}   & $\lambda$         & 10.965  & 7.527 & 9.324 & 8.471   & 7.812   \\ \cline{2-7} 
                     & $\alpha$         & 0.074  & 0.138 & 0.098  & 0.158  & 0.087 \\ \hline
\multirow{2}{*}{1.5} & $\lambda$         & 14.932   & 9.2      & 12.35   & 10.091   & 9.862  \\ \cline{2-7} 
                     & $\alpha$         & 0.045 & 0.139 & 0.057 & 0.110 & 0.053 \\ \hline
\end{tabular}

\end{table}

{
\begin{figure}[H]
	\centering
 \subfigure[Std=1]{
	\begin{minipage}{0.48\linewidth}
		\centering
		\includegraphics[width=\linewidth]{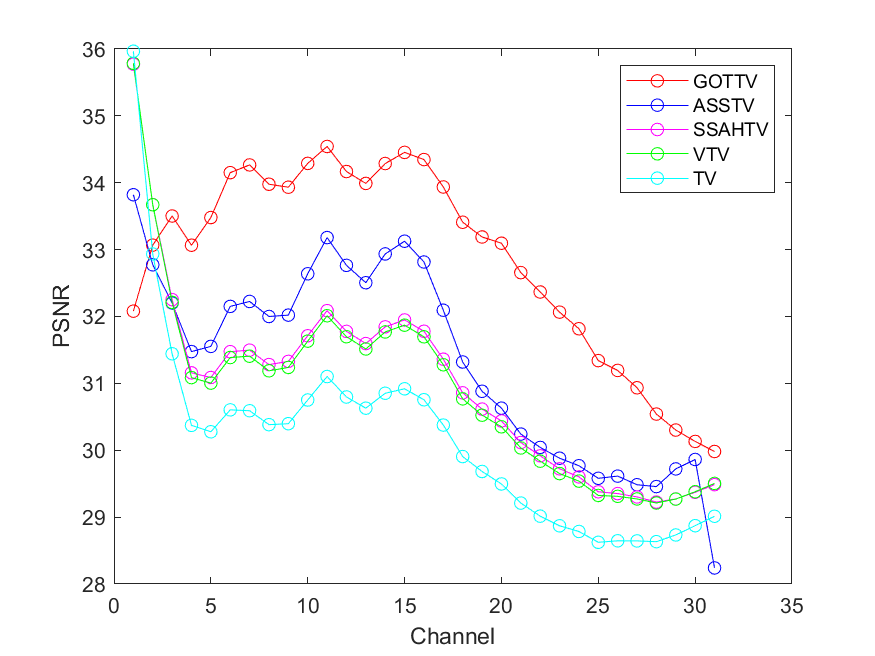}
	\end{minipage}}
 \subfigure[Std=1.5]{
	\begin{minipage}{0.48\linewidth}
		\centering
		\includegraphics[width=\linewidth]{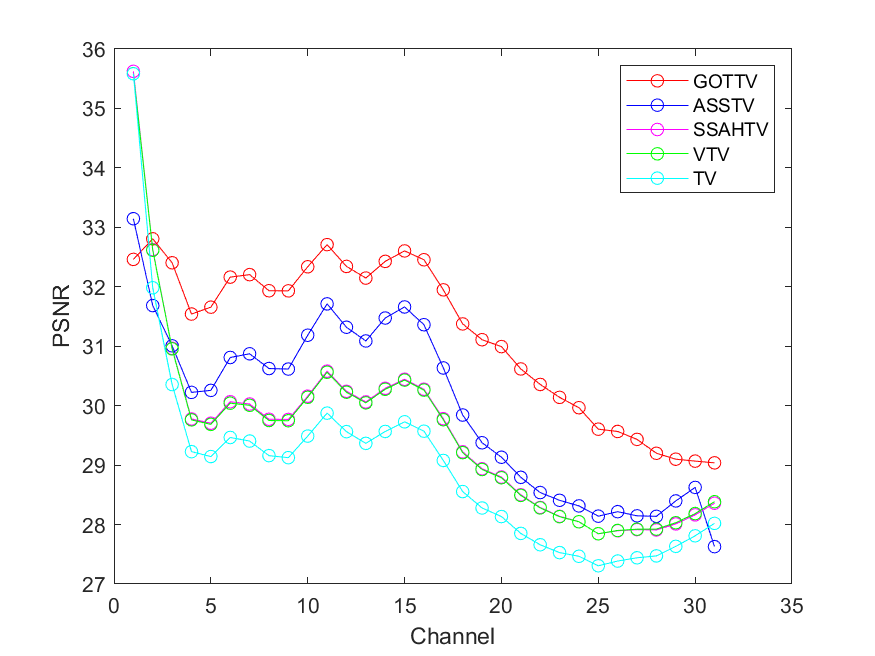}
	\end{minipage}}

   \caption{Spectrum-by-spectrum PSNR comparison chart of different methods for cloth.}
  \label{PSNR comparison for cloth deblurring}
\end{figure}
}

{
\begin{figure}[H]
	\centering
 \subfigure[Std=1]{
	\begin{minipage}{0.48\linewidth}
		\centering
		\includegraphics[width=\linewidth]{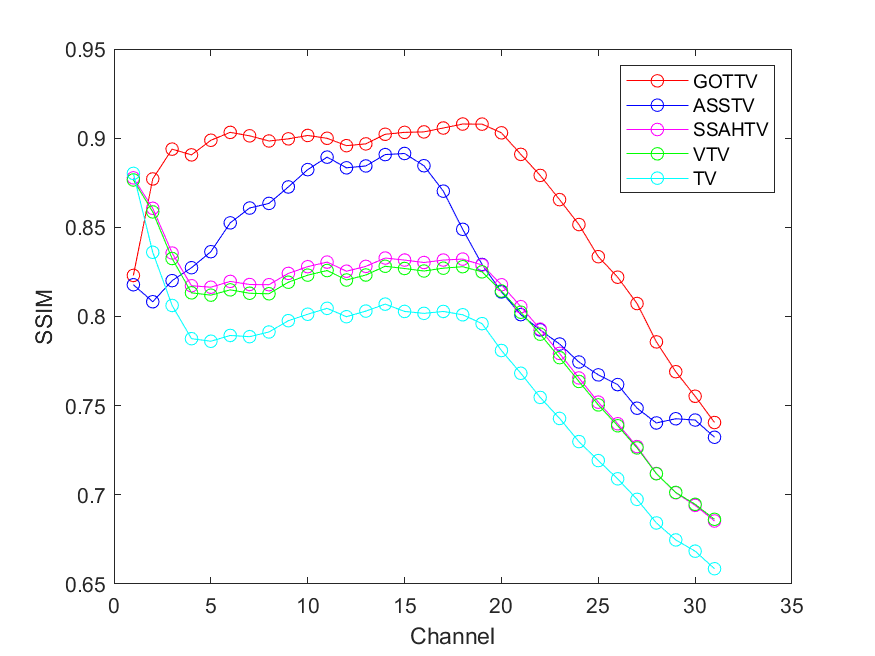}
	\end{minipage}}
 \subfigure[Std=1.5]{
	\begin{minipage}{0.48\linewidth}
		\centering
		\includegraphics[width=\linewidth]{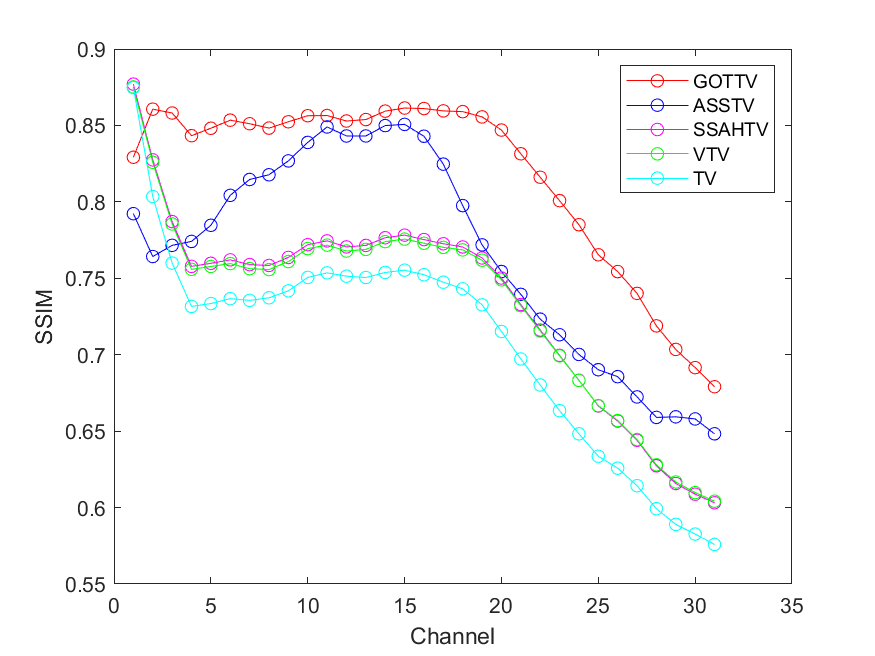}
	\end{minipage}}

   \caption{Spectrum-by-spectrum SSIM comparison chart of different methods for cloth.}
  \label{SSIM comparison for cloth deblurring}
\end{figure}
}

{
\begin{figure}[H]
	\centering
 \subfigure[Std=1]{
	\begin{minipage}{0.48\linewidth}
		\centering
		\includegraphics[width=\linewidth]{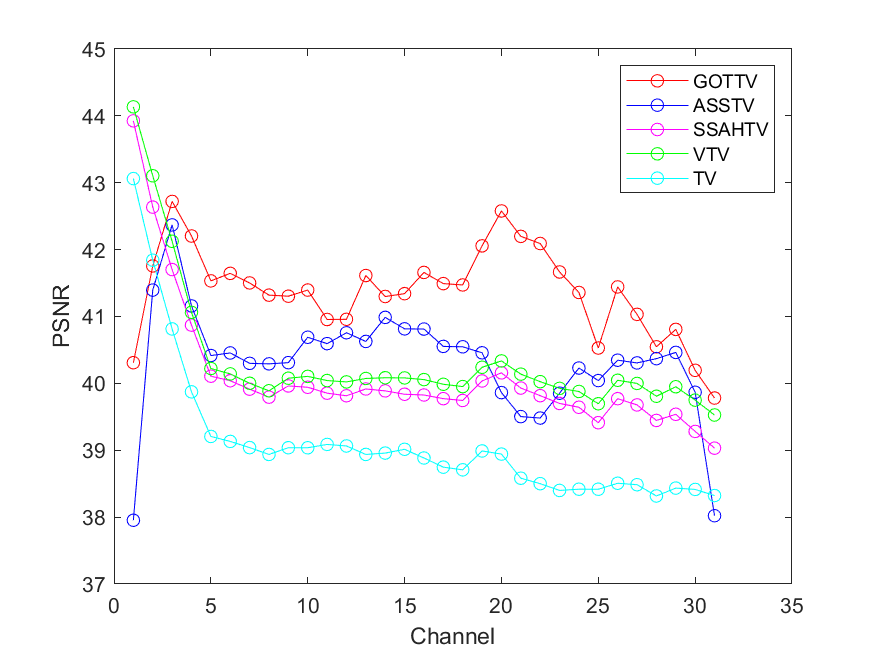}
	\end{minipage}}
 \subfigure[Std=1.5]{
	\begin{minipage}{0.48\linewidth}
		\centering
		\includegraphics[width=\linewidth]{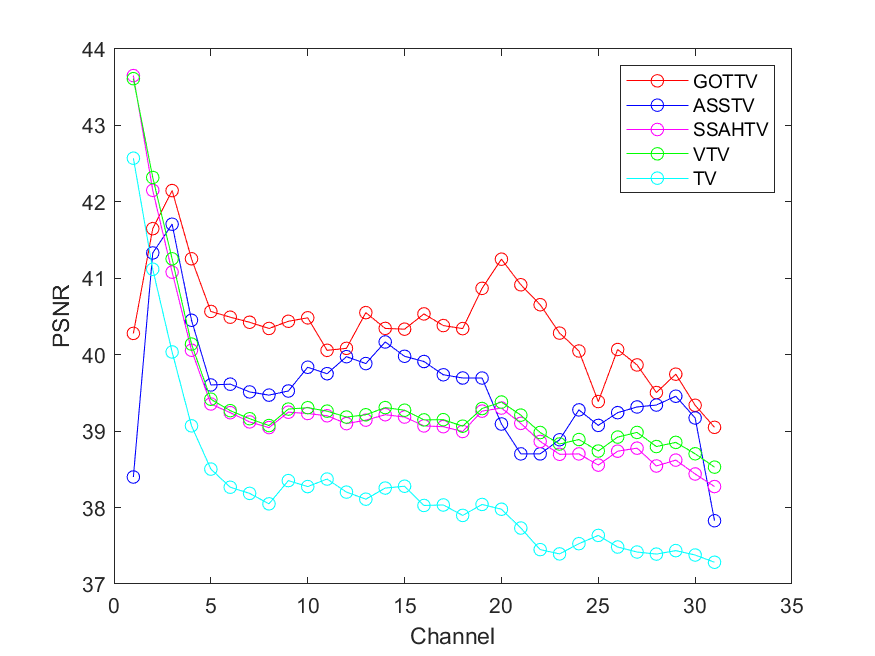}
	\end{minipage}}

   \caption{Spectrum-by-spectrum PSNR comparison chart of different methods for face.}
  \label{PSNR comparison for face deblurring}
\end{figure}
}

{
\begin{figure}[H]
	\centering
 \subfigure[Std=1]{
	\begin{minipage}{0.48\linewidth}
		\centering
		\includegraphics[width=\linewidth]{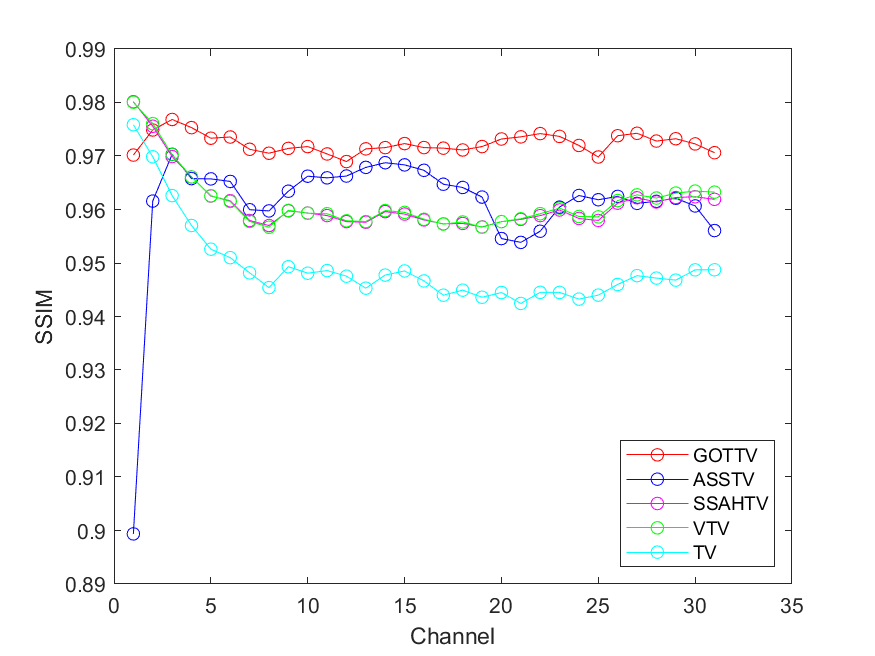}
	\end{minipage}}
 \subfigure[Std=1.5]{
	\begin{minipage}{0.48\linewidth}
		\centering
		\includegraphics[width=\linewidth]{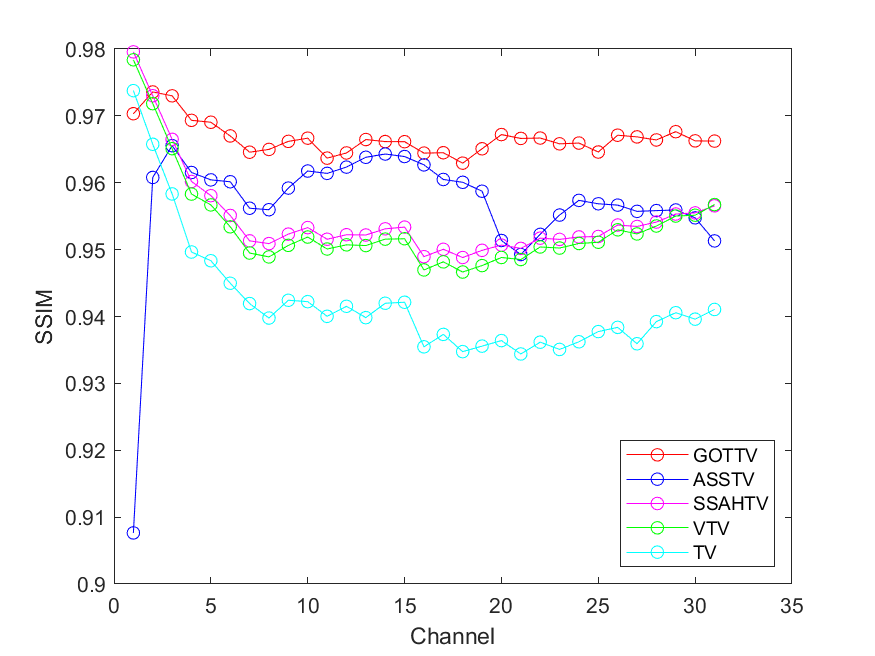}
	\end{minipage}}

   \caption{Spectrum-by-spectrum SSIM comparison chart of different methods for face.}
  \label{SSIM comparison for face deblurring}
\end{figure}
}


{
\begin{figure}[H]
		\centering
 \subfigure[Ground-truth]{
	\begin{minipage}{0.2\linewidth}
		\centering
		\includegraphics[width=\linewidth]{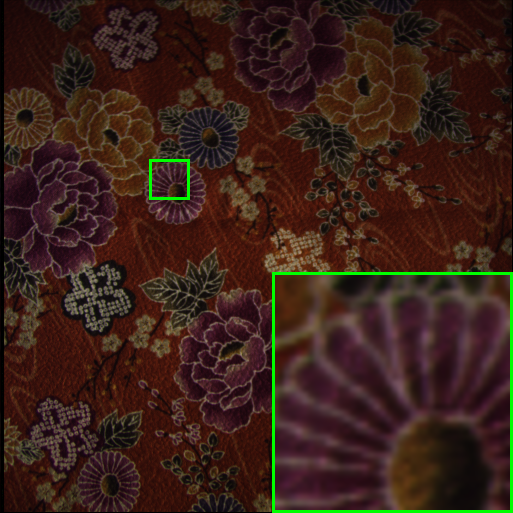}
	\end{minipage}}
 \subfigure[Degraded]{
	\begin{minipage}{0.2\linewidth}
		\centering
		\includegraphics[width=\linewidth]{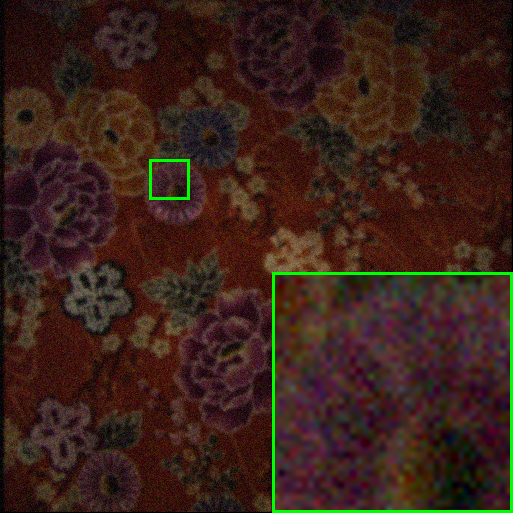}
	\end{minipage}}
 \subfigure[TV]{
	\begin{minipage}{0.2\linewidth}
		\centering
		\includegraphics[width=\linewidth]{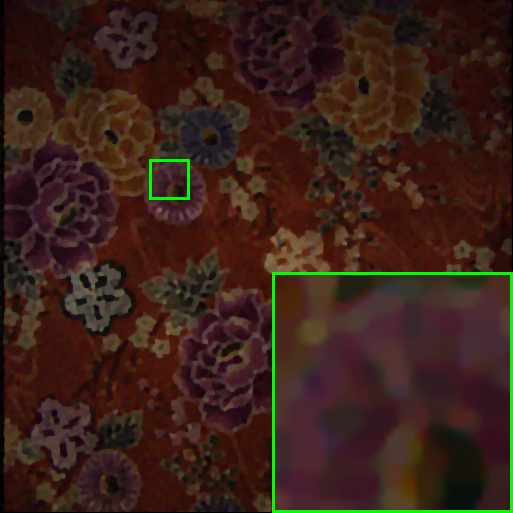}
	\end{minipage}}
  \subfigure[VTV]{
	\begin{minipage}{0.2\linewidth}
		\centering
		\includegraphics[width=\linewidth]{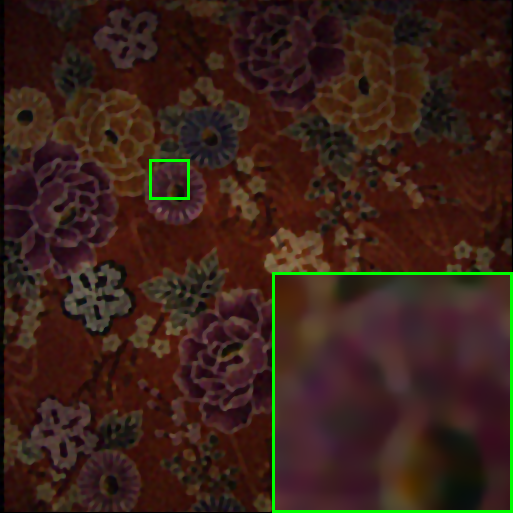}
	\end{minipage}}
   \subfigure[ASSTV]{
	\begin{minipage}{0.2\linewidth}
		\centering
		\includegraphics[width=\linewidth]{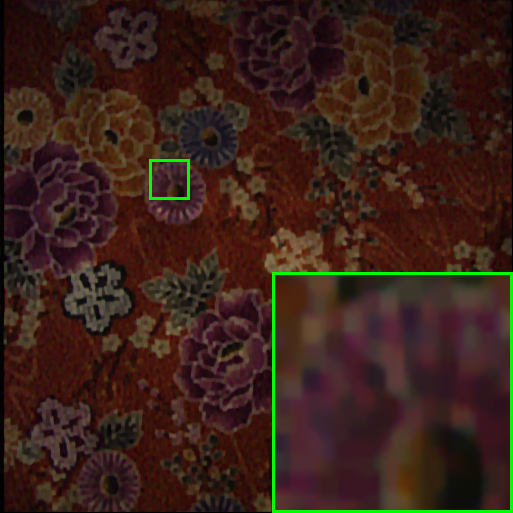}
	\end{minipage}}
    \subfigure[SSAHTV]{
	\begin{minipage}{0.2\linewidth}
		\centering
		\includegraphics[width=\linewidth]{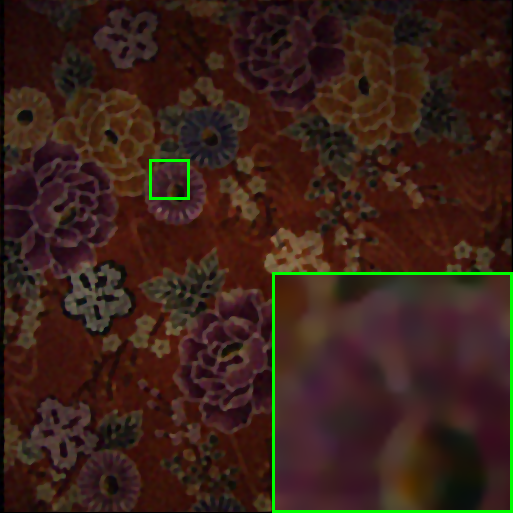}
	\end{minipage}}
    \subfigure[GOTTV]{
	\begin{minipage}{0.2\linewidth}
		\centering
		\includegraphics[width=\linewidth]{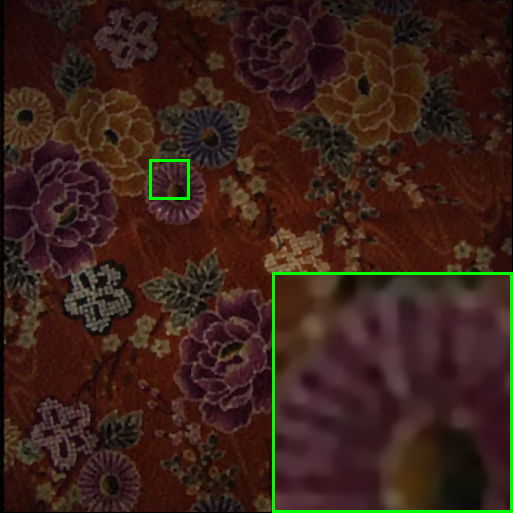}
	\end{minipage}}
 \caption{Comparison deblurring results of various methods of the cloth image.}
  \label{First Deblur of cloth}
\end{figure}
}

{
\begin{figure}[H]
		\centering
 \subfigure[Ground-truth]{
	\begin{minipage}{0.2\linewidth}
		\centering
		\includegraphics[width=\linewidth]{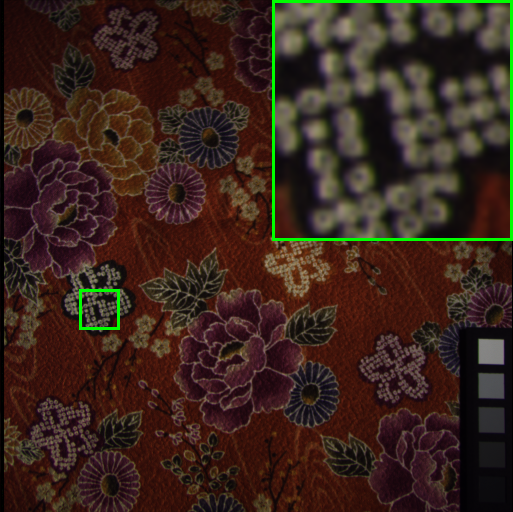}
	\end{minipage}}
 \subfigure[Degraded]{
	\begin{minipage}{0.2\linewidth}
		\centering
		\includegraphics[width=\linewidth]{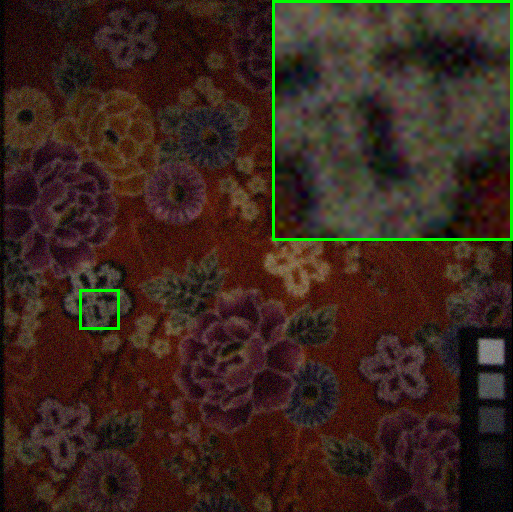}
	\end{minipage}}
 \subfigure[TV]{
	\begin{minipage}{0.2\linewidth}
		\centering
		\includegraphics[width=\linewidth]{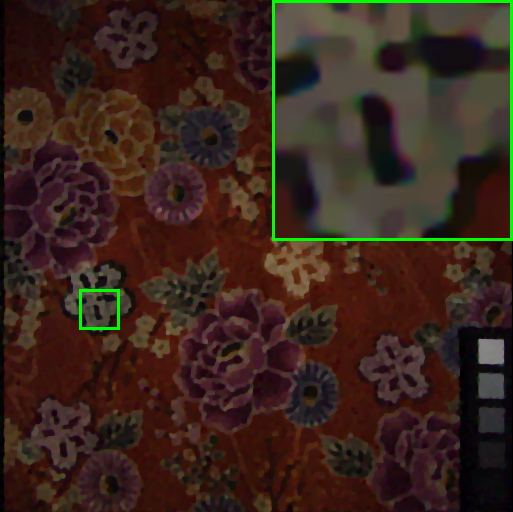}
	\end{minipage}}
  \subfigure[VTV]{
	\begin{minipage}{0.2\linewidth}
		\centering
		\includegraphics[width=\linewidth]{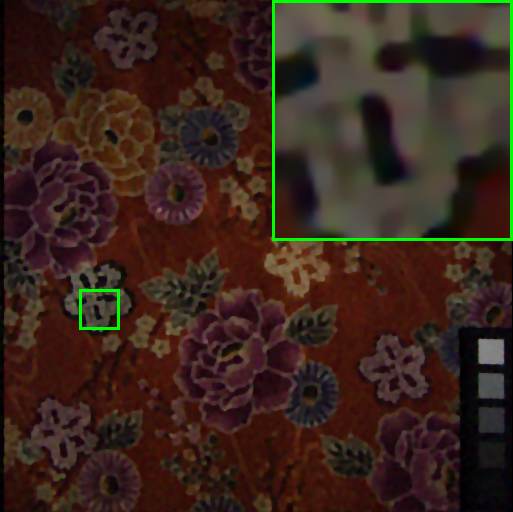}
	\end{minipage}}
   \subfigure[ASSTV]{
	\begin{minipage}{0.2\linewidth}
		\centering
		\includegraphics[width=\linewidth]{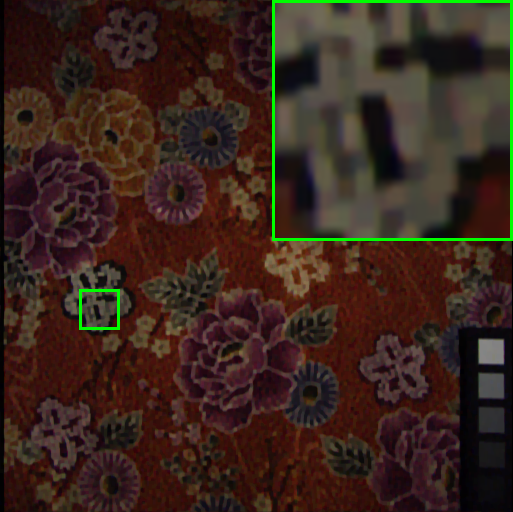}
	\end{minipage}}
    \subfigure[SSAHTV]{
	\begin{minipage}{0.2\linewidth}
		\centering
		\includegraphics[width=\linewidth]{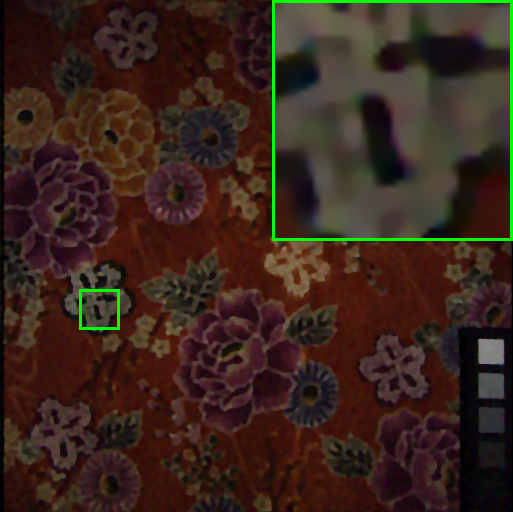}
	\end{minipage}}
    \subfigure[GOTTV]{
	\begin{minipage}{0.2\linewidth}
		\centering
		\includegraphics[width=\linewidth]{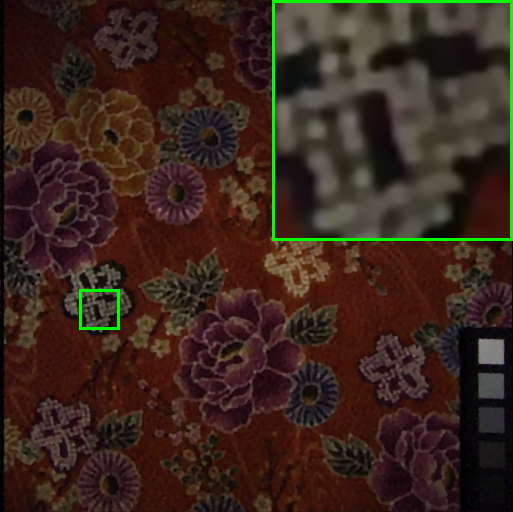}
	\end{minipage}}
 \caption{Comparison deblurring results of various methods of the cloth image.}
  \label{Second Deblur of cloth}
\end{figure}
}

{
\begin{figure}[H]
		\centering
 \subfigure[Ground-truth]{
	\begin{minipage}{0.2\linewidth}
		\centering
		\includegraphics[width=\linewidth]{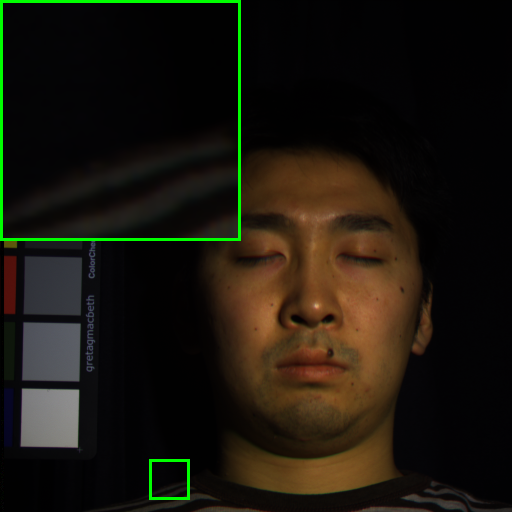}
	\end{minipage}}
 \subfigure[Degraded]{
	\begin{minipage}{0.2\linewidth}
		\centering
		\includegraphics[width=\linewidth]{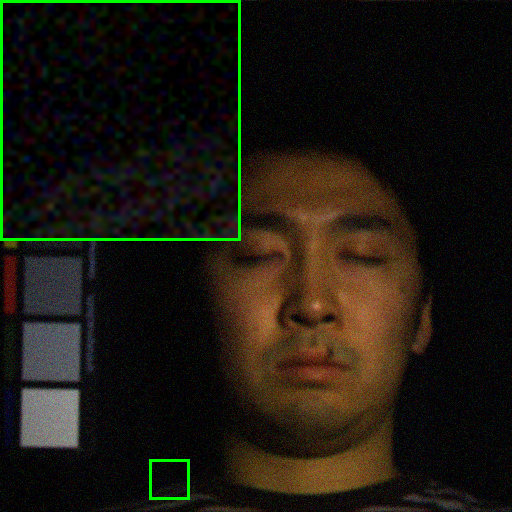}
	\end{minipage}}
 \subfigure[TV]{
	\begin{minipage}{0.2\linewidth}
		\centering
		\includegraphics[width=\linewidth]{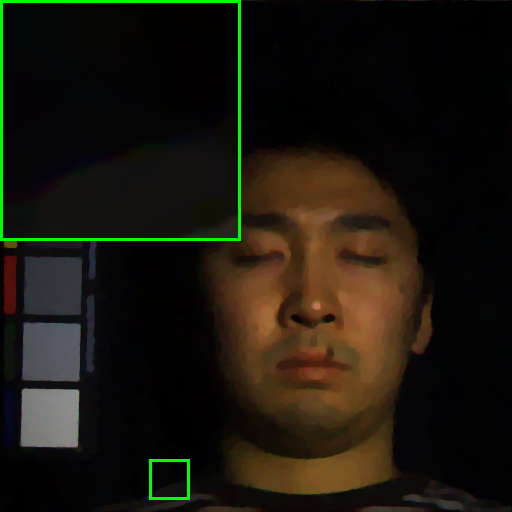}
	\end{minipage}}
  \subfigure[VTV]{
	\begin{minipage}{0.2\linewidth}
		\centering
		\includegraphics[width=\linewidth]{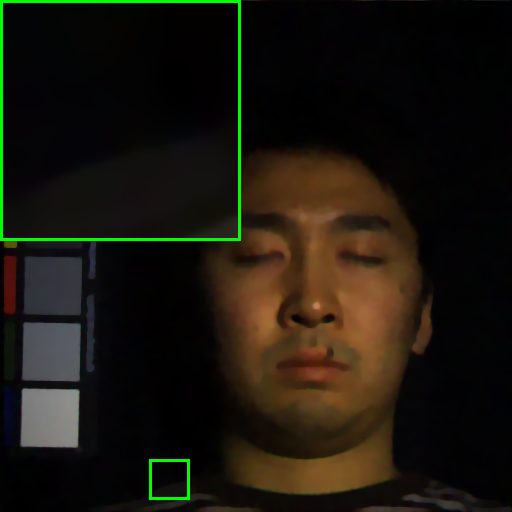}
	\end{minipage}}
   \subfigure[ASSTV]{
	\begin{minipage}{0.2\linewidth}
		\centering
		\includegraphics[width=\linewidth]{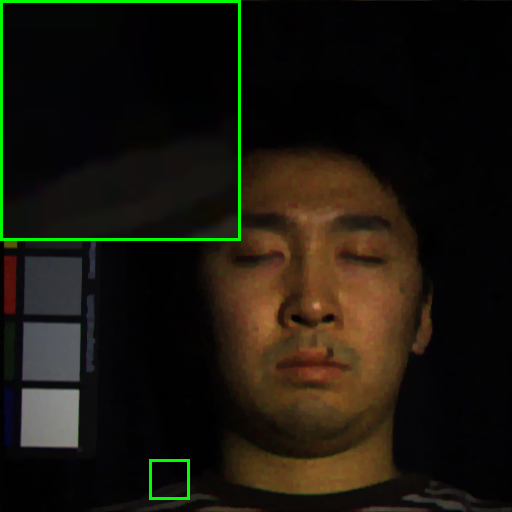}
	\end{minipage}}
    \subfigure[SSAHTV]{
	\begin{minipage}{0.2\linewidth}
		\centering
		\includegraphics[width=\linewidth]{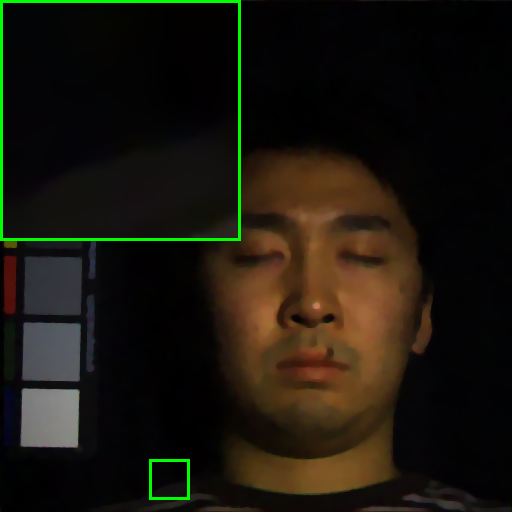}
	\end{minipage}}
    \subfigure[GOTTV]{
	\begin{minipage}{0.2\linewidth}
		\centering
		\includegraphics[width=\linewidth]{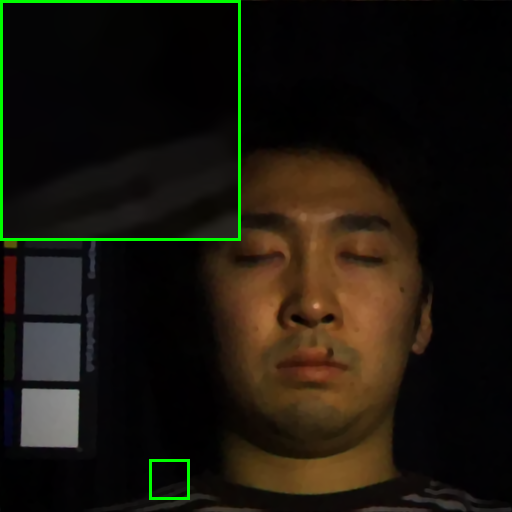}
	\end{minipage}}
 \caption{Comparison deblurring results of various methods of the face image.}
  \label{First Deblur of face}
\end{figure}
}

{
\begin{figure}[H]
		\centering
 \subfigure[Ground-truth]{
	\begin{minipage}{0.2\linewidth}
		\centering
		\includegraphics[width=\linewidth]{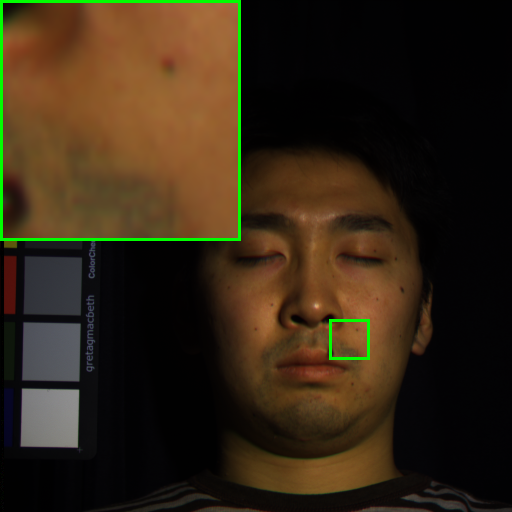}
	\end{minipage}}
 \subfigure[Degraded]{
	\begin{minipage}{0.2\linewidth}
		\centering
		\includegraphics[width=\linewidth]{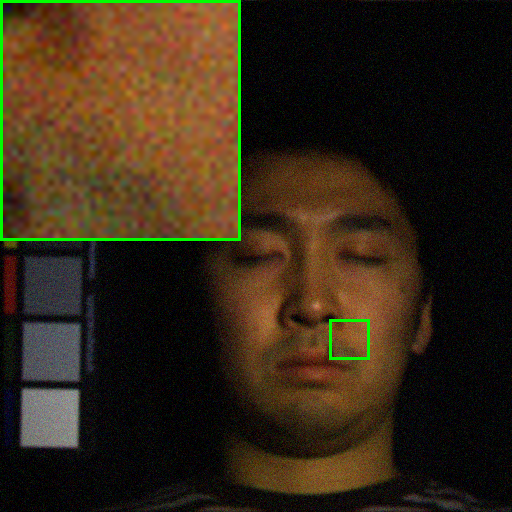}
	\end{minipage}}
 \subfigure[TV]{
	\begin{minipage}{0.2\linewidth}
		\centering
		\includegraphics[width=\linewidth]{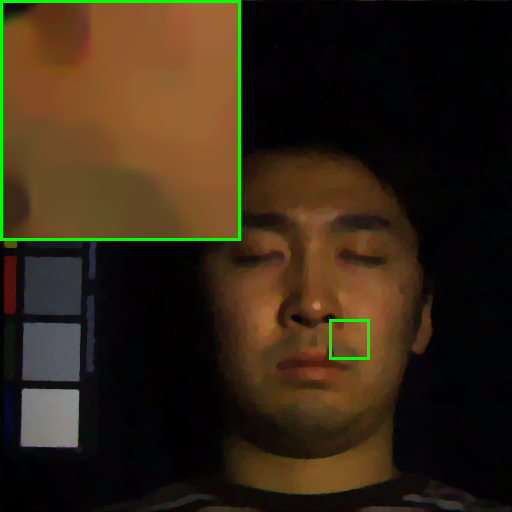}
	\end{minipage}}
  \subfigure[VTV]{
	\begin{minipage}{0.2\linewidth}
		\centering
		\includegraphics[width=\linewidth]{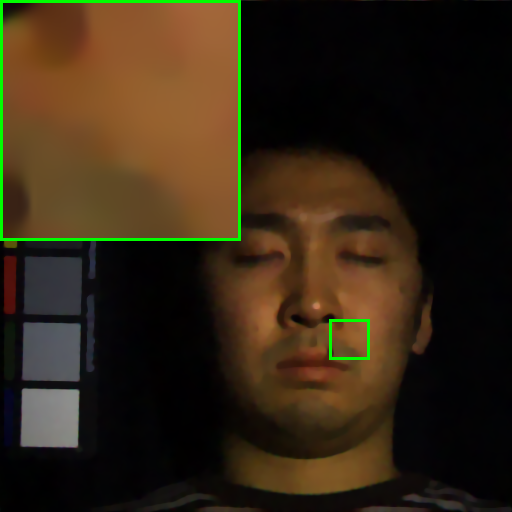}
	\end{minipage}}
   \subfigure[ASSTV]{
	\begin{minipage}{0.2\linewidth}
		\centering
		\includegraphics[width=\linewidth]{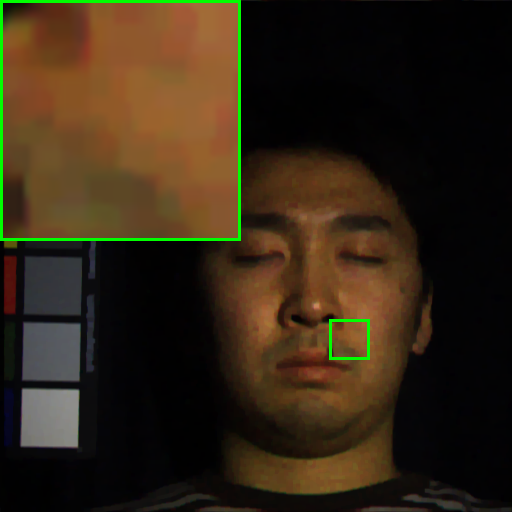}
	\end{minipage}}
    \subfigure[SSAHTV]{
	\begin{minipage}{0.2\linewidth}
		\centering
		\includegraphics[width=\linewidth]{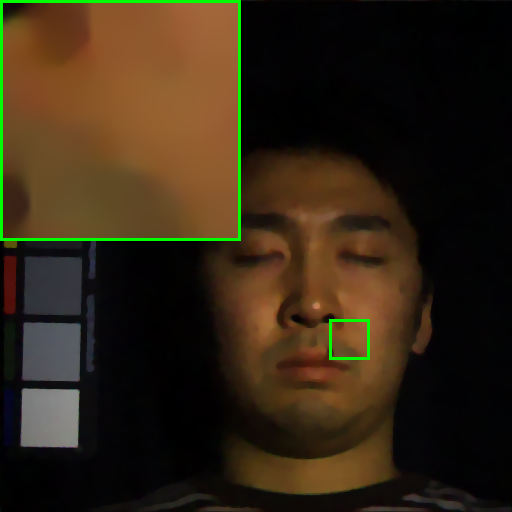}
	\end{minipage}}
    \subfigure[GOTTV]{
	\begin{minipage}{0.2\linewidth}
		\centering
		\includegraphics[width=\linewidth]{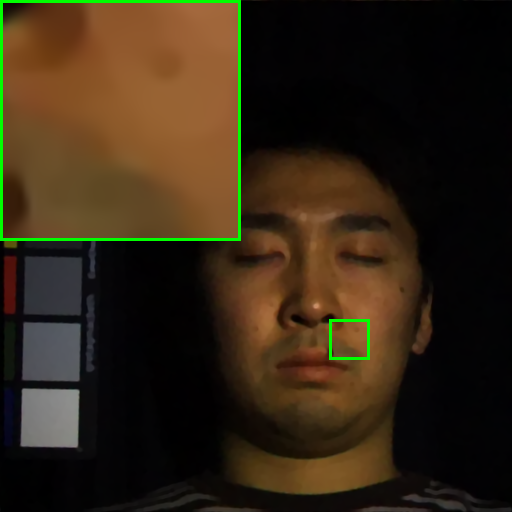}
	\end{minipage}}
 \caption{Comparison deblurring results of various methods of the face image.}
  \label{Second Deblur of face}
\end{figure}
}
{
\begin{figure}[H]
			\centering
 \subfigure[15th channel of blurring cloth]{
	\begin{minipage}{0.22\linewidth}
		\centering
		\includegraphics[width=\linewidth]{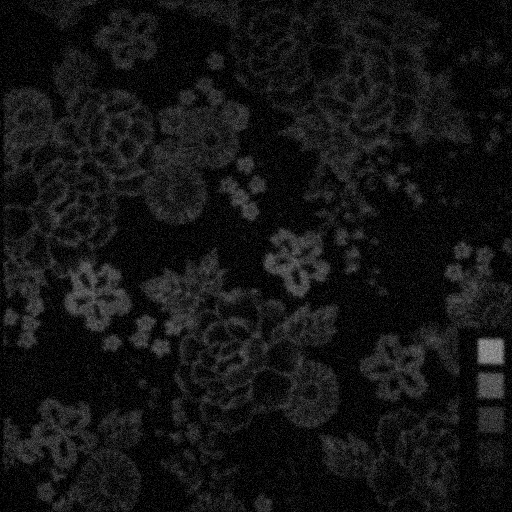}
	\end{minipage}}
 \subfigure[Fused cloth]{
	\begin{minipage}{0.22\linewidth}
		\centering
		\includegraphics[width=\linewidth]{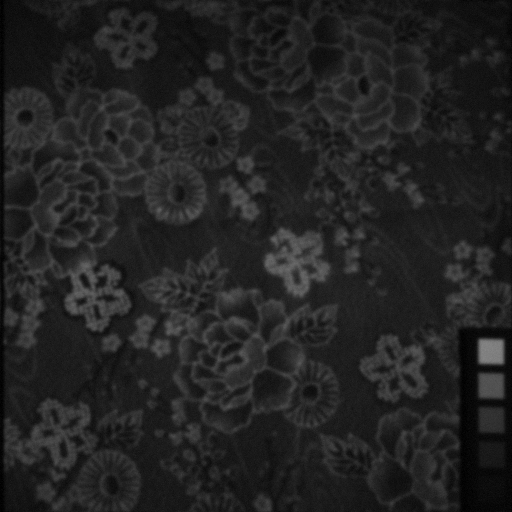}
	\end{minipage}}\\
 \subfigure[15th channel of blurring face]{
	\begin{minipage}{0.22\linewidth}
		\centering
		\includegraphics[width=\linewidth]{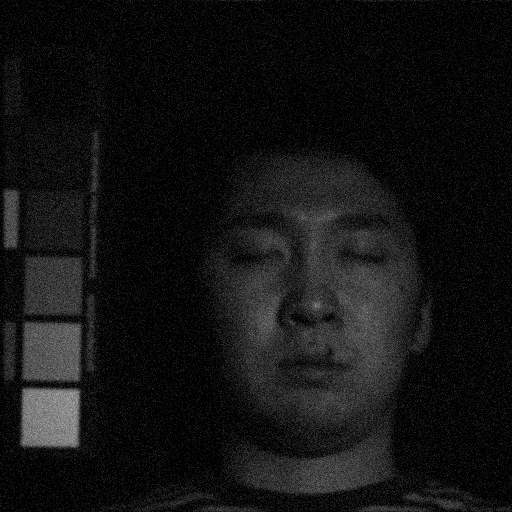}
	\end{minipage}}
  \subfigure[Fused face]{
	\begin{minipage}{0.22\linewidth}
		\centering
		\includegraphics[width=\linewidth]{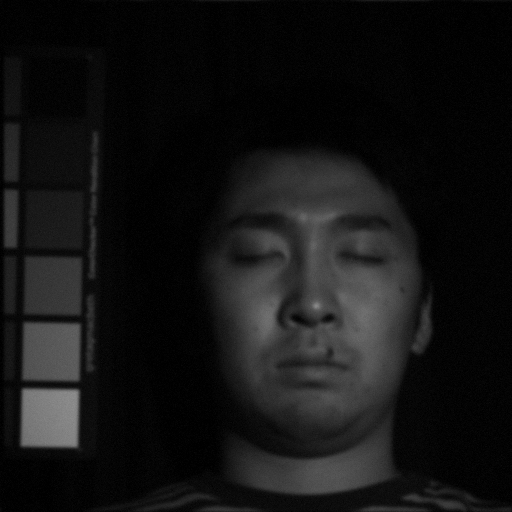}
	\end{minipage}}
 \caption{Comparison between the 15th channel of the contaminated image and the fused result.}
  \label{15thAFuse}
\end{figure}
}
\begin{table}[H]
\caption{Blind Deblurring comparison results.}
\label{Blind Deblurring comparison results}
\centering
\scalebox{0.9}{
\begin{tabular}{|l|l|l|l|l|l|l|l|l|}
\hline
Figure                   & \multicolumn{1}{c|}{Std} & Measure & Degraded image & TV      & VTV     & ASSTV           & SSAHTV  & GOTTV            \\ \hline
\multirow{4}{*}{Cloth}   & \multirow{2}{*}{1}       & MPSNR   & 25.2061        & 30.133  & 30.8451 & {\underline{31.5103}}   & 30.9057 & \textbf{32.5559} \\ \cline{3-9} 
                         &                          & MSSIM   & 0.4412         & 0.7694  & 0.7938  & 0.828           & 0.7973  & \textbf{0.8579}  \\ \cline{2-9} 
                         & \multirow{2}{*}{1.5}     & MPSNR   & 24.5608        & 28.9521 & 29.4969 & 29.9878         & 29.5098 & \textbf{30.8387} \\ \cline{3-9} 
                         &                          & MSSIM   & 0.3686         & 0.7082  & 0.7323  & 0.7658          & 0.7337  & \textbf{0.8011}  \\ \hline
\multirow{4}{*}{Face}    & \multirow{2}{*}{1}       & MPSNR   & 25.9121        & 39.03   & 40.293  & 40.1296         & 39.9156 & \textbf{40.707}  \\ \cline{3-9} 
                         &                          & MSSIM   & 0.2871         & 0.9524  & 0.9641  & 0.9642          & 0.9643  & \textbf{0.9715}  \\ \cline{2-9} 
                         & \multirow{2}{*}{1.5}     & MPSNR   & 25.8033        & 38.0462 & 39.2893 & 39.2794         & 39.0602 & \textbf{39.6478} \\ \cline{3-9} 
                         &                          & MSSIM   & 0.2802         & 0.9459  & 0.957   & 0.9606          & 0.9584  & \textbf{0.9663}  \\ \hline
\multirow{4}{*}{Jelly}   & \multirow{2}{*}{1}       & MPSNR   & 25.2537        & 31.1635 & 32.4262 & 32.8067         & 32.5375 & \textbf{33.8965} \\ \cline{3-9} 
                         &                          & MSSIM   & 0.4539         & 0.9009  & 0.9125  & 0.9231          & 0.9226  & \textbf{0.9293}  \\ \cline{2-9} 
                         & \multirow{2}{*}{1.5}     & MPSNR   & 24.5277        & 29.4468 & 30.6461 & 31.1598         & 30.6259 & \textbf{32.0397} \\ \cline{3-9} 
                         &                          & MSSIM   & 0.4272         & 0.8743  & 0.8865  & 0.8987          & 0.8926  & \textbf{0.9035}  \\ \hline
\multirow{4}{*}{Picture} & \multirow{2}{*}{1}       & MPSNR   & 25.89          & 37.8357 & 39.0866 & 39.5056         & 38.8731 & \textbf{39.939}  \\ \cline{3-9} 
                         &                          & MSSIM   & 0.2986         & 0.9338  & 0.9451  & \textbf{0.9622} & 0.9466  & {\underline{0.958}}      \\ \cline{2-9} 
                         & \multirow{2}{*}{1.5}     & MPSNR   & 25.7343        & 36.4509 & 37.487  & 37.932          & 37.3433 & \textbf{38.2969} \\ \cline{3-9} 
                         &                          & MSSIM   & 0.2877         & 0.9198  & 0.928   & \textbf{0.95}   & 0.9295  & {\underline{0.9454}}     \\ \hline
\multirow{4}{*}{Thread}  & \multirow{2}{*}{1}       & MPSNR   & 25.7171        & 34.8465 & 35.9132 & {\underline{36.7808}}   & 36.025  & \textbf{37.5774} \\ \cline{3-9} 
                         &                          & MSSIM   & 0.3188         & 0.8973  & 0.9103  & {\underline{0.9321}}    & 0.917   & \textbf{0.9324}  \\ \cline{2-9} 
                         & \multirow{2}{*}{1.5}     & MPSNR   & 25.3961        & 33.0605 & 33.9458 & {\underline{34.8356}}   & 34.0577 & \textbf{35.4359} \\ \cline{3-9} 
                         &                          & MSSIM   & 0.3014         & 0.8677  & 0.8799  & \textbf{0.914}  & 0.8876  & {\underline{0.9102}}     \\ \hline
\end{tabular}}
\end{table}

{
\begin{figure}[H]
		\centering
 \subfigure[Ground-truth]{
	\begin{minipage}{0.2\linewidth}
		\centering
		\includegraphics[width=\linewidth]{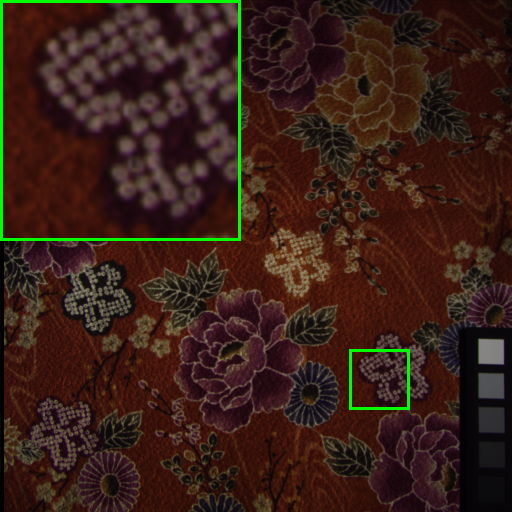}
	\end{minipage}}
 \subfigure[Degraded]{
	\begin{minipage}{0.2\linewidth}
		\centering
		\includegraphics[width=\linewidth]{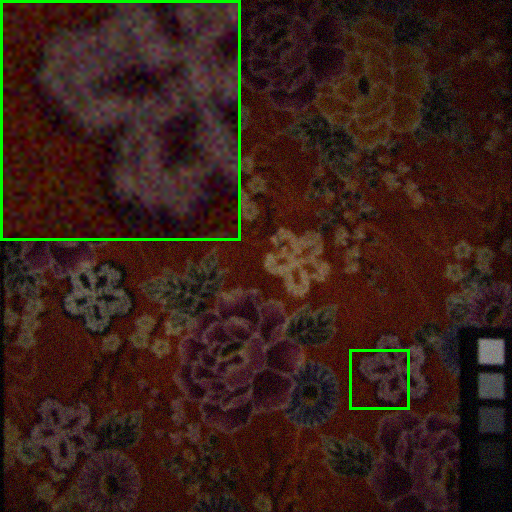}
	\end{minipage}}
 \subfigure[TV]{
	\begin{minipage}{0.2\linewidth}
		\centering
		\includegraphics[width=\linewidth]{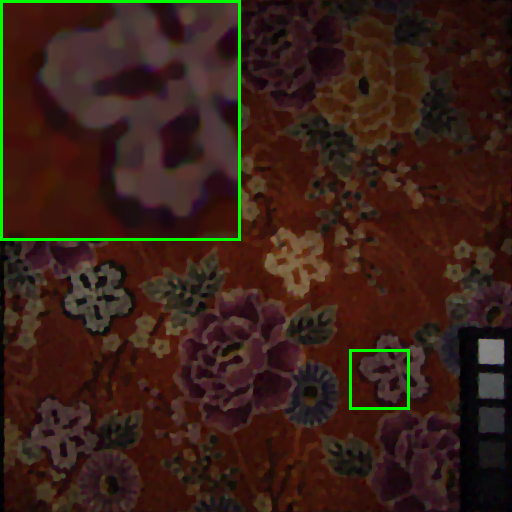}
	\end{minipage}}
  \subfigure[VTV]{
	\begin{minipage}{0.2\linewidth}
		\centering
		\includegraphics[width=\linewidth]{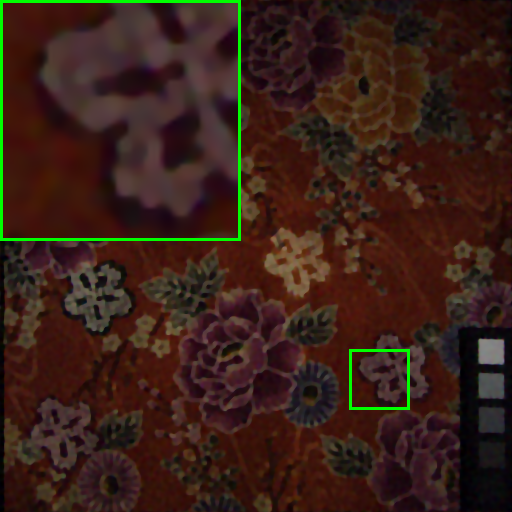}
	\end{minipage}}
   \subfigure[ASSTV]{
	\begin{minipage}{0.2\linewidth}
		\centering
		\includegraphics[width=\linewidth]{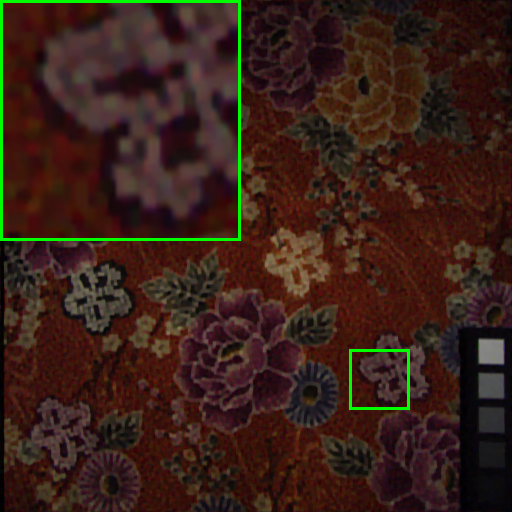}
	\end{minipage}}
    \subfigure[SSAHTV]{
	\begin{minipage}{0.2\linewidth}
		\centering
		\includegraphics[width=\linewidth]{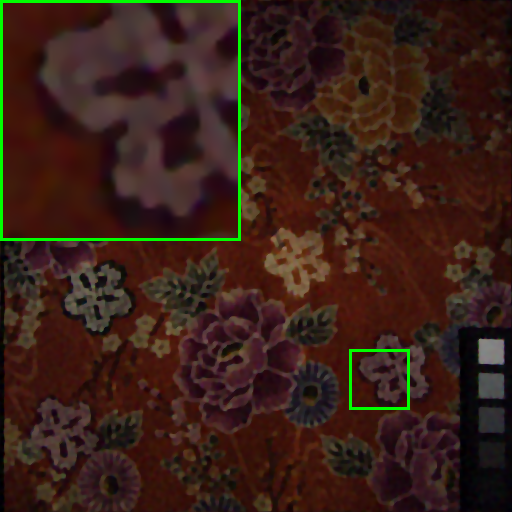}
	\end{minipage}}
    \subfigure[GOTTV]{
	\begin{minipage}{0.2\linewidth}
		\centering
		\includegraphics[width=\linewidth]{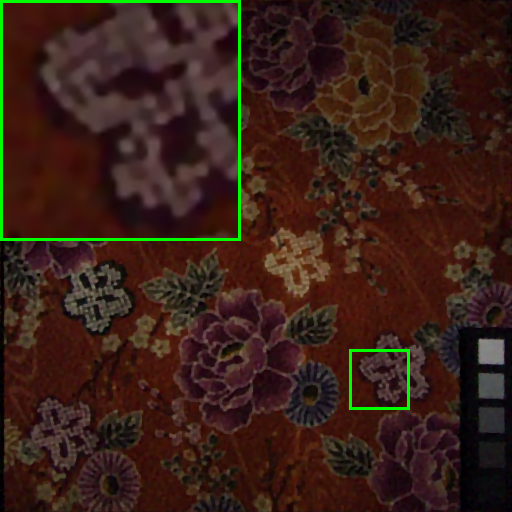}
	\end{minipage}}
 \caption{Blind deblurring comparison results of the cloth image.}
  \label{Blind-deblurring comparison results of the cloth image}
\end{figure}
}

{
\begin{figure}[H]
		\centering
 \subfigure[Ground-truth]{
	\begin{minipage}{0.2\linewidth}
		\centering
		\includegraphics[width=\linewidth]{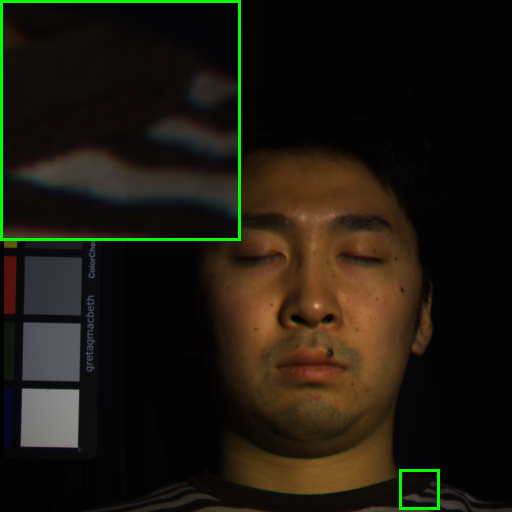}
	\end{minipage}}
 \subfigure[Degraded]{
	\begin{minipage}{0.2\linewidth}
		\centering
		\includegraphics[width=\linewidth]{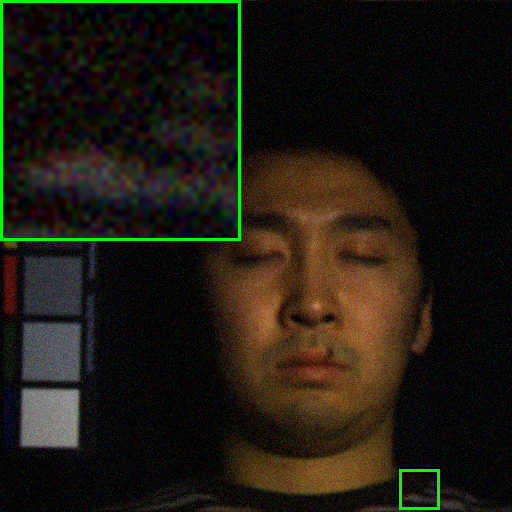}
	\end{minipage}}
 \subfigure[TV]{
	\begin{minipage}{0.2\linewidth}
		\centering
		\includegraphics[width=\linewidth]{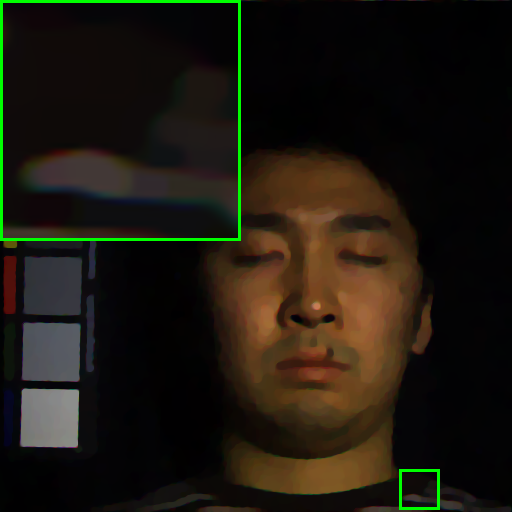}
	\end{minipage}}
  \subfigure[VTV]{
	\begin{minipage}{0.2\linewidth}
		\centering
		\includegraphics[width=\linewidth]{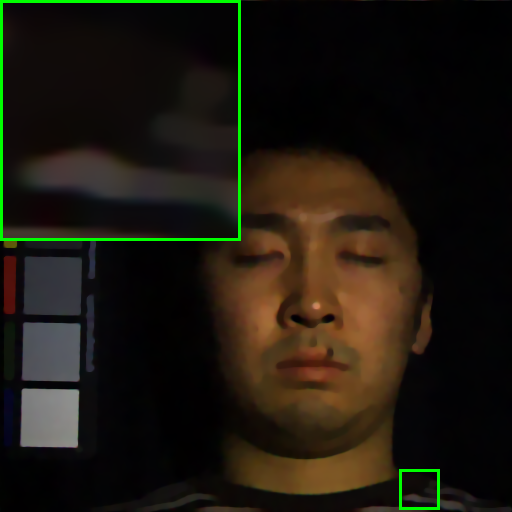}
	\end{minipage}}
   \subfigure[ASSTV]{
	\begin{minipage}{0.2\linewidth}
		\centering
		\includegraphics[width=\linewidth]{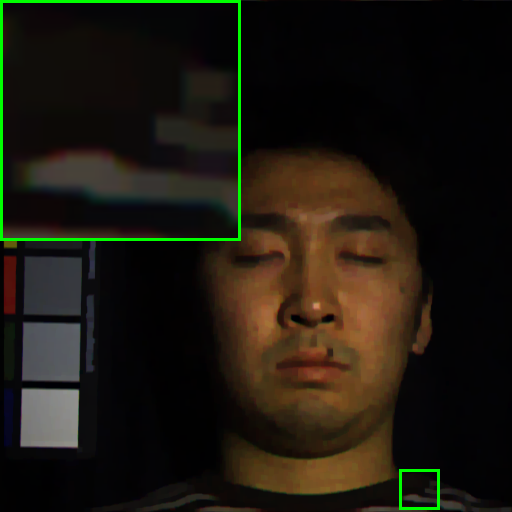}
	\end{minipage}}
    \subfigure[SSAHTV]{
	\begin{minipage}{0.2\linewidth}
		\centering
		\includegraphics[width=\linewidth]{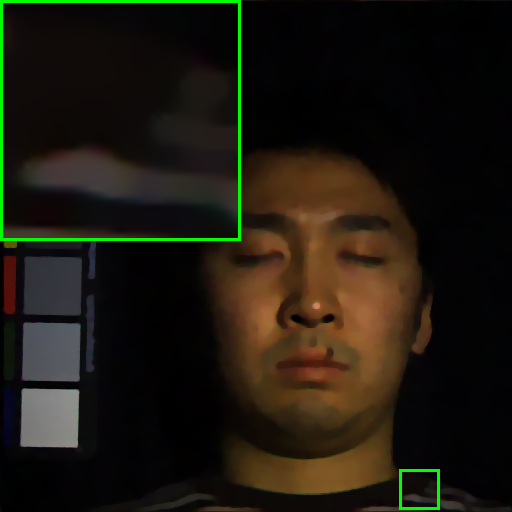}
	\end{minipage}}
    \subfigure[GOTTV]{
	\begin{minipage}{0.2\linewidth}
		\centering
		\includegraphics[width=\linewidth]{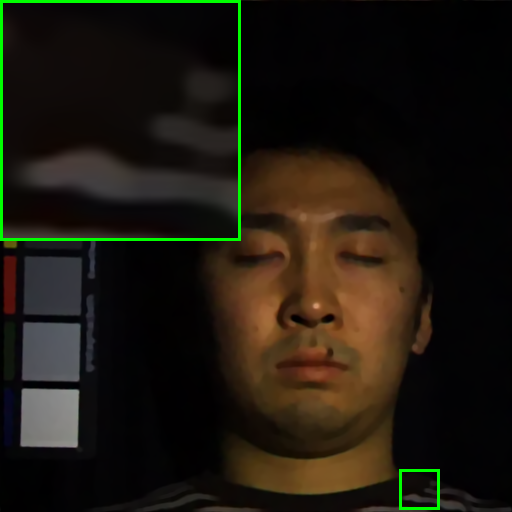}
	\end{minipage}}
 \caption{Blind deblurring comparison results of the face image.}
  \label{Blind-deblurring comparison results of the face image}
\end{figure}
}

\subsection{Robustness Testing}
We demonstrate the robustness of our model from two perspectives. The first involves validating the robustness of denoising outcomes under different opponent transformations. The second focuses on the robustness of denoising results across multiple experiments with random noise.

The preceding discussion has already established theoretically, through equation (\ref{opponent trans eq1}), that employing different opponent transformations does not alter the model. In this subsection, we empirically verify this conclusion through numerical experiments. The images tested are formed by extracting the 1st, 11th, 21st, and 31st channels from multispectral data, resulting in an image with four channels. Our testing approach involves comparing the denoising results of applying all different matrices in $\mathcal{Q}_{d=4}$, totaling 12 variations. Given that all elements in $\mathcal{Q}_{d=4}$ take on a form of $BP$, the below \cref{Different Permutation Comparison results} illustrates the MPSNR and MSSIM corresponding to denoising outcomes for different permutation matrices $P$. A remark that in the table below, $P$ represents the permutation matrix. For example, $P=(4\ 3\ 2\ 1)$ represents the permutation matrix:$ \begin{bmatrix}
    0 &0 &0 &1\\
    0 &0 &1 &0\\
    0 &1 &0 &0\\
    1 &0 &0 &0
\end{bmatrix}$. The experimental results validate our theory that different opponent transformations do not alter the denoising model. Therefore, our method exhibits robustness to various opponent transformations. This means that we can confidently apply our approach in various scenarios without being overly concerned about choosing $P$. This broad applicability enhances the practicality and versatility of our method in real-world denoising problems. It also eases implementation, making our approach more accessible and user-friendly for researchers.

\begin{table}[H]
\centering
\label{Different Permutation Comparison results}
\caption{Different Permutation Comparison results.}
\scalebox{0.75}{
\begin{tabular}{|l|l|l|l|l|l|l|l|l|l|}
\hline
Figure                    & Measure & $P$                          & Result  & $P$                          & Result  & $P$                          & Result  & $P$                          & Result  \\ \hline
\multirow{6}{*}{Cloth}   & MPSNR   & \multirow{2}{*}{(1 2 3 4)} & 31.1327 & \multirow{2}{*}{(3 4 2 1)} & 31.1327 & \multirow{2}{*}{(3 4 1 2)} & 31.1327 & \multirow{2}{*}{(2 4 3 1)} & 31.1327 \\ \cline{2-2} \cline{4-4} \cline{6-6} \cline{8-8} \cline{10-10} 
                         & MSSIM   &                            & 0.8015  &                            & 0.8015  &                            & 0.8015  &                            & 0.8015  \\ \cline{2-10} 
                         & MPSNR   & \multirow{2}{*}{(2 4 1 3)} & 31.1327 & \multirow{2}{*}{(2 3 4 1)} & 31.1327 & \multirow{2}{*}{(2 3 1 4)} & 31.1327 & \multirow{2}{*}{(1 4 3 2)} & 31.1327 \\ \cline{2-2} \cline{4-4} \cline{6-6} \cline{8-8} \cline{10-10} 
                         & MSSIM   &                            & 0.8015  &                            & 0.8015  &                            & 0.8015  &                            & 0.8015  \\ \cline{2-10} 
                         & MPSNR   & \multirow{2}{*}{(1 4 2 3)} & 31.1327 & \multirow{2}{*}{(1 3 4 2)} & 31.1327 & \multirow{2}{*}{(1 3 2 4)} & 31.1327 & \multirow{2}{*}{(1 2 4 3)} & 31.1327 \\ \cline{2-2} \cline{4-4} \cline{6-6} \cline{8-8} \cline{10-10} 
                         & MSSIM   &                            & 0.8015  &                            & 0.8015  &                            & 0.8015  &                            & 0.8015  \\ \hline
\multirow{6}{*}{Face}    & MPSNR   & \multirow{2}{*}{(1 2 3 4)} & 38.7133 & \multirow{2}{*}{(3 4 2 1)} & 38.7133 & \multirow{2}{*}{(3 4 1 2)} & 38.7133 & \multirow{2}{*}{(2 4 3 1)} & 38.7133 \\ \cline{2-2} \cline{4-4} \cline{6-6} \cline{8-8} \cline{10-10} 
                         & MSSIM   &                            & 0.9583  &                            & 0.9583  &                            & 0.9583  &                            & 0.9583  \\ \cline{2-10} 
                         & MPSNR   & \multirow{2}{*}{(2 4 1 3)} & 38.7133 & \multirow{2}{*}{(2 3 4 1)} & 38.7133 & \multirow{2}{*}{(2 3 1 4)} & 38.7133 & \multirow{2}{*}{(1 4 3 2)} & 38.7133 \\ \cline{2-2} \cline{4-4} \cline{6-6} \cline{8-8} \cline{10-10} 
                         & MSSIM   &                            & 0.9583  &                            & 0.9583  &                            & 0.9583  &                            & 0.9583  \\ \cline{2-10} 
                         & MPSNR   & \multirow{2}{*}{(1 4 2 3)} & 38.7133 & \multirow{2}{*}{(1 3 4 2)} & 38.7133 & \multirow{2}{*}{(1 3 2 4)} & 38.7133 & \multirow{2}{*}{(1 2 4 3)} & 38.7133 \\ \cline{2-2} \cline{4-4} \cline{6-6} \cline{8-8} \cline{10-10} 
                         & MSSIM   &                            & 0.9583  &                            & 0.9583  &                            & 0.9583  &                            & 0.9583  \\ \hline
\multirow{6}{*}{Jelly}   & MPSNR   & \multirow{2}{*}{(1 2 3 4)} & 32.3175 & \multirow{2}{*}{(3 4 2 1)} & 32.3175 & \multirow{2}{*}{(3 4 1 2)} & 32.3175 & \multirow{2}{*}{(2 4 3 1)} & 32.3175 \\ \cline{2-2} \cline{4-4} \cline{6-6} \cline{8-8} \cline{10-10} 
                         & MSSIM   &                            & 0.8975  &                            & 0.8975  &                            & 0.8975  &                            & 0.8975  \\ \cline{2-10} 
                         & MPSNR   & \multirow{2}{*}{(2 4 1 3)} & 32.3175 & \multirow{2}{*}{(2 3 4 1)} & 32.3175 & \multirow{2}{*}{(2 3 1 4)} & 32.3175 & \multirow{2}{*}{(1 4 3 2)} & 32.3175 \\ \cline{2-2} \cline{4-4} \cline{6-6} \cline{8-8} \cline{10-10} 
                         & MSSIM   &                            & 0.8975  &                            & 0.8975  &                            & 0.8975  &                            & 0.8975  \\ \cline{2-10} 
                         & MPSNR   & \multirow{2}{*}{(1 4 2 3)} & 32.3175 & \multirow{2}{*}{(1 3 4 2)} & 32.3175 & \multirow{2}{*}{(1 3 2 4)} & 32.3175 & \multirow{2}{*}{(1 2 4 3)} & 32.3175 \\ \cline{2-2} \cline{4-4} \cline{6-6} \cline{8-8} \cline{10-10} 
                         & MSSIM   &                            & 0.8975  &                            & 0.8975  &                            & 0.8975  &                            & 0.8975  \\ \hline
\multirow{6}{*}{Picture} & MPSNR   & \multirow{2}{*}{(1 2 3 4)} & 37.4538 & \multirow{2}{*}{(3 4 2 1)} & 37.4538 & \multirow{2}{*}{(3 4 1 2)} & 37.4538 & \multirow{2}{*}{(2 4 3 1)} & 37.4538 \\ \cline{2-2} \cline{4-4} \cline{6-6} \cline{8-8} \cline{10-10} 
                         & MSSIM   &                            & 0.9404  &                            & 0.9404  &                            & 0.9404  &                            & 0.9404  \\ \cline{2-10} 
                         & MPSNR   & \multirow{2}{*}{(2 4 1 3)} & 37.4538 & \multirow{2}{*}{(2 3 4 1)} & 37.4538 & \multirow{2}{*}{(2 3 1 4)} & 37.4538 & \multirow{2}{*}{(1 4 3 2)} & 37.4538 \\ \cline{2-2} \cline{4-4} \cline{6-6} \cline{8-8} \cline{10-10} 
                         & MSSIM   &                            & 0.9404  &                            & 0.9404  &                            & 0.9404  &                            & 0.9404  \\ \cline{2-10} 
                         & MPSNR   & \multirow{2}{*}{(1 4 2 3)} & 37.4538 & \multirow{2}{*}{(1 3 4 2)} & 37.4538 & \multirow{2}{*}{(1 3 2 4)} & 37.4538 & \multirow{2}{*}{(1 2 4 3)} & 37.4538 \\ \cline{2-2} \cline{4-4} \cline{6-6} \cline{8-8} \cline{10-10} 
                         & MSSIM   &                            & 0.9404  &                            & 0.9404  &                            & 0.9404  &                            & 0.9404  \\ \hline
\multirow{6}{*}{Thread}  & MPSNR   & \multirow{2}{*}{(1 2 3 4)} & 35.4816 & \multirow{2}{*}{(3 4 2 1)} & 35.4816 & \multirow{2}{*}{(3 4 1 2)} & 35.4816 & \multirow{2}{*}{(2 4 3 1)} & 35.4816 \\ \cline{2-2} \cline{4-4} \cline{6-6} \cline{8-8} \cline{10-10} 
                         & MSSIM   &                            & 0.9061  &                            & 0.9061  &                            & 0.9061  &                            & 0.9061  \\ \cline{2-10} 
                         & MPSNR   & \multirow{2}{*}{(2 4 1 3)} & 35.4816 & \multirow{2}{*}{(2 3 4 1)} & 35.4816 & \multirow{2}{*}{(2 3 1 4)} & 35.4816 & \multirow{2}{*}{(1 4 3 2)} & 35.4816 \\ \cline{2-2} \cline{4-4} \cline{6-6} \cline{8-8} \cline{10-10} 
                         & MSSIM   &                            & 0.9061  &                            & 0.9061  &                            & 0.9061  &                            & 0.9061  \\ \cline{2-10} 
                         & MPSNR   & \multirow{2}{*}{(1 4 2 3)} & 35.4816 & \multirow{2}{*}{(1 3 4 2)} & 35.4816 & \multirow{2}{*}{(1 3 2 4)} & 35.4816 & \multirow{2}{*}{(1 2 4 3)} & 35.4816 \\ \cline{2-2} \cline{4-4} \cline{6-6} \cline{8-8} \cline{10-10} 
                         & MSSIM   &                            & 0.9061  &                            & 0.9061  &                            & 0.9061  &                            & 0.9061  \\ \hline
\end{tabular}
}
\end{table}
Also, we conducted multiple tests on different denoising methods under Gaussian noise with a standard deviation of 0.1. The results of these tests are summarized in the following \cref{Different Permutation Comparison results2}, which provides the average, minimum, maximum MPSNR, and MSSIM values for the multiple denoising outcomes. The results of the multiple experiments demonstrate that our method exhibits robustness to noise, maintaining superior performance across various metrics throughout multiple trials. This underscores the advantageous stability of our approach, ensuring consistently optimal results.

\begin{table}[H]
\centering
\label{Different Permutation Comparison results2}
\caption{Multiple tests results.}
\scalebox{1}{
\begin{tabular}{|l|ll|l|l|l|l|l|}
\hline
Image                    & \multicolumn{2}{c|}{Measure}                       & CBCTV   & VTV     & ASSTV         & SSAHTV        & GOTTV            \\ \hline
\multirow{6}{*}{Cloth}   & \multicolumn{1}{l|}{\multirow{3}{*}{MPSNR}} & mean & 29.2046 & 30.0293 & 29.4281       & {\underline{30.1969}} & \textbf{33.2714} \\ \cline{3-8} 
                         & \multicolumn{1}{l|}{}                       & min  & 29.1983 & 30.0218 & 29.4188       & {\underline{30.1886}} & \textbf{33.2610} \\ \cline{3-8} 
                         & \multicolumn{1}{l|}{}                       & max  & 29.2122 & 30.0369 & 29.4339       & {\underline{30.2071}} & \textbf{33.2821} \\ \cline{2-8} 
                         & \multicolumn{1}{l|}{\multirow{3}{*}{MSSIM}} & mean & 0.7224  & 0.7620  & 0.7396        & {\underline{0.7712}}  & \textbf{0.8740}  \\ \cline{3-8} 
                         & \multicolumn{1}{l|}{}                       & min  & 0.7220  & 0.7617  & 0.7391        & {\underline{0.7708}}  & \textbf{0.8738}  \\ \cline{3-8} 
                         & \multicolumn{1}{l|}{}                       & max  & 0.7227  & 0.7622  & 0.7401        & {\underline{0.7714}}  & \textbf{0.8743}  \\ \hline
\multirow{6}{*}{Face}    & \multicolumn{1}{l|}{\multirow{3}{*}{MPSNR}} & mean & 36.9643 & 38.3135 & 37.9387       & {\underline{38.5726}} & \textbf{40.4344} \\ \cline{3-8} 
                         & \multicolumn{1}{l|}{}                       & min  & 36.9424 & 38.2884 & 37.9188       & {\underline{38.5489}} & \textbf{40.4046} \\ \cline{3-8} 
                         & \multicolumn{1}{l|}{}                       & max  & 36.9854 & 38.3346 & 37.9545       & {\underline{38.5904}} & \textbf{40.4516} \\ \cline{2-8} 
                         & \multicolumn{1}{l|}{\multirow{3}{*}{MSSIM}} & mean & 0.9211  & 0.9464  & 0.9389        & {\underline{0.9483}}  & \textbf{0.9635}  \\ \cline{3-8} 
                         & \multicolumn{1}{l|}{}                       & min  & 0.9202  & 0.9457  & 0.9379        & {\underline{0.9475}}  & \textbf{0.9629}  \\ \cline{3-8} 
                         & \multicolumn{1}{l|}{}                       & max  & 0.9219  & 0.9470  & 0.9393        & {\underline{0.9489}}  & \textbf{0.9640}  \\ \hline
\multirow{6}{*}{Jelly}   & \multicolumn{1}{l|}{\multirow{3}{*}{MPSNR}} & mean & 31.0280 & 32.3230 & 31.5866       & {\underline{32.6763}} & \textbf{34.5981} \\ \cline{3-8} 
                         & \multicolumn{1}{l|}{}                       & min  & 31.0173 & 32.3106 & 31.5751       & {\underline{32.6626}} & \textbf{34.5847} \\ \cline{3-8} 
                         & \multicolumn{1}{l|}{}                       & max  & 31.0372 & 32.3303 & 31.5957       & {\underline{32.6827}} & \textbf{34.6070} \\ \cline{2-8} 
                         & \multicolumn{1}{l|}{\multirow{3}{*}{MSSIM}} & mean & 0.8626  & 0.8911  & 0.8836        & {\underline{0.8985}}  & \textbf{0.9125}  \\ \cline{3-8} 
                         & \multicolumn{1}{l|}{}                       & min  & 0.8622  & 0.8906  & 0.8834        & {\underline{0.8981}}  & \textbf{0.9120}  \\ \cline{3-8} 
                         & \multicolumn{1}{l|}{}                       & max  & 0.8628  & 0.8913  & 0.8839        & {\underline{0.8987}}  & \textbf{0.9129}  \\ \hline
\multirow{6}{*}{Picture} & \multicolumn{1}{l|}{\multirow{3}{*}{MPSNR}} & mean & 35.4978 & 36.8850 & {\underline{37.1554}} & 37.0422       & \textbf{39.2225} \\ \cline{3-8} 
                         & \multicolumn{1}{l|}{}                       & min  & 35.4922 & 36.8778 & {\underline{37.1477}} & 37.0249       & \textbf{39.2127} \\ \cline{3-8} 
                         & \multicolumn{1}{l|}{}                       & max  & 35.5028 & 36.8929 & {\underline{37.1684}} & 37.0562       & \textbf{39.2300} \\ \cline{2-8} 
                         & \multicolumn{1}{l|}{\multirow{3}{*}{MSSIM}} & mean & 0.8796  & 0.9157  & {\underline{0.9302}}  & 0.9191        & \textbf{0.9406}  \\ \cline{3-8} 
                         & \multicolumn{1}{l|}{}                       & min  & 0.8792  & 0.9151  & {\underline{0.9296}}  & 0.9186        & \textbf{0.9401}  \\ \cline{3-8} 
                         & \multicolumn{1}{l|}{}                       & max  & 0.8803  & 0.9165  & {\underline{0.9307}}  & 0.9198        & \textbf{0.9413}  \\ \hline
\multirow{6}{*}{Thread}  & \multicolumn{1}{l|}{\multirow{3}{*}{MPSNR}} & mean & 33.7628 & 35.0226 & 34.1741       & {\underline{35.3243}} & \textbf{37.9447} \\ \cline{3-8} 
                         & \multicolumn{1}{l|}{}                       & min  & 33.7549 & 35.0141 & 34.1611       & {\underline{35.3132}} & \textbf{37.9206} \\ \cline{3-8} 
                         & \multicolumn{1}{l|}{}                       & max  & 33.7698 & 35.0304 & 34.1843       & {\underline{35.3312}} & \textbf{37.9623} \\ \cline{2-8} 
                         & \multicolumn{1}{l|}{\multirow{3}{*}{MSSIM}} & mean & 0.8689  & 0.9002  & 0.9055        & {\underline{0.9078}}  & \textbf{0.9342}  \\ \cline{3-8} 
                         & \multicolumn{1}{l|}{}                       & min  & 0.8679  & 0.8995  & 0.9052        & {\underline{0.9072}}  & \textbf{0.9336}  \\ \cline{3-8} 
                         & \multicolumn{1}{l|}{}                       & max  & 0.8696  & 0.9009  & 0.9059        & {\underline{0.9083}}  & \textbf{0.9346}  \\ \hline
\end{tabular}
}
\end{table}

\section{Conclusion}
\label{sec:Conclusion}

In this paper, we have made several contributions. First, we introduced the concept of the generalized opponent transformation, which inherits the key characteristic of 3-dimensional opponent transformations. This transformation allows us to decorrelate the opponent and average information between different spectral bands in multispectral images. Next, we investigated the eigendecomposition properties of the generalized opponent transformation, which provided valuable insights into its behavior and properties. Based on the generalized opponent transformation, we proposed the generalized opponent total variation multispectral image restoration model. This model incorporates both the opponent and average information between image spectra, taking advantage of the benefits of total variation regularization in preserving image boundaries and suppressing noise. We conducted numerical experiments to evaluate the performance of the GOTTV model. The results demonstrated that GOTTV outperforms other methods in terms of metrics such as MPSNR and MSSIM. Additionally, GOTTV showed superior performance in texture and detail recovery compared to the other methods considered in the comparison. Overall, our proposed approach utilizing the generalized opponent transformation and the GOTTV model provides an effective and promising method for multispectral image restoration, yielding improved results in terms of both quantitative metrics and visual quality.


\bibliographystyle{siamplain}
\bibliography{reference_new}

\end{document}